%% file: iclr2026_conference.tex
\documentclass{article} 
\usepackage{iclr2026_conference,times}

\input{math_commands.tex}

\usepackage{hyperref}
\usepackage{url}
\usepackage{array}
\usepackage{enumitem}
\usepackage{booktabs}
\usepackage{graphicx}
\usepackage{multirow}
\usepackage{threeparttable}
\usepackage{titletoc}
\usepackage{makecell}
\usepackage{wrapfig}     
\usepackage{siunitx}    
\titlecontents{section}
  [2em]                  
  {}                        
  {\contentslabel{2em}}     
  {}                        
  {\titlerule*[0.5pc]{.}\contentspage}

\usepackage{tabularx}
\usepackage{array}    
\newcolumntype{L}{>{\raggedright\arraybackslash}X} 

\usepackage{bm}
\usepackage{amsthm}
\usepackage{amssymb}
\theoremstyle{plain}
\newtheorem{theorem}{Theorem}[section]
\newtheorem{lemma}[theorem]{Lemma}
\newtheorem{proposition}[theorem]{Proposition}
\newtheorem{corollary}[theorem]{Corollary}
\theoremstyle{definition}
\newtheorem{definition}[theorem]{Definition}

\newtheorem*{theorem*}{Theorem}
\newtheorem*{corollary*}{Corollary}

\newenvironment{revised}{\color{black}}{}

\title{Free Energy Mixer}

\iclrfinalcopy
\author{Jiecheng Lu, \ Shihao Yang \\
Georgia Institute of Technology \\
\texttt{jlu414@gatech.edu, shihao.yang@isye.gatech.edu} \\
}

\begin{document}

\maketitle

\begin{abstract}
Standard attention stores keys/values losslessly but reads them via a per-head convex average, blocking channel-wise selection. We propose the Free Energy Mixer (FEM): a free-energy (log-sum-exp) read that applies a value-driven, per-channel log-linear tilt to a fast prior (e.g., from queries/keys in standard attention) over indices. Unlike methods that attempt to improve and enrich the $(q,k)$ scoring distribution, FEM treats it as a prior and yields a value-aware posterior read at unchanged complexity, smoothly moving from averaging to per-channel selection as the learnable inverse temperature increases, while still preserving parallelism and the original asymptotic complexity ($O(T^2)$ for softmax; $O(T)$ for linearizable variants). We instantiate a two-level gated FEM that is plug-and-play with standard and linear attention, linear RNNs and SSMs. It consistently outperforms strong baselines on NLP, vision, and time-series at matched parameter budgets. The code implementation is available at this \href{https://github.com/LJC-FVNR/SequenceLab}{\underline{link}}. 
\end{abstract}

\section{Introduction}

Transformers, powered by attention mechanisms, have become the default backbone for sequence modeling across language, vision, speech, and decision making \citep{vaswani2017attention,devlin2019bert,radford2018gpt,gpt3,dosovitskiy2020image,dong2018speech,chen2021decision,touvron2023llamaopenefficientfoundation}. Their success is often linked to selective access to an ever-growing key-value cache while retaining parallel training and inference. In large language models, this selective ability, composed across multiple attention layers and residual pathways, supports long‑range memory retrieval and the algorithmic behaviors associated with in‑context learning (for example induction heads and pattern completion), as shown by recent empirical and mechanistic studies \citep{min2022rethinking,wei2023larger,xie2022an,zhang2023trained,garg2022can,akyrek2023what,li2023transformers,dai2023why,bai2023transformers,olsson2022induction,elhage2021mathematical}.

Causal softmax attention combines strong selectivity with parallel efficiency: at each step it forms a distribution over past indices and mixes their values, while all steps can be computed in parallel. Given $(Q,K,V)\!\in\!\mathbb{R}^{T\times d}$ with rows $\bm q_t,\bm k_i,\bm v_i$, define masked scores $s_{t,i}=\bm q_t^{\!\top}\bm k_i/\sqrt d$ for $i\!\le\! t$ and $-\infty$ otherwise, and set $\alpha_{t,\cdot}=\mathrm{softmax}(s_{t,\cdot})\in\Delta^{t-1}$, \begin{revised} where $\Delta^{t-1}$ denotes the probability simplex over $\{1,\dots,t\}$ \end{revised}. The step-$t$ read is
\(
\textstyle
\bm o_t=\sum_{i\le t}\alpha_{t,i}\,\bm v_i,\,   
\bm o_t\in\mathrm{conv}\{\bm v_1,\dots,\bm v_t\},
\)
and stacking all $t$ yields $O=AV$ with $A_{t,i}=\alpha_{t,i}$, so a single matrix multiply produces all outputs.

The convex-mixture view explains efficiency: outputs are probability-weighted averages of the shared value bank, computed in one matrix multiply. Yet this also reveals a lossless-storage versus lossy-processing dilemma (Fig.~\ref{problem}a). The KV-cache stores full context, but the read is lossy: each head applies the same weights to all coordinates of $\bm v_i$, so $\bm o_t=\sum_{i\le t}\alpha_{t,i}\bm v_i$ lies in $\mathrm{conv}\{\bm v_1,\dots,\bm v_t\}$ and all channels are synchronized. As a result, even simple per-channel indexing, such as $s_t^\star=(v_{i_1,1},\dots,v_{i_D,D})$ (e.g., coordinate-wise argmax), cannot be represented unless all chosen indices coincide. Adding more heads only creates a few synchronized groups, and deeper stacks cannot recover per-channel index identity once the first convex mixing has occurred. This limitation hinders Transformers in long-range modeling with non-sequential or irregular timestep indexing, and in tasks where channel-wise structure is critical, such as multivariate time series modeling \citep{tay2020long,zeng2022transformers,nie2023time,liu2024itransformer,lucontext}.

Most recent advances in attention aim to improve expressivity and efficiency, typically by designing richer selective distributions but still reading values through a token-separable linear combination. These methods include sparsity \citep{beltagy2020longformer,child2019generating,zaheer2020big}, low-rank projections \citep{wang2020linformer,xiong2021nystromformer}, and kernelizable variants with normalization or gating \citep{katharopoulos2020transformers,choromanski2021rethinking,hua2022transformer,yang2024gated,qin2022devil,qin2022cosformer}. Efficiency-oriented work accelerates via factorized implementations \citep{dao2022flashattention,dao2023flashattention} or replaces the cache with streaming SSM and RNN models of fixed size \citep{gu2023mamba,sun2023retentive}. Across these lines, computation is faster or the distribution richer, but the read remains a linear mix, so channels share weights, and even simple per-channel indexing (e.g., argmax) cannot be realized in one step. Some recent works explore more complex combinations (e.g., nonlinear mixing such as log-sum-exp in LASER attention, or hard/top‑k selection \citep{gupta2021topkattn,duvvuri2025laser,hashemi2025tropical}, \S \ref{app:prior_pooling}), yet these mainly target training stability or accuracy in specific cases and do not address the lossy processing limitation.

Motivated by this gap, we propose the Free Energy Mixer (FEM), which regards lossless processing as the optimal interaction between a selection distribution and stored values: for each channel, choose an index distribution that maximizes utility under an information budget. FEM removes the linear read bottleneck and enables per-channel, context-dependent selection from the memory, while keeping causal parallelism and the time complexity of the underlying mechanism. When channel selection is not needed, FEM reduces to the standard expectation; when it is, different channels can focus on different past indices in the same step.
\begin{revised}FEM consists of four components: temperature gating (T), LSE mixing (L), outer gating (G), and low-rank convolution (C), described in \S\ref{sec:method:fem}.\end{revised}

\textbf{Contributions.}
(1) We identify a lossless-memory processing gap in attention: per-head convex mixing cannot realize channel-wise selection from the lossless KV-cache.
(2) We propose FEM, which closes this gap by casting the read as a variational free-energy optimization that, per channel, selects an index distribution under an information budget, enabling value-aware channel-wise selection.
(3) FEM is agnostic to how the selection distribution is formed (softmax, kernel/low-rank attention, linear RNNs, SSMs) and preserves the corresponding time complexity.
(4) On NLP, vision, and time-series tasks, FEM consistently improves strong baselines at matched parameter sizes.

\begin{figure}
  \centering
   \includegraphics[width=1\linewidth]
  {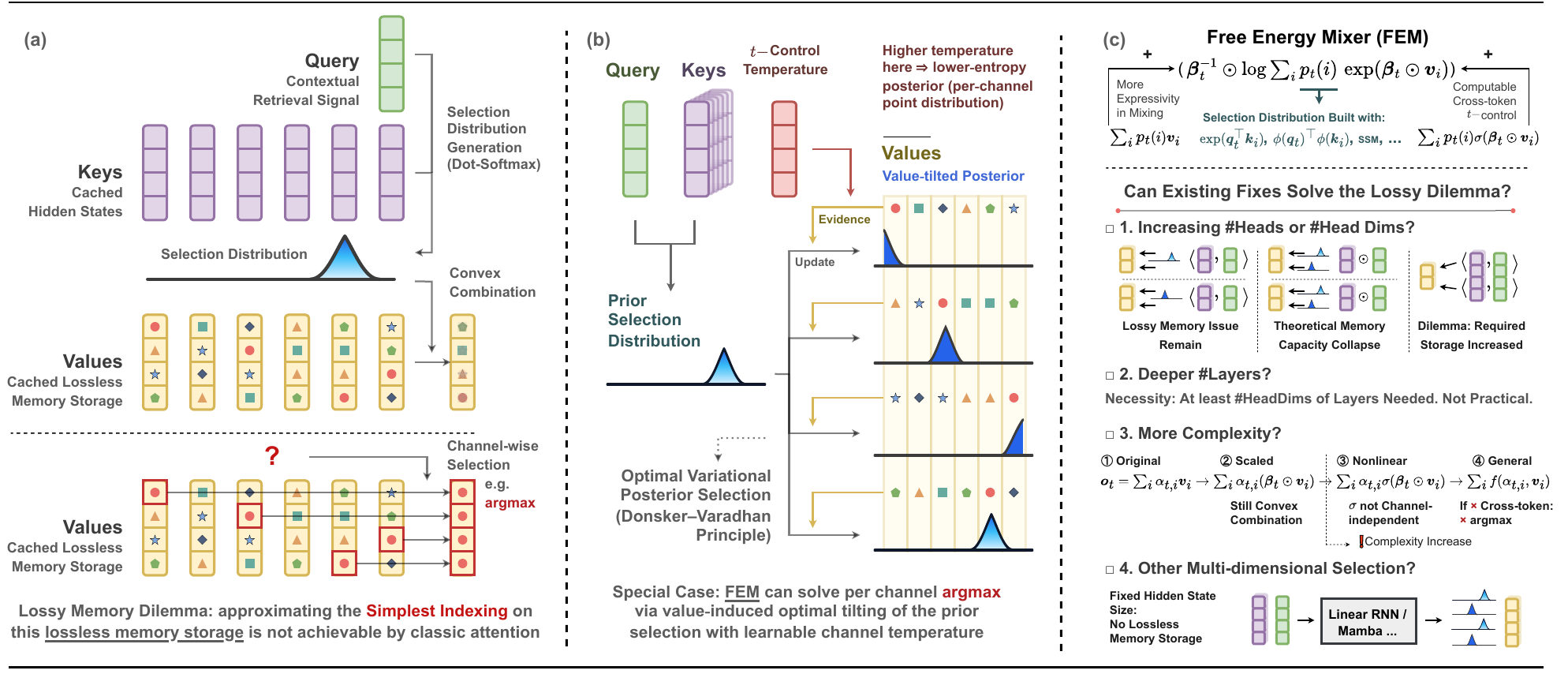}
  \caption{(a) Classic attention stores past values losslessly but reads them as a single convex combination, so channel-wise indexing (e.g., per-channel argmax) is not representable. (b) Free Energy Mixer (FEM) treats selection as a DV free-energy problem: values tilt the prior to a value-aware posterior with a learnable per-channel temperature, enabling low-entropy (point-like) posteriors and channel-wise selection while preserving the prior’s time complexity. (c) Common fixes (more heads, deeper stacks, separable mixers, and per-channel scoring) either keep channels synchronized or raise cost / rely on fixed-state storage; none close the lossy-memory gap that FEM addresses.}
  \label{problem}
  \vspace{-15pt}
\end{figure}

\section{Methodology}

\subsection{Preliminaries: selection distributions}
\label{sec:prelim:selection}
To analyze the storage-processing gap, we introduce the following notion of a \textbf{selection distribution}. At step $t$, we formalize memory selection over past indices $\mathcal{I}_t=\{1,\dots,t\}$ by a probability vector $p_t\in\Delta^{t-1}$ with support $M_t=\{i\in\mathcal{I}_t:\ p_t(i)>0\}$,
\begin{equation}
\label{eq:sel-dist}
\textstyle
p_t(i)\ge 0,\qquad \sum_{i=1}^t p_t(i)=1.
\end{equation}
\begin{revised} Hard masks such as local window can be encoded by restricting $M_t$ \end{revised}. Given values $\bm v_i\in\mathbb{R}^D$, the per-step readout is the expectation
\begin{equation}
\label{eq:attn-expectation}
\textstyle
o_t \;=\; \sum_{i=1}^t p_t(i)\,\bm v_i \;=\; \mathbb{E}_{i\sim p_t}[\bm v_i] \;\in\; \mathrm{conv}\{\bm v_{1},\dots,\bm v_{t}\}.
\end{equation} Causal softmax self-attention is the case where $p_t$ is a masked row-softmax over logits $\bm q_t^\top \bm k_i/\sqrt d$; linear attention arises when $p_t$ is a normalized nonnegative kernel, as detailed in \S \ref{app:prelim:dist}.

\textbf{Lossless storage vs.\ lossy processing.} Unlike RNNs, which compress history into a fixed‑size state, softmax attention stores the full KV-cache $\{(\bm k_i,\bm v_i)\}_{i\le t}$ without compression (lossless storage), but the read \eqref{eq:attn-expectation} applies one weight vector per head to all coordinates, so outputs lie in a per-head convex hull. This is potentially lossy when different channels should retrieve different indices in the same step. To state the target capability we define the finest-granularity retrieval:

\begin{definition}[Channel-wise selector]
\label{def:channel-selector}
A channel-wise selector at time $t$ is any vector $\bm s_t^\star=(v_{i_1,1},\dots,v_{i_D,D})$ with $i_j\in\mathcal{I}_t$ allowed to differ across $j\in[D]$.
\end{definition}
\begin{lemma}
\label{lem:convex-fails-main}
Let $\bm m_t=(\max_{i\le t}v_{i,1},\dots,\max_{i\le t}v_{i,D})$. If $\bm m_t\in\mathrm{conv}\{\bm v_1,\dots,\bm v_t\}$, then a single index simultaneously attains all coordinate maxima. Hence if the arg-max indices differ across coordinates, $\bm m_t\notin\mathrm{conv}\{\bm v_1,\dots,\bm v_t\}$.
\end{lemma}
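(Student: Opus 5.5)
Suppose $\bm m_t\in\mathrm{conv}\{\bm v_1,\dots,\bm v_t\}$. Then there are weights $\lambda_i\ge 0$ with $\sum_{i\le t}\lambda_i=1$ and $\bm m_t=\sum_{i\le t}\lambda_i\bm v_i$. I plan to argue one coordinate at a time, with the same $\lambda$ used in every coordinate. This shared-weight structure is exactly what the convex hull imposes, and the whole proof rests on it.

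Fix a coordinate $j\in[D]$ and write $m_j=\max_{i\le t}v_{i,j}$. Every $i$ satisfies $v_{i,j}\le m_j$, so
\[
0=m_j-\sum_{i\le t}\lambda_i v_{i,j}=\sum_{i\le t}\lambda_i\,(m_j-v_{i,j}).
\]
Each term on the right is nonnegative, so each term must vanish. Hence every $i$ in the support $S=\{i:\lambda_i>0\}$ satisfies $v_{i,j}=m_j$. Because the $\lambda_i$ sum to $1$, the support $S$ is nonempty.

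The key step is to notice that $S$ does not depend on $j$. Choose any $i^\ast\in S$. Applying the previous paragraph to each $j=1,\dots,D$ gives $v_{i^\ast,j}=m_j$ for all $j$. So the single index $i^\ast$ attains every coordinate maximum simultaneously, which is the first claim.

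The second claim is the contrapositive. Read ``arg-max indices differ across coordinates'' as: there is no single index lying in $\arg\max_i v_{i,j}$ for every $j$. Then the first claim cannot hold, so $\bm m_t\notin\mathrm{conv}\{\bm v_1,\dots,\bm v_t\}$.

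Nothing here is technically hard. The one point that needs care is ties. The statement should be read with argmax sets rather than unique argmax indices, because with ties an index can still attain all maxima even when some chosen argmax labels differ. I will state this interpretation explicitly in the proof.
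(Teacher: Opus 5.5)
Your proof is correct and follows the same argument as the paper: write $\bm m_t$ as a convex combination, observe that in each coordinate every index in the support of $\lambda$ must attain that coordinate's maximum, and use that this support is shared across coordinates. Your remark that the second claim should be read in terms of argmax sets, to handle ties, is a small clarification that the paper leaves implicit.
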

\begin{corollary}
\label{cor:lossless-not-representable-main}
A per-head convex read $\sum_i p_t(i)\bm v_i$ cannot realize a generic channel-wise selector with at least two coordinates selecting different indices.
\end{corollary}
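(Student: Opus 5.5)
The plan is to prove Lemma~\ref{lem:convex-fails-main} first and then obtain Corollary~\ref{cor:lossless-not-representable-main} from it, choosing a family of value banks on which the coordinate maxima are themselves a channel-wise selector.

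For the lemma, I suppose $\bm m_t=\sum_{i\le t}p(i)\bm v_i$ with $p\in\Delta^{t-1}$ and fix a coordinate $j$. Every summand satisfies $v_{i,j}\le m_{t,j}$, so
\[
0=\sum_i p(i)\,(m_{t,j}-v_{i,j}).
\]
This is a sum of nonnegative terms, so $v_{i,j}=m_{t,j}$ for every $i$ in the support $M=\{i:p(i)>0\}$. Since $M$ is nonempty and does not depend on $j$, any $i^\star\in M$ attains the maximum in every coordinate simultaneously. The second statement is the contrapositive. Here ``arg-max indices differ'' means that no single index lies in $\bigcap_j \arg\max_i v_{i,j}$. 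With ties allowed the argmax sets are sets, so I will state the condition as this intersection being empty.

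For the corollary, I read ``cannot realize a generic channel-wise selector'' as follows: there is no rule of the form $\bm s_t^\star=\sum_i p_t(i)\bm v_i$ that is correct for all value banks, or even for an open set of them. I will exhibit value configurations on which the target selector is exactly $\bm m_t$. Take indices $i_1\neq i_2$ selected by coordinates $1$ and $2$. Choose the $\bm v_i$ so that coordinate $1$ is uniquely maximized at $i_1$, coordinate $2$ is uniquely maximized at $i_2$, and in every other coordinate $j$ the selected index $i_j$ is the unique maximizer.

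On such a bank $\bm s_t^\star=\bm m_t$, and the intersection of argmax sets is empty because $\{i_1\}\cap\{i_2\}=\emptyset$. The lemma then gives $\bm s_t^\star\notin\mathrm{conv}\{\bm v_i\}$, so no $p_t$, whatever its dependence on $(q,k)$, produces it. Strict inequalities define an open set, so the failure is generic and not a measure-zero artifact. More generally, for any selector with $i_1\neq i_2$ I can apply a coordinatewise sign flip or shift. These are affine maps of each coordinate that preserve convex combinations, which reduces the case to the max case.

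I expect the main obstacle to be pinning down ``generic'' precisely enough that the corollary has content. A selector can coincide with a convex combination on degenerate banks, for example when all $\bm v_i$ are equal. The argument therefore has to quantify over value banks, or over an open set of them, rather than claim failure for every bank. The lemma itself is routine. One point to handle there is that ties are covered by the support argument, since it forces every support index to attain each maximum.
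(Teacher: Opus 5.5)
Your proof is correct, and its core matches the paper's. The termwise-equality proof of Lemma~\ref{lem:convex-fails-main} is the same, and the corollary is obtained by invoking that lemma. The difference is in the reduction from lemma to corollary.

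The paper applies the lemma directly to an arbitrary selector $\bm s_t^\star=(v_{i_1,1},\dots,v_{i_D,D})$. It then claims failure except on a measure-zero set where the chosen $\bm v_{i_j}$ coincide on the selected coordinates. The lemma, however, only concerns the coordinatewise maximum $\bm m_t$, so that step is not justified. The almost-everywhere claim is in fact false in general. Take $D=2$, $t=3$, $\bm v_1=(0,-1)$, $\bm v_2=(-1,0)$, $\bm v_3=(1,1)$ and $(i_1,i_2)=(1,2)$. Then $\bm s_t^\star=(0,0)=\tfrac13(\bm v_1+\bm v_2+\bm v_3)$, an interior point of the hull, and this persists on a neighborhood of value banks.

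Your construction fixes this. You choose banks where each $i_j$ is the unique maximizer of coordinate $j$, so $\bm s_t^\star=\bm m_t$ and the argmax sets have empty intersection. That is exactly what makes the appeal to the lemma legitimate. It also gives the conclusion its true strength: on a nonempty open (hence positive-measure) set of banks, no probability vector $p_t$ of any kind reproduces the selector. This is weaker than what the paper asserts, but it is what the lemma actually supports.

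One wording fix: state the conclusion as ``every rule fails on the explicit open set I constructed,'' not ``no rule is correct on any open set.'' The example above shows that a value-dependent $p_t$ can be correct on some open sets. The stronger form holds only for a value-independent $p$. In that case, $\sum_i p(i)v_{i,j}=v_{i_j,j}$ holding identically on an open set forces $p=\delta_{i_j}$ for every $j$, which is impossible when $i_1\neq i_2$.

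Your sign-flip remark is unnecessary given your construction. As a general claim it only covers selected values that are a maximum or minimum of their coordinate. It cannot handle the example above, where the selected values are middle values and the conclusion genuinely fails.
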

This geometric limitation above motivates our method. We can see that a single head applies one selection distribution to all channels at step $t$, synchronizing channel-wise index choices; with $H$ heads the number of realizable head-level arg-max patterns is at most $t^{H}$, far below the $t^{D}$ patterns needed for lossless per-channel selection when $H\ll D$. This gap motivates replacing the expectation read \eqref{eq:attn-expectation} with the free-energy read in Section~\ref{sec:method:fem}. Proofs of Lemma~\ref{lem:convex-fails-main} and Corollary~\ref{cor:lossless-not-representable-main}, the $t^{H}$ capacity counting are deferred to Appendix~\ref{app:prelim}.

\subsection{Why standard remedies fail: toward a faithful, lossless read}
\label{sec:method:negative}

\begin{figure}
  \centering
   \includegraphics[width=1\linewidth]
  {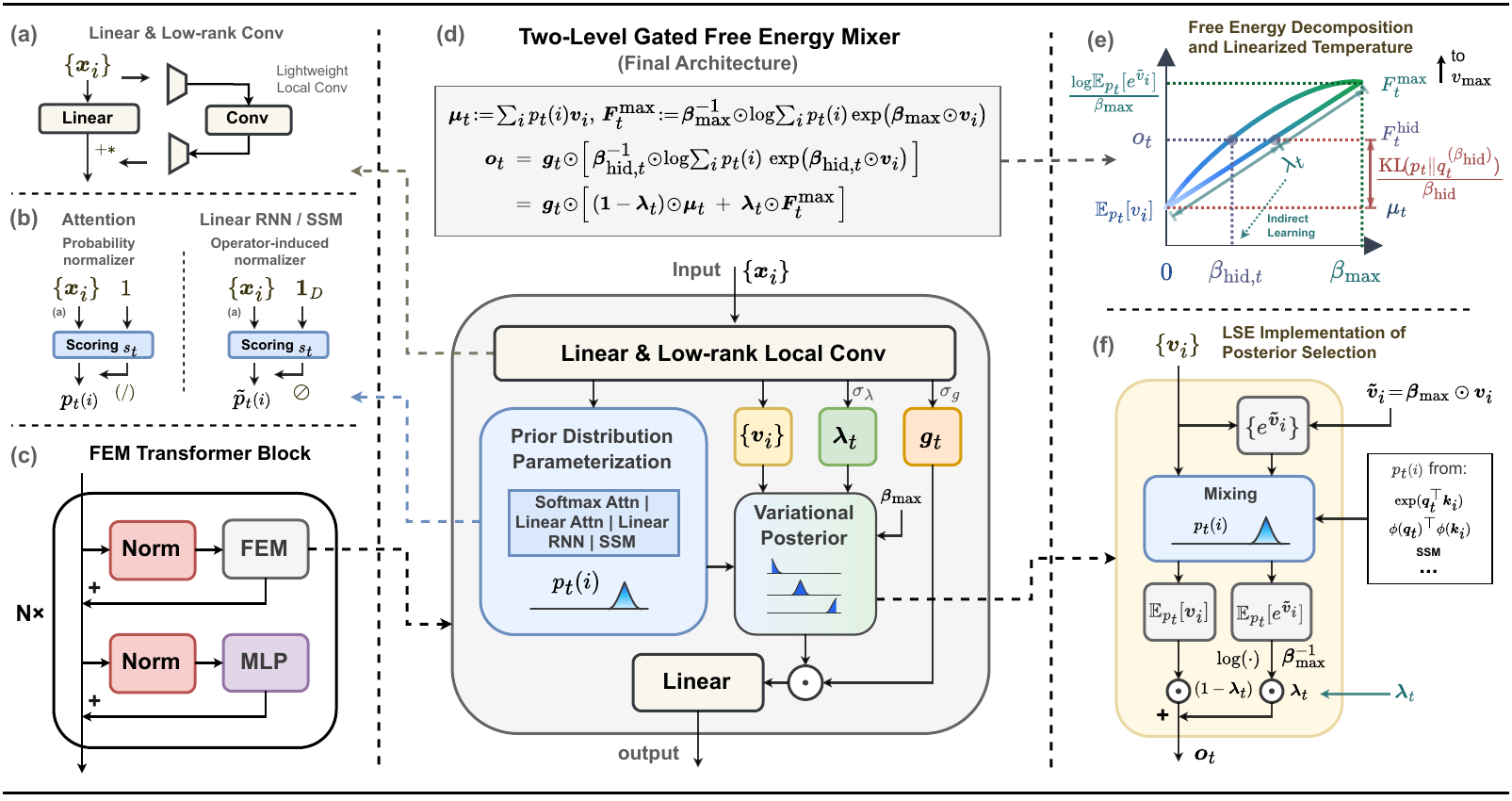}
  \caption{Overview of the Two-Level Gated Free Energy Mixer. 
(a) Lightweight linear \& low-rank local convolution for local conditioning. 
(b) Prior selection: softmax attention uses a probability normalizer, while linear RNN/SSM use an operator-induced normalizer. 
(c) FEM integrated into a Pre-Norm Transformer block. 
(d) Final architecture: compute mean $\mu_t$ and max-temperature branch $F_t^{\max}$, with inner gate $\lambda_t$ interpolating and outer gate $g_t$ scaling. 
(e) Free-energy curve: improvement over $\mu_t$ equals $\mathrm{KL}(p_t\Vert q^{(\beta)})/\beta$. 
(f) Efficient implementation: one mixing with $p_t$ yields both $\mathbb{E}_{p_t}[v]$ and $\beta_{\max}^{-1}\log \mathbb{E}_{p_t}[e^{\beta_{\max} v}]$, then gating produces $o_t$.}
  \label{architecture}
  \vspace{-12pt}
\end{figure}

We revisit common extensions around attention and explain why they do not close the channel‑wise lossless‑selection gap, as shown in Fig. \ref{problem}c. Full details and proofs are in Appendix~\ref{app:negative}.

\textbf{(1) More heads.}
Heads provide $H$ selection distributions per layer but synchronize channels within each head. Hence the step-$t$ head‑level argmax capacity is at most $t^H$, far below $t^D$ when $H\ll D$.
\begin{lemma}
\label{lem:heads-capacity}
Let $\alpha^{(h)}_{t,\cdot}\in\Delta^{t-1}$ be the distribution of head $h\in[H]$. Across contexts, realized head‑level argmax assignments are at most $t^H$, and all coordinates controlled by head $h$ share $\alpha^{(h)}_{t,\cdot}$.
\end{lemma}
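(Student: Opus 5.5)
The plan is to make the statement precise and then prove it by counting. Fix a step $t$ and a layer with $H$ heads. For each head $h$, define the head-level argmax assignment as the index $a_h\in\mathcal{I}_t$ that maximizes $\alpha^{(h)}_{t,\cdot}$. Ties are broken by a fixed deterministic rule, for example the smallest index. A context (the tokens $1,\dots,t$ together with the parameters) then induces a tuple $(a_1,\dots,a_H)\in\mathcal{I}_t^H$.

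\textbf{Bound on the number of assignments.} The set of all realizable tuples, taken over all contexts, is a subset of $\mathcal{I}_t^H$. Since $|\mathcal{I}_t^H|=t^H$, there are at most $t^H$ realized head-level argmax assignments. This holds for any mechanism that produces $\alpha^{(h)}_{t,\cdot}$, whether softmax, kernel, or anything else, because the bound depends only on the codomain.

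\textbf{Channel sharing.} Partition the $D$ value coordinates into the blocks $C_h$ that are read by head $h$. For every $j\in C_h$, the $j$-th output coordinate is $\sum_{i\le t}\alpha^{(h)}_{t,i}v_{i,j}$. It therefore uses the single weight vector $\alpha^{(h)}_{t,\cdot}$, so every coordinate in the block shares one distribution. By Lemma~\ref{lem:convex-fails-main} applied to the restriction of $\bm v_i$ to $C_h$, the block's output lies in the convex hull of the restricted values. Consequently it cannot realize distinct per-coordinate argmax selections within the block. Each block thus contributes only one index choice.

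\textbf{Comparison and main obstacle.} A channel-wise selector has $t^D$ patterns, while the construction above gives at most $t^H$, so the gap follows whenever $H<D$. The only delicate point is defining the "argmax assignment" when $\alpha^{(h)}$ has ties or is not peaked. The fixed tie-breaking rule handles this, and the counting does not depend on that choice.
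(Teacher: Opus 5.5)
Your proof is correct and follows essentially the same route as the paper: the realized tuples $(a_1,\dots,a_H)$ lie in the Cartesian product $\mathcal{I}_t^H$, which has size $t^H$, and within a head every controlled coordinate is a linear image of the single weight vector $\alpha^{(h)}_{t,\cdot}$. Your fixed tie-breaking rule, which the paper leaves implicit, and your blockwise appeal to Lemma~\ref{lem:convex-fails-main} are refinements that sharpen the statement rather than a different argument.
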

Increasing $H$ reduces the per-head width $d_h=D/H$, tightening the low-rank bottleneck on the value path; as $H$ approaches $D$, the cache become well-approximated by a finite-state linearization, effectively breaking the lossless-memory advantage. See Appendix~\ref{app:heads} for details and analysis.

\textbf{(2) More depth.}
After a first per‑head convex mixing acts at step $t$, per‑channel index identities are no longer available unless a fresh, independent selection acts before that first mixing.
\begin{lemma}
\label{lem:no-unmix}
The map $\{\bm v_i\}_{i\le t}\mapsto \sum_i \alpha_{t,i}\bm v_i$ is row‑stochastic with image in $\mathrm{conv}\{\bm v_1,\dots,\bm v_t\}$. Any channel‑wise selector outside this hull cannot be realized at step $t$ by composing coordinate‑wise maps and later attentions that only access already mixed tokens.
\end{lemma}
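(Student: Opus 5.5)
The plan has two parts, matching the two sentences of Lemma~\ref{lem:no-unmix}: first the row-stochastic and hull claim, then the obstruction to composition. For the first part I would write the read as $\bm o_t=\sum_{i\le t}\alpha_{t,i}\bm v_i$ with $\alpha_{t,\cdot}\in\Delta^{t-1}$. Stacking the $\bm v_i$ as rows of $V_{1:t}$, this is $\bm o_t^\top=\alpha_{t,\cdot}^\top V_{1:t}$, and over all $t$ it is $O=AV$ with $A$ lower-triangular. Each row of $A$ is nonnegative and sums to one, so $A$ is row-stochastic. By the definition of the convex hull, $\bm o_t\in\mathrm{conv}\{\bm v_1,\dots,\bm v_t\}$.

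For the second part I would make precise what ``composing coordinate-wise maps and later attentions that only access already mixed tokens'' means, and I would do it by induction on depth. The setting is as follows. After the first mixing, every downstream quantity at step $t$ is built from the mixed outputs $\{\bm o_s\}_{s\le t}$, through per-token maps and further convex reads. Fix a context in which the target selector $\bm s_t^\star$ lies outside $\mathrm{conv}\{\bm v_1,\dots,\bm v_t\}$; when the coordinate-wise argmax indices differ, this is supplied by Lemma~\ref{lem:convex-fails-main}. The core claim is that information about index identity is lost. I would show this by exhibiting two value banks $\{\bm v_i\}$ and $\{\bm v_i'\}$ that produce the same mixed outputs $\bm o_s$ for all $s\le t$ (same $\alpha$, same convex combinations) but have different channel-wise selectors.

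The construction perturbs the values along a direction in the kernel of the map $V_{1:t}\mapsto A_{1:t,1:t}V_{1:t}$ restricted to the chosen context. Concretely, for a single step $t$ with $t\ge 2$, I would pick $\bm v_i'=\bm v_i+\bm w_i$ with $\sum_i\alpha_{t,i}\bm w_i=0$, so that $\bm o_t$ is unchanged while a coordinate maximum moves to a different index. Any downstream function $F$ of the mixed tokens then returns the same output on both banks, yet $\bm s_t^\star$ differs, so no such $F$ equals the selector on both. The same argument can be phrased as a hull statement: every later convex read of tokens lying in $\mathrm{conv}\{\bm v_j\}$ stays in that hull, because a convex combination of convex combinations is convex. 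Coordinate-wise maps, however, can leave the hull, so the hull phrasing alone is insufficient, and I would rely on the non-identifiability argument.

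The main obstacle is handling causality across steps and the later attentions, whose weights may depend on the mixed tokens. The perturbation must keep all $\bm o_s$ for $s\le t$ fixed simultaneously, which imposes $t$ linear constraints on $t$ vector unknowns. I would first try perturbing only the last two values $\bm v_{t-1}$ and $\bm v_t$, with $\bm w_t$ chosen so that $\alpha_{t,t-1}\bm w_{t-1}+\alpha_{t,t}\bm w_t=0$ and $\bm o_{t-1}$ is compensated. Failing that, I would state the lemma for a generic context with full-support $\alpha$ and argue via dimension counting that the joint constraint set has positive codimension deficit once $D\ge 2$. Finally I would observe that if fresh selection on the raw $\bm v_i$ were allowed, identifiability would be restored, which explains the lemma's qualifier.
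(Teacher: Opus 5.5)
Your first part (row-stochasticity of $A$, hence $\bm o_t\in\mathrm{conv}\{\bm v_1,\dots,\bm v_t\}$) matches the paper. The gap is in the second part. The pair of value banks you need does not exist once later attentions may read the whole mixed prefix $\bm o_1,\dots,\bm o_t$, which is what the lemma intends (the paper's proof speaks of later attentions operating on ``a finite set of already mixed tokens'').

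With $\alpha$ held fixed, you need $W\neq 0$ with $A_{1:t,1:t}W=0$. But $A_{1:t,1:t}$ is lower-triangular with diagonal entries $\alpha_{s,s}>0$: causal softmax always gives the current index positive weight, and $\alpha_{1,1}=1$. So it is invertible, and forward substitution forces $\bm w_1=0$ (from $\bm o_1=\bm v_1$), then $\bm w_2=0$, and so on down the sequence.

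This is exactly why your two-value attempt cannot be completed. A nonzero $\bm w_{t-1}$ shifts $\bm o_{t-1}$ by $\alpha_{t-1,t-1}\bm w_{t-1}$, and compensating it requires perturbing earlier values, which cascades back to $\bm o_1$. The dimension-count fallback also goes the wrong way. The constraints decouple channel by channel ($A_{1:t,1:t}\bm w_{\cdot,j}=0$ for each $j$), so you face $tD$ independent equations in $tD$ unknowns, and $D\ge 2$ buys no slack. Full support is precisely the condition that makes the kernel trivial. In the generic case you name, the mixed prefix therefore determines the raw values, $V_{1:t}=A_{1:t,1:t}^{-1}O_{1:t}$, and your core claim that index identity is informationally lost is false.

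Under your assumptions, then, the obstruction cannot be one of identifiability. It has to come from the restricted form of the later computation: nonnegative convex reads plus coordinate-wise maps cannot implement the signed inverse $A_{1:t,1:t}^{-1}$ followed by per-channel competition. That is what the paper's informal hull sketch appeals to. Your remark that coordinate-wise maps can leave the hull is a fair criticism of that sketch, but your replacement does not close it.

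To keep a non-identifiability route, you must change the setting explicitly, in one of two ways.
\begin{enumerate}
\item Let the later computation at step $t$ see the values only through $\bm o_t$. Then your perturbation with $\sum_i\alpha_{t,i}\bm w_i=0$ works for $t\ge 2$.
\item Let the perturbation change $\alpha$ as well, i.e.\ treat the query/key context as free, since later layers do not observe it. For example, at $t=2$ take $\bm v_1=(0,1)$ and compare $(\alpha_{2,\cdot},\bm v_2)=((\tfrac12,\tfrac12),(1,0))$ with $(\alpha'_{2,\cdot},\bm v_2')=((\tfrac14,\tfrac34),(\tfrac23,\tfrac13))$. Both give $\bm o_1=(0,1)$ and $\bm o_2=(\tfrac12,\tfrac12)$, while the coordinate-wise maxima are $(1,1)$, which lies outside the hull, and $(\tfrac23,1)$.
\end{enumerate}
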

\begin{proposition}[Selection budget]
\label{prop:depth-budget}
With $L$ attention-MLP blocks and $H$ heads per block, at most $HL$ disjoint channel groups receive independent first‑mixing distributions by step $t$. A necessary condition for $D$ independent per‑channel selections at step $t$ is $HL\ge D$ (which is not practical).
\end{proposition}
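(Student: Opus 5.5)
The plan is to make precise what counts as an ``independent first-mixing distribution'' and then count how many of them the architecture can produce. For a fixed step $t$ and output channel $j\in[D]$, trace the computation back to the raw token representations $\{\bm x_i\}_{i\le t}$. In an attention--MLP stack, the only operations that combine information across indices are the attention reads. MLPs, residual additions, and norms all act token-wise. So every output channel is a token-wise function of quantities that each passed through some attention read. We define the \emph{first mixing} of channel $j$ as the earliest such read on its computational path at which index identity over the raw values is decided. This read is some head $h$ in some block $\ell$, applying $\alpha^{(h,\ell)}_{t,\cdot}$ to the per-head value slice. By Lemma~\ref{lem:no-unmix}, once a channel's information has passed through this convex read, later attentions and coordinate-wise maps only see already-mixed tokens. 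They therefore cannot recover a per-channel index that differs from the one already fixed by $\alpha^{(h,\ell)}_{t,\cdot}$.

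The counting step goes as follows. Each block contributes exactly $H$ selection distributions at step $t$, one per head, and by Lemma~\ref{lem:heads-capacity} all coordinates governed by a head share that head's distribution. This gives a map $j\mapsto(h,\ell)$ from channels to first-mixing heads. Its fibers partition $[D]$ into at most $HL$ groups, and within a group every channel has the same first-mixing distribution, so its arg-max index pattern is synchronized. Those are the ``at most $HL$ disjoint channel groups.''

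For the necessary condition, suppose the model realizes $D$ independent per-channel selections at step $t$. In particular it realizes a generic channel-wise selector (Definition~\ref{def:channel-selector}) whose indices $i_1,\dots,i_D$ are pairwise distinct, which requires $t\ge D$ or equivalently generic contexts. Two channels in the same group would be read through the same convex combination of a shared value set. Lemma~\ref{lem:convex-fails-main} and Corollary~\ref{cor:lossless-not-representable-main} show this cannot place distinct arg-max indices on them. Hence the map $j\mapsto(h,\ell)$ must be injective, so $D\le HL$.

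The main obstacle is making ``first mixing'' and ``independent'' rigorous. Residual streams let a channel carry information that took several attention paths, and a later head could in principle mix a fresh, never-mixed residual copy of the raw token $\bm x_i$. I would handle this by allowing it explicitly: such a later head simply counts as that channel's first mixing, and it is still one of the $HL$ heads. The count is therefore over heads, not over depth positions. I would also restrict to generic contexts, meaning distinct values per coordinate with distinct arg-maxes, so that Lemma~\ref{lem:convex-fails-main} applies strictly. Finally, I would note that channel-wise nonlinear MLPs cannot create new index identity because they act per token. With these conventions stated, the argument reduces to the pigeonhole count above.
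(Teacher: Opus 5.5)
Your proposal is correct and takes essentially the same route as the paper's proof: group channels by the head among the $HL$ available that performs their first mixing, count at most $HL$ groups, and obtain $HL\ge D$ by pigeonhole; your appeal to Lemma~\ref{lem:convex-fails-main} on a two-coordinate projection spells out the injectivity step that the paper leaves implicit. One small simplification: you do not need all $D$ indices to be pairwise distinct, so there is no $t\ge D$ requirement, because injectivity only needs each pair $j\neq j'$ to admit some context in which their arg-max indices differ, and that needs only $t\ge 2$.
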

\textbf{(3) Per‑dimension queries/keys.}
Giving each coordinate its own scoring subspace raises capacity toward $t^D$ but raises score parameters and compute from $\Theta(d^2)$ to $\Theta(Dd)$ per layer, typically harming value bandwidth or MLP width under fixed budgets.

\textbf{(4) Richer in‑head mixers.}
The progressive family below still keeps mixing token‑separable:
\[
\textstyle
\bm{o}_t=\sum_i \alpha_{t,i}\bm{v}_i 
\;\Rightarrow\;
\sum_i \alpha_{t,i}(\bm{\beta}_t\odot \bm{v}_i)
\;\Rightarrow\;
\sum_i \alpha_{t,i}\,\sigma(\bm{\beta}_t\odot \bm{v}_i)
\;\Rightarrow\;
\sum_i f(\alpha_{t,i},\bm{v}_i),
\]
\begin{proposition}[Token‑separable mixers are convexly constrained]
\label{prop:separable-main}
Linear and coordinate‑wise gated variants lie in a convex hull of transformed values; even with a pointwise nonlinearity inside the sum, channel‑wise selection of the original coordinates is not realizable in general. For general token‑separable $f$, per‑channel argmax is impossible in general. Additionally, adding per‑channel cross‑token competition in $f$ may break $O(T)$/$O(T^2)$ parallelism. Details in Appendix~\ref{app:separable}.
\end{proposition}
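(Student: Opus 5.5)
The plan is to take the three clauses of Proposition~\ref{prop:separable-main} one at a time, each reduced to Lemma~\ref{lem:convex-fails-main} or to a short counterexample.

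\textbf{Linear and gated cases.} For $\bm o_t=\sum_i\alpha_{t,i}(\bm\beta_t\odot\bm v_i)$ and $\bm o_t=\sum_i\alpha_{t,i}\sigma(\bm\beta_t\odot\bm v_i)$, the point is that $\alpha_{t,\cdot}\in\Delta^{t-1}$ multiplies every coordinate of the transformed vector $\bm u_i$ (namely $\bm u_i=\bm\beta_t\odot\bm v_i$ or $\bm u_i=\sigma(\bm\beta_t\odot\bm v_i)$). Hence $\bm o_t\in\mathrm{conv}\{\bm u_1,\dots,\bm u_t\}$. Suppose the coordinate maps $u\mapsto\sigma(\beta_{t,j}u)$ are strictly monotone with $\beta_{t,j}>0$. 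Then the per-channel argmax of the original values, mapped coordinate-wise, is exactly the coordinate-wise max of the $\bm u_i$. Applying Lemma~\ref{lem:convex-fails-main} to $\{\bm u_i\}$ shows this point is not attained whenever the argmax indices differ across coordinates. To get back the original coordinates $v_{i_j,j}$, one would need a post-hoc inverse map. I would note that such a coordinate-wise inverse, also being monotone, cannot move a point from inside the hull onto the out-of-hull target $\sigma(\bm\beta\odot \bm m_t)$ that Lemma~\ref{lem:convex-fails-main} exhibits. The linear case is the special case $\sigma=\mathrm{id}$. For non-monotone $\sigma$ I only claim failure ``in general'', so a single explicit example suffices: take $t=2$, $D=2$, $\bm v_1=(1,0)$, $\bm v_2=(0,1)$.

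\textbf{General token-separable $f$.} Here $\bm o_t=\sum_i f(\alpha_{t,i},\bm v_i)$, and each summand depends only on its own token. I would argue by a perturbation/symmetry counterexample, fixing $\alpha$ uniform so that the sum is a fixed function of the multiset of values. The target is $F(\{\bm v_i\})=\bm m_t$, and we need $F$ to be additive over tokens, i.e.\ $F=\sum_i g(\bm v_i)$ with $g(\bm v)=f(1/t,\bm v)$. Take one coordinate and compare the sets $\{a,b\}$, $\{a,a\}$, $\{b,b\}$ with $a<b$. Additivity requires $2g(a)=a$ and $2g(b)=b$ from the equal pairs. Then $g(a)+g(b)=(a+b)/2\neq\max(a,b)=b$, a contradiction. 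This shows exact per-channel argmax (max readout) is not realizable by any token-separable $f$ at fixed $\alpha$, hence impossible in general. Allowing $\alpha$ to vary does not help, since the counterexample can use contexts where the scores, and therefore $\alpha$, are equal. The main subtlety is making sure the contexts are legitimate: equal keys give equal $\alpha$, and the values are free, so this works.

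\textbf{Cost clause.} I would observe that per-channel cross-token competition, such as a per-channel softmax over $i$ of $v_{i,j}$, requires a separate normalizer for each $(t,j)$ that depends on all $v_{i,j}$ jointly with $\alpha_{t,i}$. Realized naively, this is a $T\times T\times D$ tensor, which breaks the single $O(T^2)$ matmul, or breaks the $O(T)$ state factorization unless it admits a log-sum-exp decomposition. I expect this to be stated as an informal cost remark rather than proved as a lower bound; the precise impossibility statement is the hardest part and I would not attempt it.
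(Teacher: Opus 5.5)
Your treatment of the plain linear read and of the cost clause matches the paper. The paper also calls the hull statement ``direct'' and leaves the complexity point as an informal remark.

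For general token-separable $f$ you take a different and stronger route. The paper's sketch is a duplication/swap argument: two identical tokens at different indices make a separable sum blind to which index is ``selected''. That only addresses tie-breaking of index identity, in a configuration where the max value is unaffected. Your additivity argument instead shows that the max \emph{value} is not a sum of per-token terms. The tied contexts $\{a,a\}$ and $\{b,b\}$ force $2g(a)=a$ and $2g(b)=b$, which contradicts $g(a)+g(b)=b$ on $\{a,b\}$. This rules out the value-level readout that Definition~\ref{def:channel-selector} actually asks for. It also survives index-dependent $f$: add the swapped context $(b,a)$ and sum. One point needs care. If $f$ contains anything computed from the current token, such as a gate $\bm\beta_t$, keep the query token fixed and vary only strictly earlier values under equal keys. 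The computation then goes through with an extra constant $g(\bm v_t)$, provided $v_{t,j}<a$.

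The gated clause, however, has a genuine gap. Applying Lemma~\ref{lem:convex-fails-main} to $\{\bm u_i\}$ shows $\bm o_t\neq\sigma(\bm\beta_t\odot\bm m_t)$. What must be shown is $\bm o_t\neq\bm m_t$, and these are different points; the model applies no inverse map. Your inverse-map step uses only injectivity, and monotone coordinate-wise maps do not preserve hulls in any case.

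Instance-wise, the claim is actually false. Take $\sigma=\mathrm{id}$, $\bm\beta_t=(2,2)$, $\bm v_1=(1,0)$, $\bm v_2=(0,1)$ and $\alpha_{t,\cdot}=(\tfrac12,\tfrac12)$; then $\bm o_t=(1,1)=\bm m_t$. The same happens for $\sigma(x)=x^2$ with $\bm\beta_t=(\sqrt2,\sqrt2)$. So your proposed explicit example is not a counterexample unless you pin down $\sigma$ (e.g.\ a sigmoid, whose range excludes $1$) or $\bm\beta_t$.

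Non-realizability ``in general'' has to be read uniformly: a gate computed from the current token cannot produce $\bm m_t$ for all past value configurations. This follows directly from your third argument, since $\sum_i\alpha_{t,i}\sigma(\bm\beta_t\odot\bm v_i)$ is token-separable with $f(\alpha,\bm v)=\alpha\,\sigma(\bm\beta_t\odot\bm v)$ and $\bm\beta_t$ held fixed. The paper's own sketch relies on the same ``cannot map back'' heuristic. The fix is to derive this clause from the additivity argument rather than from the hull of transformed values.
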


\textbf{(5) Linear RNNs/SSMs.}
They offer rich dimension interactions but store history in a fixed‑size state, cannot support arbitrary index retrieval at large horizons without lossless storage; see \S~\ref{app:rnn-ssm}.

\textbf{Takeaway and connections.}
Prior remedies fall into three buckets: (a) increasing assignment capacity at substantial cost (e.g., per‑feature score‑space inflation to obtain $\alpha_{t,i,c}$), (b) keeping a token‑separable convex read (e.g., in‑head pointwise gates), or (c) relying on fixed‑state storage (e.g., linear RNNs/SSMs). 
None provides per‑channel, value‑aware cross‑token competition before the first mixing while preserving the time complexity.
In particular, pushing capacity from $t^{H}$ toward $t^{D}$ via per‑feature inflation leaves the read token‑separable, so the same‑step lossless‑selection gap persists (Lemma~\ref{lem:no-unmix}, Proposition~\ref{prop:separable-main}); likewise, simply scaling heads/depth or adding in‑head gates cannot recover channel‑wise index identity once the first convex mix has acted. 
These gaps motivate a single, stronger mixer that performs value‑aware competition without changing asymptotic cost: our FEM via a variational free‑energy read. See Appendix~\ref{app:per-channel} for a mapping of existing designs and Appendix~\ref{app:single-mixer} for more discussion.

\subsection{Free Energy Mixer: value-aware posterior selection}
\label{sec:method:fem}

\textbf{Motivation and objective.}
Classic attention performs a per-head convex read and cannot realize same-step channel-wise selectors in general (cf.\ Lemma~\ref{lem:convex-fails-main}, Corollary~\ref{cor:lossless-not-representable-main}). We therefore cast channel-wise retrieval as an information-constrained selection problem: at step $t$, a fast, information-sparse prior $p_t$ (from queries/keys or an operator-induced normalizer) proposes indices on the masked support $M_t$, while values $\{\bm v_i\}$ supply evidence.\footnote{Somewhat counterintuitively, we treat selection as prior and values as evidence because evidence requires log-exp processing while the prior does not; this preserves the time complexity of the selection mechanism.} For each channel $j$ we choose $q\in\Delta(M_t)$ to maximize expected utility under a KL budget relative to $p_t$,
\begin{equation}
\label{eq:info-budget}
\textstyle
\max_{q\in\Delta(M_t)}\ \mathbb{E}_{i\sim q}[v_{i,j}]
\quad\text{s.t.}\quad
\mathrm{KL}\!\big(q\Vert p_t\big)\le B_{t,j}.
\end{equation}

\textbf{Free Energy Mixer formulation.}
Introducing a Lagrange multiplier $\beta_{t,j}>0$ yields the per-channel free energy output
\begin{equation}
\label{eq:fem-free-energy-main}
\textstyle
\mathcal{F}_{t,j}(\beta_{t,j})
\;=\;
\frac{1}{\beta_{t,j}}\log\sum_{i\in M_t}p_t(i)\,\exp\!\big(\beta_{t,j}\,v_{i,j}\big),
\end{equation}
and the corresponding posterior selection distribution
\begin{equation}
\label{eq:fem-posterior-main}
q^{(j)}_{t,\beta}(i)
\;=\;
\frac{p_t(i)\,\exp\!\big(\beta_{t,j} v_{i,j}\big)}{\sum_{r\in M_t}p_t(r)\,\exp\!\big(\beta_{t,j} v_{r,j}\big)}\,,\qquad i\in M_t.
\end{equation}

\begin{theorem}[Free-energy selection and budget duality (with $\beta$ as inverse temperature)]
\label{thm:fem-main}
The constrained problem \eqref{eq:info-budget} has a unique solution $q^\star$. There exists a unique $\beta^\star_{t,j}\!\ge\!0$ such that $q^\star=q^{(j)}_{t,\beta^\star}$ and $\mathbb{E}_{q^\star}[v_{i,j}]=\mathcal{F}_{t,j}(\beta^\star_{t,j})$. Equivalently (DV form), for any $\beta>0$ the maximizer of $\sum_i q(i)v_{i,j}-\tfrac{1}{\beta}\mathrm{KL}(q\Vert p_t)$ is $q^{(j)}_{t,\beta}$. Moreover, $\beta\mapsto \mathcal{F}_{t,j}(\beta)$ is continuous and strictly increasing unless $v_{\cdot,j}$ is $p_t$‑a.s.\ constant. See Appendix~\ref{app:fem-props}, Lemmas~\ref{lem:budget-penalty}–\ref{lem:app-dv} and Proposition~\ref{prop:app-baseline}.
\end{theorem}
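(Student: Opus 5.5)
The plan is to prove the DV (penalized) form first and derive the constrained statement from it by Lagrangian duality. Fix $t,j$, abbreviate $v_i=v_{i,j}$ and $p=p_t$ on $M_t$, and set $Z(\beta)=\sum_{i\in M_t}p(i)e^{\beta v_i}$.

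\textbf{Step 1: DV form.} For $\beta>0$ and any $q\in\Delta(M_t)$, I would substitute $\log q^{(j)}_{t,\beta}(i)=\log p(i)+\beta v_i-\log Z(\beta)$ into $\mathrm{KL}(q\Vert q^{(j)}_{t,\beta})$. This gives the Gibbs identity
\[
\textstyle\sum_i q(i)v_i-\tfrac1\beta\mathrm{KL}(q\Vert p)=\mathcal F_{t,j}(\beta)-\tfrac1\beta\mathrm{KL}\big(q\Vert q^{(j)}_{t,\beta}\big).
\]
The penalized objective is therefore at most $\mathcal F_{t,j}(\beta)$, with equality iff $q=q^{(j)}_{t,\beta}$, by Gibbs' inequality with equality only at zero KL. This gives existence and uniqueness of the DV maximizer and its value, and it is the content of Lemma~\ref{lem:app-dv}.

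\textbf{Step 2: monotonicity and continuity.} Write $\mathcal F(\beta)=\log Z(\beta)/\beta$, which is smooth on $(0,\infty)$. Define $\mathcal F(0)=\mathbb E_p[v]$, which is the limit by a first-order expansion of $\log Z$. Differentiating gives
\[
\textstyle\mathcal F'(\beta)=\beta^{-2}\big(\beta\,\mathbb E_{q_\beta}[v]-\log Z(\beta)\big)=\beta^{-2}\,\mathrm{KL}(q_\beta\Vert p)\ge 0,
\]
using the identity from Step 1 with $q=q_\beta$. The KL vanishes iff $q_\beta=p$, i.e.\ iff $v$ is $p$-a.s.\ constant, so $\mathcal F$ is strictly increasing otherwise.

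\textbf{Step 3: the constrained problem and duality.} Uniqueness of $q^\star$ follows because the feasible set $\{q:\mathrm{KL}(q\Vert p)\le B\}$ is compact and convex, and the objective is linear. I would show the budget binds whenever the maximum of $\mathbb E_q[v]$ over $\Delta(M_t)$ is infeasible. Then any two maximizers lie on the boundary, and their midpoint has strictly smaller KL by strict convexity, so it could be improved. For existence of $\beta^\star$, let $\kappa(\beta)=\mathrm{KL}(q_\beta\Vert p)$. Differentiation gives $\kappa'(\beta)=\beta\,\mathrm{Var}_{q_\beta}(v)>0$ in the nonconstant case, with $\kappa(0)=0$. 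As $\beta\to\infty$, $\kappa(\beta)\to\log(1/p(A))$, where $A$ is the arg-max set of $v$. Hence for $0<B<\log(1/p(A))$ there is a unique $\beta^\star$ with $\kappa(\beta^\star)=B$. Step 1 then shows $q_{\beta^\star}$ maximizes $\mathbb E_q v-\frac1{\beta^\star}(\mathrm{KL}(q\Vert p)-B)$ over all $q$, which dominates the constrained objective. This yields $q^\star=q_{\beta^\star}$ (Lemma~\ref{lem:budget-penalty}). For $B=0$, take $\beta^\star=0$ and $q^\star=p$.

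\textbf{Expected obstacle: the value identity and the edge cases.} Step 1 actually gives $\mathbb E_{q^\star}[v]=\mathcal F(\beta^\star)+B/\beta^\star$. So the claimed equality $\mathbb E_{q^\star}[v]=\mathcal F(\beta^\star)$ holds only at $\beta^\star=0$ or $B=0$. In general I would instead state that the optimal value equals the dual value $\mathcal F(\beta^\star)+B/\beta^\star=\min_{\beta>0}\{\mathcal F(\beta)+B/\beta\}$, and check it against Proposition~\ref{prop:app-baseline}.

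Similarly, when $B\ge\log(1/p(A))$ the budget is slack and $q^\star=p(\cdot\mid A)$. In that regime $\beta^\star=\infty$ in the limiting sense, so a finite $\beta^\star\ge 0$ does not exist. I would handle this by extending $\beta$ to $[0,\infty]$ with $q_\infty=p(\cdot\mid A)$ and $\mathcal F(\infty)=\max_A v$.

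I would also define the degenerate case (constant $v$) by the convention $\beta^\star=0$, since every $\beta$ then gives $q_\beta=p$. Pinning these conventions down is the main delicate point of the proof.
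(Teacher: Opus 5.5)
Your proof is correct for every part of the statement that is actually true. It takes a more elementary and more explicit route than the paper.

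\textbf{Comparison of routes.} The paper cites the Donsker--Varadhan identity plus KKT for the penalized form. It gets the constrained form from Slater's condition, strong duality and KKT, reading off $q^\star\propto p_t e^{\beta v}$; this is the sketch printed after Proposition~\ref{prop:app-complexity}. It then uses $\mathcal{F}'(\beta)=\beta^{-2}\mathrm{KL}(q^{(\beta)}\Vert p_t)$ for monotonicity. Your route differs in three ways:
\begin{itemize}
\item You prove DV directly through the Gibbs identity. This gives existence, uniqueness and the optimal value at once, with no KKT.
\item You construct the multiplier explicitly as $\beta^\star=\kappa^{-1}(B)$, using $\kappa'(\beta)=\beta\,\mathrm{Var}_{q_\beta}(v)>0$ and the intermediate value theorem.
\item You close with Lagrangian sufficiency. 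That step already forces $q=q_{\beta^\star}$, so your separate strict-convexity uniqueness argument is redundant in the binding regime.
\end{itemize}
What your route buys is exactly what the Slater/KKT sketch hides. Slater needs $B>0$. The KKT multiplier is $1/\beta$, so a slack constraint means multiplier $0$, i.e.\ $\beta=\infty$. Your explicit inverse also makes continuity and strict monotonicity of $B\mapsto\beta^\star(B)$, as claimed in Lemma~\ref{lem:budget-penalty}, immediate.

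\textbf{The value identity.} The obstacle you flag is real, not a defect of your proof. Evaluating the DV identity at $q=q_\beta$ gives $\mathbb{E}_{q_\beta}[v]=\mathcal{F}(\beta)+\beta^{-1}\mathrm{KL}(q_\beta\Vert p)$. So the statement's equality $\mathbb{E}_{q^\star}[v_{i,j}]=\mathcal{F}_{t,j}(\beta^\star)$ fails whenever $B>0$ and $v$ is nonconstant. For example, take $p=(\tfrac12,\tfrac12)$, $v=(0,1)$ and $B=\kappa(1)\approx 0.111$. Then $\beta^\star=1$, and $0.731$ is not equal to $\mathcal{F}(1)\approx 0.620$. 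The paper's sketch cannot establish this equality, and its own ``Outputs'' paragraph treats the two readouts as coinciding only under concentration. Your corrected form is the right one: the optimal value is $\mathcal{F}(\beta^\star)+B/\beta^\star=\min_{\beta>0}\{\mathcal{F}(\beta)+B/\beta\}$.

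\textbf{A slip in the slack regime.} For $B>\log(1/p(A))$ with $|A|\ge 2$, the constrained maximizer is not unique. Every $q\in\Delta(A)$ with $\mathrm{KL}(q\Vert p)\le B$ attains $\max v$. For instance, take $p$ uniform on three points, $v=(1,1,0)$ and $B=\log 3$. Then both $\delta_1$ (KL $=\log 3$) and $(\tfrac12,\tfrac12,0)$ (KL $=\log\tfrac32$) are optimal. So $q^\star=p(\cdot\mid A)$ holds only when $|A|=1$ or $B=\log(1/p(A))$. Otherwise $p(\cdot\mid A)$ is just the minimum-KL optimizer, equivalently the limit $q_\infty$. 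This means the theorem's first sentence, uniqueness of $q^\star$, also fails in this regime. Your extension to $\beta\in[0,\infty]$ should therefore select $p(\cdot\mid A)$ by convention rather than assert it as the unique solution.
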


\textbf{Consequences (summary).}
(i) Reverse‑KL improvement over the mean: $\mathcal{F}_{t,j}(\beta)=\mathbb{E}_{p_t}[v_{i,j}]+\frac{1}{\beta}\mathrm{KL}(p_t\Vert q^{(\beta)}_{t,j})$ (Proposition~\ref{prop:app-baseline}).  
(ii) Value‑aware competition: the gradient equals the posterior and the Hessian is a Fisher covariance scaled by $\beta$; thus $\mathcal{F}_{t,j}$ is convex and $\beta/2$‑smooth in $v_{\cdot,j}$ (Proposition~\ref{prop:app-geom}).  
(iii) Channel‑wise selection on the prior support: with margin $\Delta_{t,j}>0$, $q^{(\beta)}_{t,j}$ concentrates at the argmax with exponentially small error in $\beta$; $\mathcal{F}_{t,j}(\beta)\uparrow \max_i v_{i,j}$ (Proposition~\ref{prop:app-bounds}).  
(iv) Capacity and complexity: across channels, FEM attains the assignment upper bound $|M_t|^{D}$, whereas $H$ heads offer at most $|M_t|^{H}$ patterns; computing \eqref{eq:fem-free-energy-main} with a \emph{fixed} temperature adds one masked log‑sum‑exp per channel and preserves the prior’s asymptotic complexity (Theorem~\ref{thm:app-capacity}, Proposition~\ref{prop:app-complexity}).  
(v) Masks and invariances: masking is preserved; shift/scale laws and sensitivity to prior probabilities/logits follow from log‑sum‑exp structure (Proposition~\ref{prop:app-prior}).

\textbf{Outputs.}
FEM exposes two per‑channel readouts sharing the same posterior $q^{(j)}_{t,\beta}$: the free energy $\mathcal{F}_{t,j}(\beta)$ and the posterior expectation $\sum_i q^{(j)}_{t,\beta}(i)\,v_{i,j}$. Under $\beta$-concentration they coincide at the selected value—letting the model smoothly move from averaging to hard indexing without changing the architecture. In \S~\ref{sec:method:fem:ltl}–\ref{sec:method:fem:twogate} we add a lightweight two‑level gating and linearized temperature learning that learn a dynamic temperature without changing the prior’s asymptotic complexity.

\begin{revised}
\textbf{Rethinking the Design of Attention.} Attention can be viewed as a simplified subclass of a more general and computationally-expensive map-reduce structure \(o_t=\sum_{i\le t} g(x_i, x_{1:t})\). The simplification occurs in the map stage: instead of computing a full channel-wise function \(g(x_i,x_{1:t})\), attention retains the full memory state \(x_{1:t}\) but replaces \(g\) with a channel-synchronized form \(\sum_{i\le t}\alpha_{t,i} x_i\). This avoids materializing a large channel-wise weight tensor and reduces the read to a single scalar weight per position, enabling highly efficient parallelization. FEM restores the missing channel interaction not by increasing the internal complexity of the map function \(g(\cdot)\) (which would raise time complexity), but by enriching the reduce stage \(\sum_{i\le t} g(\cdot)\). By replacing the linear read with a free-energy read, FEM recovers channel-wise selection ability while preserving the computational efficiency of the original attention. In this way, FEM closes the expressiveness gap introduced by the map-stage simplification of classical attention. 
\end{revised}

\subsubsection{Efficient Computation of FEM and Linearized Temperature Learning}
\label{sec:method:fem:ltl}

\paragraph{Fixed temperature.}
For a fixed inverse temperature $\beta>0$ and channel $j$, FEM reads
\begin{equation}
\label{eq:fem-fixed}
\textstyle
\mathcal{F}_{t,j}(\beta)
\;=\;
\frac{1}{\beta}\log\!\sum_{i\in M_t} p_t(i)\,e^{\beta\,v_{i,j}}
\;=\;
\mathbb{E}_{i\sim p_t}[v_{i,j}]
\;+\;\frac{1}{\beta}\,\mathrm{KL}\!\big(p_t\;\Vert\;q^{(\beta)}_{t,j}\big),
\end{equation}
with posterior selector $q^{(\beta)}_{t,j}(i)\propto p_t(i)\,e^{\beta v_{i,j}}$ on the same support $M_t$ as the prior.
Evaluating \eqref{eq:fem-fixed} requires a single masked log‑sum‑exp (LSE) per channel, so the asymptotic time complexity is identical to the prior (e.g., $O(T^2)$ for softmax, $O(T)$ for kernel/SSM priors). See Appendix~\ref{app:fem-ltl:fixed}.

\textbf{Why $\beta$ should be dynamic.}
The decomposition in \eqref{eq:fem-fixed} reveals an energy-entropy trade‑off: $\beta$ governs the improvement over the expectation baseline through $\tfrac{1}{\beta}\mathrm{KL}(p_t\Vert q^{(\beta)}_{t,j})$. Tasks typically need different entropy levels across steps and channels, but directly recomputing \eqref{eq:fem-fixed} for each learned $\beta_{t,j}$ would break single‑pass efficiency.

\textbf{Linearized Temperature Learning (LTL).} (Figure~\ref{architecture}f)
Now we fix a per‑channel maximum $\beta_{\max}>0$ and define the expectation baseline
$\mu_{t,j}=\mathbb{E}_{i\sim p_t}[v_{i,j}]$ and the high‑temperature branch
$\mathcal{F}^{\max}_{t,j}=\beta_{\max}^{-1}\log\sum_{i\in M_t}p_t(i)e^{\beta_{\max}v_{i,j}}$.
A learned gate $\lambda_{t,j}\in[0,1]$ interpolates
\begin{equation}
\textstyle
\label{eq:ltl}
\widetilde{\mathcal{F}}_{t,j}(\lambda_{t,j})
=(1-\lambda_{t,j})\,\mu_{t,j}
+\lambda_{t,j}\,\mathcal{F}^{\max}_{t,j},
\end{equation}
requiring only the baseline expectation and a single LSE at $\beta_{\max}$ per step, hence preserving the prior’s asymptotic complexity.

\textbf{Hidden temperature and equivalent reparameterization in LTL.} (Figure~\ref{architecture}e)
Let $F_{t,j}(\beta)=\mathcal{F}_{t,j}(\beta)$ and $\Delta_{t,j}(\beta)=F_{t,j}(\beta)-F_{t,j}(0)$.
Then \(
F'_{t,j}(\beta)=\beta^{-2}\,\mathrm{KL}\!\big(q^{(\beta)}_{t,j}\Vert p_t\big)\ge 0,
\)
so $F_{t,j}$ is continuous and strictly increasing on $[0,\beta_{\max}]$ unless $v_{\cdot,j}$ is $p_t$‑a.s.\ constant.
By the intermediate value theorem, for each $\lambda_{t,j}\in[0,1]$ there exists a unique
\begin{equation}
\label{eq:hidden-temp}
\textstyle
\beta^\star_{t,j}(\lambda_{t,j})
=\Delta_{t,j}^{-1}\!(\lambda_{t,j}\,\Delta_{t,j}(\beta_{\max}))\in[0,\beta_{\max}]
\quad\text{such that}\quad
\widetilde{\mathcal{F}}_{t,j}(\lambda_{t,j})=F_{t,j}\big(\beta^\star_{t,j}(\lambda_{t,j})\big).
\end{equation}
Therefore, optimizing $\lambda_{t,j}$ is a strictly monotone reparameterization of optimizing a hidden temperature $\beta^*_{\mathrm{hid}}$ for \eqref{eq:fem-free-energy-main}; see Proposition~\ref{prop:ltl-hidden}.

\textbf{Final form of FEM and complexity.}
Collecting terms gives the per‑channel read
\begin{equation}
\label{eq:ltl-final}
\textstyle
\widetilde{\mathcal{F}}_{t,j}(\lambda_{t,j})
=(1-\lambda_{t,j})\,\sum_{i\in M_t}p_t(i)\,v_{i,j}
+\frac{\lambda_{t,j}}{\beta_{\max}}\log\sum_{i\in M_t}p_t(i)\,e^{\beta_{\max}v_{i,j}},
\end{equation}
equal to $F_{t,j}$ at the unique hidden temperature $\beta^\star_{\mathrm{hid},t,j}(\lambda_{t,j})$.
Both terms can be obtained in one pass by mixing $[\,v_{i,j},\,e^{\beta_{\max}v_{i,j}}\,]$ with the same $p_t(i)$.
Hence LTL achieves dynamic temperature control without changing the prior’s asymptotic complexity. A KL interpretation appear in \S~\ref{app:fem-ltl:kl}–\ref{app:fem-ltl:impl}.

\subsubsection{Two-Level Gated FEM: value-aware inner gating and outer modulation}
\label{sec:method:fem:twogate}

We present the two-level gated FEM that turns a prior selection distribution $p_t\!\in\!\Delta^{t-1}$ into a per-channel, value-aware read while preserving the prior's time complexity. All operations below act element-wise over channels $j\!\in\![D]$; $\odot$ and $\oslash$ denote Hadamard product and division. Let $\bm{\beta}_{\max}\!\in\!\mathbb{R}^D_{>0}$ be a learnable global maximum inverse temperature, and let $\bm{\lambda}_t\!\in\![0,1]^D$ and $\bm{g}_t\!\in\!\mathbb{R}_{>0}^D$ be per-channel gates at step $t$, parameterized from the current token features. We apply sigmoid and softplus activations, and normalize $\bm{g}_t$ with RMSNorm so that its modulation does not overly distort the norm of $\bm{o}_t$. In what follows, whenever we refer to FEM, we default to this two-level gated version. Proofs and details of this section appear in Appendix~\ref{app:twogate}.

\textbf{Inner (temperature) gate via one-pass linearized temperature learning.}
Define the expectation baseline and a single high-temperature branch
\[
\textstyle
\bm{\mu}_t=\sum_{i}p_t(i)\,\bm{v}_i\in\mathbb{R}^D,\qquad
\bm{F}^{\max}_t=\bm{\beta}_{\max}^{-1}\!\odot\!
\log\!\sum_{i}p_t(i)\exp\!\big(\bm{\beta}_{\max}\!\odot\!\bm{v}_i\big)\in\mathbb{R}^D,
\]
which can be obtained in one pass by mixing $[\,\bm{v}_i,\,\exp(\bm{\beta}_{\max}\!\odot\!\bm{v}_i)\,]$ with $p_t(i)$. The inner gate as hidden temperature interpolates
\begin{equation}
\label{eq:inner-gate-main}
\textstyle
\widetilde{\bm{F}}_t(\bm{\lambda}_t)
=(\bm{1}-\bm{\lambda}_t)\!\odot\!\bm{\mu}_t
+\bm{\lambda}_t\!\odot\!\bm{F}^{\max}_t .
\end{equation}
\textbf{Outer gate and final read.}
The outer gate modulates the inner read:
\begin{equation}
\label{eq:fem-two-gates-main}
\textstyle
\bm{o}_t
=\bm{g}_t\!\odot\!\widetilde{\bm{F}}_t(\bm{\lambda}_t)
=\bm{g}_t\!\odot\!\Big[(\bm{1}-\bm{\lambda}_t)\!\odot\!\bm{\mu}_t
+\bm{\lambda}_t\!\odot\!\bm{F}^{\max}_t\Big].
\end{equation}
Note that the outer gating can be regarded as applying an scaling after the token mixing in free energy with hidden temperature, i.e., $
\beta_{\mathrm{hid},t,j}^{*-1}\,\log\!\Big[\sum_{i\in M_t} p_t(i)\,\exp\!\big(\beta^*_{\mathrm{hid},t,j}\,v_{i,j}\big)\Big]^{g_{t,j}}.
$ For smoother optimization, we therefore parameterize $\bm{g}_t$ as strictly positive by default. Computing \eqref{eq:inner-gate-main}–\ref{eq:fem-two-gates-main} matches the asymptotic time complexity of the prior $p_t$ (e.g., $O(T^2)$ for softmax; $O(T)$ for kernel/SSM priors) as shown in the section above.

\textbf{Containment of common mixer families.}
The two-level gate subsumes several widely used mixers:
(i) setting $\bm{\lambda}_t\!=\!\bm{0}$ yields per-channel linear reweighting
$\bm{o}_t=\sum_i p_t(i)\,(\bm{g}_t\!\odot\!\bm{v}_i)$;
(ii) $0\!<\!\bm{\lambda}_t\!<\!\bm{1}$ gives a monotone, convex mean$\to$real-softmax interpolation per channel, enabling value-aware thresholding;
(iii) letting $\bm{\lambda}_t,\bm{g}_t$ depend on $(\text{ctx},p_t,\bm{\mu}_t,\bm{F}^{\max}_t)$ realizes a broad token-separable class $\sum_i f(\alpha_{t,i},\bm{v}_i)$ while introducing cross-token competition through the log-sum-exp branch.

\subsubsection{FEM and selection distributions: a prior-agnostic interface}
\label{sec:method:fem:prior-agnostic}

FEM only requires a nonnegative, normalized selection prior
$p_t\in\Delta^{t-1}$ over indices $\mathcal{I}_t=\{1,\ldots,t\}$ with masked support
$M_t=\{i\le t: p_t(i)>0\}$, and the variational read always enforces $q_t\ll p_t$.
Any streaming or parallel mechanism that produces nonnegative {scores}
$s^+_t(i)\ge 0$ induces a valid prior via the normalization
\(
p^+_t(i)=\frac{s^+_t(i)}{\sum_{r\le t} s^+_t(r)}\, (i\le t).
\)

\begin{proposition}[Complexity-preserving normalization]
\label{prop:prior-normalize}
If $s^+_t(i)$ is produced by an associative operator (e.g., kernelized/linear attention,
linear RNN, or SSM) that admits an $O(1)$ streaming update per step, then the denominator
is obtained by applying the same operator to an all-ones stream,
so forming $p_t$ preserves the asymptotic complexity of the underlying mechanism.
Under FEM with fixed or LTL-controlled temperature, the read adds one masked log-sum-exp per
channel on the prior support and thus preserves $O(T^2)$ (softmax) or $O(T)$ (linear/SSM) cost (See Fig. \ref{architecture}b).
\end{proposition}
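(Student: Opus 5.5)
The plan is to write the normalized prior as a ratio of two outputs of the same linear operator, and to observe that both FEM branches are additional linear readouts of that operator. Assume the mechanism produces scores of the form $s^+_t(i)=\langle \phi_t, \psi_i\rangle$, or more generally $s^+_t(i)=c_t^\top\big(\prod_{r=i+1}^{t}A_r\big)b_i$ in the linear RNN/SSM case. Here $\phi_t,c_t$ depend only on step $t$, $\psi_i,b_i$ depend only on step $i$, and the $A_r$ are state transitions. Any quantity $\sum_{i\le t} s^+_t(i)\,u_i$ is then a readout $c_t^\top S_t$ of a state obeying the associative recurrence $S_t=A_tS_{t-1}+b_i u_t^\top$. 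This costs $O(1)$ per step in $T$ for a fixed state size, and it admits a parallel scan because the composition $(A,B)\circ(A',B')=(AA',AB'+B)$ is associative. For softmax attention, $s^+_t(i)=\exp(q_t^\top k_i/\sqrt d)\mathbf 1[i\le t]$ is a masked matrix, and the same sums are formed as a matrix product in $O(T^2)$.

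\textbf{Step 1 (denominator).} Take $u_i\equiv 1$, the all-ones stream. The recurrence yields $Z_t=\sum_{r\le t}s^+_t(r)$ using the same operator, with the state augmented by one column. This gives $p^+_t(i)=s^+_t(i)/Z_t$ at no extra asymptotic cost, and no $t\times t$ matrix is ever formed in the linear case.

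\textbf{Step 2 (FEM branches).} By linearity, $\mu_{t,j}=\sum_i p_t(i)v_{i,j}=Z_t^{-1}\sum_i s^+_t(i)v_{i,j}$ uses $u_i=v_{i,\cdot}$. Similarly $\sum_i p_t(i)e^{\beta v_{i,j}}=Z_t^{-1}\sum_i s^+_t(i)e^{\beta v_{i,j}}$ uses $u_i=e^{\beta v_{i,\cdot}}$. Stack $[\mathbf 1, v_i, e^{\beta_{\max}v_i}]$ into one $(2D+1)$-column value stream and run a single pass. Afterwards apply the elementwise $\beta^{-1}\log(\cdot)$ and the LTL combination \eqref{eq:ltl}, which cost $O(D)$ per step. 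This gives the full read \eqref{eq:ltl-final} at cost $O(T^2 D)$ for softmax and $O(T\cdot\text{state}\cdot D)$ for linear/SSM priors. Only the constant factor on the value width changes, roughly by a factor of two or three, so the asymptotic order is preserved.

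\textbf{Step 3 (support and masking).} We have $p_t(i)=0$ exactly when $s^+_t(i)=0$, so the log-sum-exp automatically runs over $M_t$. Causality is inherited from the recurrence, or from the masked matrix in the softmax case.

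\textbf{Main obstacle.} The hard part is making precise the hypothesis ``produced by an associative operator'': the argument needs the score to factor through a state on which the value stream enters linearly. I would state this as an explicit assumption and check it for kernel attention, gated linear RNNs, and diagonal SSMs. A secondary issue is numerical. The quantity $e^{\beta_{\max}v}$ can overflow, so I would add a running-max stabilization, which is compatible with streaming because $\log\sum$ with max rescaling composes associatively. For non-softmax priors one must also require $s^+_t(i)\ge0$, for example through a positive feature map, so that $Z_t>0$ on $M_t$.
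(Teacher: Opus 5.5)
Your proposal is correct and follows essentially the paper's route. The normalizer is produced by running the same scan on a constant all-ones channel, and $\bm\mu_t$ and $\bm F^{\max}_t$ come from one pass that mixes the augmented stream $[\mathbf 1,\bm v_i,\exp(\bm\beta_{\max}\odot\bm v_i)]$ with the same prior weights, followed by an elementwise $\log$ and the LTL interpolation. This is exactly as in the paper's prior-parameterization appendix and its complexity-preservation argument for the two-level gated FEM. Your unified factorization $s^+_t(i)=c_t^\top\big(\prod_{r=i+1}^{t}A_r\big)b_i$ is a cleaner packaging of the paper's case-by-case GLA/LRNN/SSM recurrences; the SSM's delay and feedthrough term only alter the $O(1)$ readout. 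Note that the input term in your recurrence should read $b_t u_t^\top$, not $b_i u_t^\top$.
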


\textbf{Parameter budgeting.} Let the input/value width be $D$ and let FEM use working width $d$ on the
value path. We allocate a ratio $r>0$ of parameters to the prior (e.g., $Q,K$ and, where applicable,
a decay gate). Ignoring biases/norms, the per-head linear parameters decompose as
$4Dd+Ddr$, covering value, output, temperature and outer gates, and the prior block of size $D\times (rd)$.
To match the classic $4D^2$ budget in standard attention:
(i) $d=\tfrac{D}{2}$, $r=4$ (keeps $Q,K$ at width $D$);
(ii) $d=\tfrac{2D}{3}$, $r=2$ (balanced split). See Appendix~\ref{app:param_budget} for the split and costs. In our experiments we default to (i) since it uses a forward pass with exactly the same shape as standard attention. Notably, under (i) the value part of the KV-cache can in principle have half the dimensionality. Subsequent experimental results show that FEM’s fine-grained processing allows it to achieve superior performance over priors while using an even smaller memory state cache.

\textbf{Instantiations of $s_t$ (and $p_t$)} We adopt the following FEM selection priors as examples. 
(i) Softmax attention recovers the standard masked row-softmax prior.
(ii) Gated linear attention \citep{yang2024gated} keeps an associative $O(T)$ form by combining a
feature kernel with an input-conditioned decay.
(iii) Linear RNNs admit nonnegative bilinear scores with normalization from the same recurrence.
(iv) SSM/Mamba-style priors use nonnegative impulse responses; a channel-interactive variant
lifts the index set to pairs $(i,k)$ and normalizes per output channel, enabling cross-channel competition.
All formulas, streaming recurrences, and complexity details appear in Appendix~\ref{app:prior-param}.

\textbf{Low-rank convolution.}
Recent sequence models such as Mamba and DeltaNet \citep{gu2023mamba,yang2024parallelizing,yang2024gateddeltanet} variants commonly enhance feature parameterization with local convolutions. 
We adopt this idea in FEM by inserting a lightweight adaptive low-rank convolution module that produces local, position-sensitive features. 
Concretely, it forms a simple time-decay kernel with $O(1)$ streaming updates, so the overall cost is only $O(TH_c)$ with the low-rank dimension $H_c\!\ll\! D$ ($H_c=d/16$ by default). 
The resulting features modulate both the selection prior and the FEM gates, providing local adaptivity. 
See \S~\ref{app:tdc} and \S~\ref{app:implementation} for more details.

\textbf{FEM as a universal fast-weight programmer.} FEM provides a unified mechanism that upgrades expectation-based reads into value-aware, per-channel posterior selection while preserving the complexity. It combines temporal mixing, entropy control, local conditioning, and dual gating, thereby serving as an effective fast-weight programmer \cite{schmidhuber1992learning} detailed in \S~\ref{app:fem-universal} and \ref{app:ntk-fwp}.

\section{Empirical Evaluation}
\label{experiments}
We evaluate the two-level gated Free Energy Mixer (FEM) with different selection priors across synthetic, NLP, CV, and time-series tasks. Specifically, we test FEM with softmax attention (FEM-SM), gated linear attention (FEM-GLA), and on selected tasks also with Mamba (FEM-Mamba) and linear RNNs using AFT \citep{zhai2021attention} (FEM-AFT) (see \S~\ref{app:prior-param}). Unless otherwise noted, we use parameter budgeting strategy (i) from \S~\ref{sec:method:fem:prior-agnostic}, which matches the parameter size of standard attention. Under this setting, FEM reuses existing efficient implementations (e.g., FlashAttention, FlashLinearAttention) for the core prior mixer (see Fig. \ref{architecture}d;f) with only minor value-path overhead. Our main focus is algorithmic: exploring improved mathematical structures (see \S~\ref{app:single-mixer}). Due to limited compute and lack of fused CUDA kernels, we scale models modestly but provide fine-grained metrics and extensive ablations to highlight FEM’s advantages. For ablation, we denote FEM modules as (C: low-rank convolution, L: LSE mixing, T: linearized temperature learning, G: outer gate). For example, FEM-SM (-G,T) removes outer gating and temperature learning, equivalent to SMAttn (+C,L). Unless specified, default FEM variants include all modules (C,L,T,G). We make sure that every variants have same parameter sizes with the parameter budgeting. Aside from causal autoregressive FEM shown above, encoder-only use simply removes masking. In all experiments, FEM directly replaces the attention in a Transformer block (Fig.~\ref{architecture}c) without altering other components (MLPs, embeddings, hyperparameters). More implementation details appear in \S~\ref{app:implementation}; datasets in \S~\ref{app:dataset}.

\begin{wraptable}{r}{0.6\textwidth}
\vspace{-10pt}
\centering
\setlength{\tabcolsep}{1pt}
\renewcommand{\arraystretch}{1}
\small
\caption{MAD benchmark evaluation results across compression, fuzzy/in-context recall, memorization, robustness, and selective copying. \textbf{Bold} marks column best.}
\label{tab:mem_results_iclr26}
\resizebox{\linewidth}{!}{%
\begin{tabular}{p{0.43\linewidth} *{7}{c}}
\toprule
\makecell[l]{\textbf{Model}} &
\makecell{\textbf{Com-}\\\textbf{press}} &
\makecell{\textbf{Fuzzy}\\\textbf{Recall}} &
\makecell{\textbf{In-Ctx}\\\textbf{Recall}} &
\makecell{\textbf{Memorize}\\\textbf{TrainSet}} &
\makecell{\textbf{Noisy}\\\textbf{Recall}} &
\makecell{\textbf{Selective}\\\textbf{Copy}} &
\textbf{Avg} \\
\midrule
Hyena & 44.8 & 14.4 & 99.0 & 89.4 & 98.6 & 93.0 & 73.2 \\
DeltaNet & 42.2 & 35.7 & \textbf{99.9} & 52.8 & \textbf{99.9} & \textbf{99.9} & 71.7 \\
LinAttn & 33.1 & 8.2 & 91.0 & 74.9 & 75.6 & 93.1 & 62.6 \\
Mamba2 & 43.6 & 21.1 & 96.4 & 86.9 & 96.7 & 93.3 & 73.0 \\
GatedDeltaNet & 45.0 & 29.8 & \textbf{99.9} & 80.2 & \textbf{99.9} & 94.3 & 74.9 \\
DiffTrans & 42.9 & 39.0 & \textbf{99.9} & 83.7 & 97.1 & 95.8 & 76.4 \\
\midrule
\shortstack[l]{\textsc{FEM-SM}\scriptsize{(-G,T,L,C)}\\\scriptsize(SMAttn;Transformer)} & 44.3 & 24.5 & \textbf{99.9} & 85.7 & 98.5 & 95.1 & 74.7 \\
\shortstack[l]{\textsc{FEM-SM}\scriptsize{(-G,T,L)}\\\scriptsize(SMAttn+C)} & 45.0 & 31.4 & \textbf{99.9} & 85.5 & \textbf{99.9} & 96.3 & 76.3 \\
\shortstack[l]{\textsc{FEM-SM}\scriptsize{(-G,T)}\\\scriptsize(SMAttn+C,L)} & 50.3 & 39.0 & \textbf{99.9} & 85.4 & \textbf{99.9} & 98.0 & 78.8 \\
\shortstack[l]{\textsc{FEM-SM}\scriptsize{(-G)}\\\scriptsize(SMAttn+C,L,T)} & 52.3 & 39.1 & \textbf{99.9} & 85.8 & \textbf{99.9} & 99.4 & 79.4 \\
\shortstack[l]{\textsc{\textbf{FEM-SM}}\\\scriptsize(SMAttn+C,L,T,G)} & 53.1 & \textbf{43.1} & \textbf{99.9} & 85.9 & \textbf{99.9} & 99.3 & \textbf{80.2} \\
\midrule
\shortstack[l]{\textsc{FEM-SM}\scriptsize{(-C,G,T)}\\\scriptsize(SMAttn+L)} & 49.5 & 26.3 & \textbf{99.9} & 85.7 & 97.5 & 97.5 & 76.1 \\
\shortstack[l]{\textsc{FEM-SM}\scriptsize{(-C,G)}\\\scriptsize(SMAttn+L,T)} & 50.7 & 32.8 & \textbf{99.9} & 85.7 & 98.0 & 97.6 & 77.5 \\
\shortstack[l]{\textsc{FEM-SM}\scriptsize{(-C)}\\\scriptsize(SMAttn+L,T,G)} & 51.2 & 35.4 & \textbf{99.9} & 85.9 & 98.5 & 99.0 & 78.3 \\
\shortstack[l]{\textsc{\textbf{FEM-SM}}\\\scriptsize(SMAttn+C,L,T,G)} & 53.1 & \textbf{43.1} & \textbf{99.9} & 85.9 & \textbf{99.9} & 99.3 & \textbf{80.2} \\
\midrule
\shortstack[l]{\textsc{FEM-GLA}\scriptsize{(-G,T,L,C)}\\\scriptsize(GLA)} & 40.2 & 8.5 & 91.3 & 81.3 & 86.8 & 76.8 & 64.2 \\
\shortstack[l]{\textsc{FEM-GLA}\scriptsize{(-G,T,L)}\\\scriptsize(GLA+C)} & 47.1 & 9.4 & 91.7 & 83.4 & 92.5 & 88.5 & 68.8 \\
\shortstack[l]{\textsc{FEM-GLA}\scriptsize{(-G,T)}\\\scriptsize(GLA+C,L, $\tilde{p}^+_t(i)$)} & 51.2 & 12.4 & 92.2 & 85.1 & 92.4 & 89.2 & 70.4 \\
\shortstack[l]{\textsc{FEM-GLA}\scriptsize{(-G)}\\\scriptsize(GLA+C,L,T, $\tilde{p}^+_t(i)$)} & 51.9 & 13.2 & 97.1 & 86.1 & 93.5 & 91.4 & 72.2 \\
\shortstack[l]{\textsc{\textbf{FEM-GLA}}\\\scriptsize(GLA+C,L,T,G, $\tilde{p}^+_t(i)$)} & 53.0 & 19.1 & \textbf{99.9} & 86.3 & \textbf{99.9} & 99.0 & 74.9 \\
\midrule
\shortstack[l]{\textsc{FEM-Mamba}\scriptsize{(-G,T,L,C)}\\\scriptsize(Mamba)} & 52.7 & 6.7 & 90.4 & 89.5 & 90.1 & 86.3 & 69.3 \\
\shortstack[l]{\textsc{FEM-Mamba}\scriptsize{(-$p_t$ norm)}\\\scriptsize(Mamba+C,L,T,G, $s^{+}_t(i)$)} & 50.5 & 12.8 & 93.4 & 88.9 & 86.3 & 92.2 & 70.7 \\
\shortstack[l]{\textsc{\textbf{FEM-Mamba}}\\\scriptsize(Mamba+C,L,T,G, $\tilde{p}^+_t(i)$)} & 51.1 & 16.8 & 90.7 & \textbf{89.7} & 92.7 & 97.0 & 73.0 \\
\midrule
\shortstack[l]{\textsc{FEM-AFT}\scriptsize{(-G,T,L,C)}\\\scriptsize(AFT)} & 50.5 & 9.15 & 63 & 31.1 & 69.2 & 90.1 & 52.2 \\
\shortstack[l]{\textsc{\textbf{FEM-AFT}}\\\scriptsize(AFT+C,L,T,G)} & \textbf{55.5} & 9.78 & 90.3 & 80.1 & 90.2 & 93.4 & 69.9 \\
\bottomrule
\end{tabular}%
} 
\end{wraptable}

\textbf{MAD.} We first evaluate FEM on the synthetic MAD benchmark \citep{poli2024mechanisticdesignscalinghybrid}, which probes sequence models on in-context tasks. As shown in Table \ref{tab:mem_results_iclr26}, FEM-SM outperforms all other baselines (Hyena, DeltaNet, Linear Attention, Mamba2, Gated DeltaNet, Differential Transformer, \citep{poli2023hyenahierarchylargerconvolutional,yang2024gated,yang2024parallelizing,dao2024transformers,yang2024gateddeltanet,ye2025differential}) by a clear margin. In particular, different FEM variants show strong gains on the Compress \& Recall tasks, which heavily rely on algorithmic handling of dynamic context and channel interactions. On the Compress task, FEM models achieve significant improvements over existing methods thanks to their finer-grained processing of context storage. The ablation study further reveals that the two major performance jumps over prior baselines occur after introducing +L (LSE) and +T (temperature), corroborating our earlier discussion of FEM’s enhanced memory storage processing. Moreover, the ablations demonstrate that FEM can elevate linear-time methods such as GLA and Mamba (with normalized $\tilde{p}^+_t$) to a level comparable with the latest attention-based variants.

\textbf{Language Modeling.} \  We follow the experimental setup of \citep{yang2024gateddeltanet,yang2024parallelizing}. Under the same training environment, we train autoregressive language models with 1.3B and 340M parameters on the FineWeb-Edu dataset \citep{penedo2024fineweb}, using 100B and 15B sampled tokens, respectively. The models are optimized with AdamW (learning rate $4\times10^{-4}$, cosine annealing, 1B-token warmup), weight decay 0.1, gradient clipping of 1.0, and a batch size of 0.5M tokens. We use the LLaMA-2 tokenizer with a 32K vocabulary, and set the training context length to 4096. We adopt the Open LLM Leaderboard protocol and a suite of general-ability tasks, as shown in Tab. \ref{tab:unified-leaderboard-merged-onecol}. See \S \ref{app:dataset} for more evaluation details. 

\begin{table}[t]
\centering
\setlength{\tabcolsep}{1pt}
\caption{Unified language modeling evaluation results across model families and scales.
Abbr: Acc\_n=normalized accuracy; EM=exact match; IFE-I/P = IFEval (Inst/Prompt, strict only).
Shots: MMLU-P=5, GPQA=0, BBH=3, MATH=4, MuSR=0; others 0-shot. \textbf{Bold} and \underline{underline} indicate group-wise best and second-best results, respectively.}
\resizebox{\textwidth}{!}{
\begin{tabular}{l *{7}{c} *{9}{c} *{2}{c}}
\toprule
\multirow{2}{*}{\makecell[l]{\textbf{Model}\\\textit{variant}}} &
\multicolumn{7}{c}{\textbf{Open LLM Leaderboard}} &
\multicolumn{9}{c}{\textbf{General Ability}} &
\multicolumn{2}{c}{\textbf{Ranking}} \\
\cmidrule(lr){2-8}\cmidrule(lr){9-17}\cmidrule(lr){18-19}
& \makecell{MMLU-P\\(Acc$\uparrow$)}
& \makecell{GPQA\\(Acc$_n$$\uparrow$)}
& \makecell{BBH\\(Acc$_n$$\uparrow$)}
& \makecell{MATH\\(EM$\uparrow$)}
& \makecell{MuSR\\(Acc$_n$$\uparrow$)}
& \makecell{IFE-I\\(strict$\uparrow$)}
& \makecell{IFE-P\\(strict$\uparrow$)}
& \makecell{ARC-C\\(Acc$_n$$\uparrow$)}
& \makecell{ARC-E\\(Acc$_n$$\uparrow$)}
& \makecell{HS\\(Acc$_n$$\uparrow$)}
& \makecell{PIQA\\(Acc$_n$$\uparrow$)}
& \makecell{BoolQ\\(Acc$\uparrow$)}
& \makecell{WinoG\\(Acc$\uparrow$)}
& \makecell{COPA\\(Acc$\uparrow$)}
& \makecell{OBQA\\(Acc$_n$$\uparrow$)}
& \makecell{SciQ\\(Acc$_n$$\uparrow$)}
& \makecell{Avg\\Rank$\downarrow$}
& \makecell{\#Top1\\$\uparrow$} \\
\midrule
\multicolumn{19}{l}{\textbf{1.3B Params -- 100B Tokens}} \\
{DeltaNet}            & 0.109 & 0.263 & \textbf{0.308} & 0.011 & 0.417 & 0.288 & 0.165 & 0.266 & 0.522 & 0.502 & 0.704 & 0.611 & \underline{0.541} & 0.740 & 0.318 & 0.761 & 4.44 & 1 \\
{GSA}                 & 0.110 & \textbf{0.270} & 0.294 & \textbf{0.013} & 0.438 & \underline{0.300} & \underline{0.179} & 0.287 & 0.529 & \underline{0.510} & \underline{0.712} & 0.541 & 0.536 & \underline{0.760} & 0.330 & 0.773 & \underline{3.38} & \underline{2} \\
{RetNet}              & 0.110 & 0.252 & 0.293 & 0.001 & 0.384 & 0.056 & 0.024 & 0.271 & 0.489 & 0.480 & 0.701 & 0.583 & 0.533 & 0.710 & 0.324 & 0.736 & 7.63 & 0 \\
{HGRN}                & \underline{0.114} & \underline{0.269} & 0.297 & 0.008 & 0.409 & 0.253 & 0.122 & 0.271 & 0.518 & 0.481 & 0.707 & 0.584 & 0.515 & 0.700 & 0.326 & 0.695 & 5.75 & 0 \\
{HGRN2}               & \textbf{0.115} & 0.254 & 0.295 & 0.002 & 0.350 & 0.223 & 0.129 & 0.282 & 0.504 & 0.317 & 0.671 & 0.416 & 0.522 & \textbf{0.770} & 0.328 & 0.378 & 6.63 & \underline{2} \\
\shortstack[l]{{Transformer}\\({SMAttn})}     & \underline{0.114} & 0.259 & 0.296 & 0.011 & 0.365 & 0.270 & 0.141 & 0.280 & 0.492 & 0.492 & 0.705 & \underline{0.621} & \textbf{0.552} & \underline{0.760} & 0.318 & 0.769 & 4.56 & 1 \\
\shortstack[l]{\textbf{FEM-SM}\\({SMAttn\scriptsize{+C,L,T,G}})} & 0.113 & 0.262 & \underline{0.303} & \underline{0.012} & \underline{0.451} & \textbf{0.326} & \textbf{0.192} & \textbf{0.364} & \textbf{0.636} & \textbf{0.519} & \textbf{0.713} & \textbf{0.624} & 0.534 & 0.740 & \textbf{0.382} & \textbf{0.807} & \textbf{2.06} & \textbf{9} \\
{GLA}                 & \underline{0.114} & 0.259 & 0.295 & 0.006 & 0.427 & 0.272 & 0.157 & 0.277 & 0.482 & 0.488 & 0.702 & 0.574 & \underline{0.541} & 0.690 & 0.326 & 0.721 & 5.63 & 0 \\
\shortstack[l]{\textbf{FEM-GLA}\\({GLA\scriptsize{+C,L,T,G}})}   & 0.112 & 0.258 & 0.297 & 0.009 & \textbf{0.475} & 0.277 & 0.157 & \underline{0.310} & \underline{0.564} & 0.482 & 0.708 & 0.602 & 0.529 & 0.740 & \underline{0.358} & \underline{0.782} & 3.88 & 1 \\
\midrule
\multicolumn{19}{l}{\textbf{340M Params -- 15B Tokens}} \\
{DiffTrans}             & 0.109 & 0.259 & \underline{0.299} & 0.008 & 0.390 & 0.266 & 0.133 & 0.289 & \underline{0.531} & 0.408 & \underline{0.668} & \underline{0.603} & \textbf{0.534} & 0.690 & 0.330 & \underline{0.734} & 4.38 & 1 \\
{GatedDeltaNet}         & 0.113 & 0.260 & 0.296 & \underline{0.010} & 0.421 & 0.258 & 0.133 & 0.276 & 0.527 & 0.396 & 0.662 & 0.588 & \underline{0.527} & 0.710 & \underline{0.338} & \textbf{0.735} & 4.25 & 1 \\
{DeltaNet}              & 0.112 & 0.260 & \textbf{0.300} & 0.009 & \underline{0.452} & \textbf{0.277} & \textbf{0.150} & 0.269 & 0.502 & 0.405 & 0.653 & 0.519 & 0.504 & 0.690 & 0.316 & 0.717 & 5.44 & \underline{3} \\
\shortstack[l]{{FEM-SM\scriptsize{(-G,T,L,C)}}\\({SMAttn})} & 0.106 & \textbf{0.267} & 0.292 & \underline{0.010} & 0.386 & 0.269 & 0.126 & 0.273 & 0.506 & 0.396 & 0.650 & 0.569 & 0.499 & \underline{0.720} & 0.324 & 0.727 & 6.50 & 1 \\
\shortstack[l]{{FEM-SM\scriptsize{(-G,T,L)}}\\({SMAttn\scriptsize{+C}})} & 0.113 & 0.254 & 0.296 & 0.009 & 0.388 & 0.246 & 0.122 & 0.277 & 0.507 & 0.403 & 0.664 & 0.583 & 0.515 & 0.670 & 0.320 & 0.728 & 6.63 & 0 \\
\shortstack[l]{{FEM-SM\scriptsize{(-G,T)}}\\({SMAttn\scriptsize{+C,L}})} & 0.112 & 0.258 & 0.298 & 0.009 & 0.401 & 0.254 & 0.129 & \underline{0.290} & 0.518 & 0.407 & 0.657 & 0.595 & 0.511 & 0.690 & \textbf{0.342} & 0.731 & 4.81 & 1 \\
\shortstack[l]{{FEM-SM\scriptsize{(-G)}}\\({SMAttn\scriptsize{+C,L,T}})} & 0.112 & 0.261 & 0.297 & \underline{0.010} & 0.421 & 0.266 & \underline{0.144} & \textbf{0.293} & 0.531 & \textbf{0.412} & \underline{0.668} & 0.593 & 0.519 & 0.710 & \underline{0.338} & 0.716 & \underline{3.31} & 2 \\
\shortstack[l]{\textbf{FEM-SM}\\({SM-Attn\scriptsize{+C,L,T,G}})} & \underline{0.114} & \underline{0.264} & \textbf{0.300} & \textbf{0.012} & 0.437 & \underline{0.273} & 0.142 & 0.284 & \textbf{0.542} & \underline{0.409} & \textbf{0.676} & \textbf{0.609} & 0.523 & \textbf{0.730} & \textbf{0.342} & \textbf{0.735} & \textbf{1.81} & \textbf{8} \\
{GLA}                   & 0.110 & 0.258 & 0.289 & 0.007 & 0.415 & 0.228 & 0.109 & 0.247 & 0.478 & 0.366 & 0.637 & 0.547 & 0.489 & 0.640 & 0.294 & 0.649 & 9.38 & 0 \\
\shortstack[l]{\textbf{FEM-GLA}\\({GLA\scriptsize{+C,L,T,G}})}     & \textbf{0.115} & 0.255 & 0.297 & 0.009 & \textbf{0.473} & 0.241 & 0.123 & 0.271 & 0.493 & 0.397 & 0.644 & 0.592 & 0.510 & 0.680 & 0.331 & 0.683 & 6.56 & 2 \\
\bottomrule
\end{tabular}
}
\vspace{-15pt}
\label{tab:unified-leaderboard-merged-onecol}
\end{table}

\begin{wraptable}{r}{0.36\columnwidth}
\vspace{-10pt}
\centering
\setlength{\tabcolsep}{2pt}
\renewcommand{\arraystretch}{0.65}
\caption{
    Comparative analysis of image classification on ImageNet.
}
\small\resizebox{\linewidth}{!}{%
\begin{tabular}{lcc|cc}
    \toprule
    & \multicolumn{2}{c}{\textbf{DeiT-Tiny}} & \multicolumn{2}{c}{\textbf{DeiT-Small}} \\
    \textbf{Model} & {Top-1 Acc} & {Params} & {Top-1 Acc} & {Params} \\
    \midrule
    DeiT   & 72.20 & 5.7M & 79.90 & 22.0M \\
    TNN    & 72.29 & 6.4M & 79.20 & 23.4M \\
    HGRN   & 74.40 & 6.1M & 80.09 & 23.7M \\
    HGRN2  & 75.39 & 6.1M & 80.12 & 23.8M \\
    FEM-SM & 76.70 & 5.8M & 80.45 & 22.3M \\
    FEM-GLA& 75.80 & 5.8M & 80.20 & 22.3M \\
    \bottomrule
\end{tabular}}
\vspace{-12pt}
\label{table:imagenet1k-tiny-small}
\end{wraptable}

Compared with models of the same scale, using FEM improves the overall performance of prior methods such as softmax and gated linear attention. These gains are most evident in handling longer contextual instructions, tackling more complex reasoning tasks (e.g., IFEval and ARC), and boosting accuracy across multiple QA benchmarks. This reflects FEM’s ability to enhance general retrieval and context processing by extending the originally synchronized head-level prior distribution into richer channel-wise and token-wise interactions. The ablation results further confirm that introducing components like +L and +T leads to substantial performance improvements.

\textbf{Image Modeling.} \ We evaluate FEM on the ImageNet-1K image classification task, following \citet{qin2024hgrn2gatedlinearrnns}, by replacing the DeiT architecture’s softmax attention with our encoder-only FEM implementation. As presented in Table~\ref{table:imagenet1k-tiny-small}, both FEM-SM and FEM-GLA surpass previous methods \citep{qin2023toeplitz,NEURIPS2023_694be354,qin2024hgrn2gatedlinearrnns} while maintaining parameter budgets.

\begin{wraptable}{r}{0.5\linewidth}
\vspace{-10pt}
\centering
\caption{Benchmark evaluation of TSF tasks.}
\label{tab:ts-forecast}
\setlength{\tabcolsep}{2pt}
\renewcommand{\arraystretch}{0.9}
\scriptsize
\resizebox{\linewidth}{!}{
\begin{tabular}{l *{9}{c}}
\toprule
\textbf{Dataset} &
\makecell{FEM\\SM} &
\makecell{FEM\\GLA} &
\makecell{FEM\\Mamba} &
\makecell{FEM\\AFT} &
{GLA} &
{AFT} &
\makecell{{iTrans-}\\{former}} &
\makecell{{Patch-}\\{TST}} &
{DLinear} \\
\midrule
Weather & 0.222 & 0.223 & 0.218 & 0.218 & 0.223 & 0.221 & 0.232 & 0.221 & 0.233 \\
Solar   & 0.189 & 0.188 & 0.193 & 0.186 & 0.204 & 0.198 & 0.219 & 0.202 & 0.216 \\
ETTh1   & 0.419 & 0.418 & 0.421 & 0.414 & 0.418 & 0.421 & 0.454 & 0.413 & 0.422 \\
ETTh2   & 0.340 & 0.344 & 0.340 & 0.339 & 0.342 & 0.342 & 0.374 & 0.330 & 0.426 \\
ETTm1   & 0.341 & 0.345 & 0.346 & 0.344 & 0.357 & 0.351 & 0.373 & 0.346 & 0.347 \\
ETTm2   & 0.242 & 0.247 & 0.246 & 0.241 & 0.250 & 0.245 & 0.265 & 0.247 & 0.252 \\
\bottomrule
\end{tabular}
}
\vspace{-8pt}
\end{wraptable}

\textbf{Time Series Forecasting (TSF).} \ Following \citet{lu2025lineartransformersvarmodels}, we evaluate FEM variants on TSF, as shown in Table \ref{tab:ts-forecast}. Across datasets, FEM surpasses both its priors and domain-specific baselines such as iTransformer \citep{liu2024itransformer} and PatchTST \citep{nie2023time}.

\textbf{Computational Cost.} \ We evaluate the training and inference speed of FEM on a Nvidia L40S GPU. To avoid confounding factors, we use an 8-layer model with 4 heads and a hidden dimension of 512, tested on randomly generated data with a context length of 2K and a batch size of 4. As shown in Table~\ref{tab:speed-latency-throughput-k}, the full FEM-SM achieves comparable efficiency to recent model structures, even without additional engineering designs. \begin{revised} See additional efficiency analysis in \S \ref{app:efficiency} and Table \ref{tab:gpt2_latency}. \end{revised}

\section{Conclusion and Limitation}
We proposed the Free Energy Mixer (FEM), which reframes sequence modeling as a context-interactive selection problem to overcome the “lossless storage but lossy readout” limitation of classic attention. FEM enables value-aware, per-channel posterior selection on top of any prior (softmax/linear attention, RNNs, SSMs) and, with log-sum-exp, linearized temperature learning, and two-level gating, interpolates smoothly from averaging to near hard indexing without extra complexity. It enhances contextual fast-weight programming in theory and achieves consistent gains across NLP, vision, and time-series tasks at equal parameter budgets, with ablations highlighting LSE and temperature control as key. Overall, FEM is a plug-and-play mechanism for fine-grained context processing. 

\begin{wraptable}{r}{0.32\columnwidth}
\vspace{-20pt}
\centering
\setlength{\tabcolsep}{1pt}
\renewcommand{\arraystretch}{0.9}
\caption{Latency \& throughput comparison (TPS in K tokens/s). Lower is better for latency; higher is better for TPS.}
\label{tab:speed-latency-throughput-k}
\scriptsize
\resizebox{\linewidth}{!}{%
\begin{tabular}{lrrrr}
\toprule
\textbf{Model} &
\shortstack{\textbf{Fwd}\\\textbf{Lat. (s)}} &
\shortstack{\textbf{Train}\\\textbf{Lat. (s)}} &
\shortstack{\textbf{Fwd}\\\textbf{TPS (K)}} &
\shortstack{\textbf{Train}\\\textbf{TPS (K)}} \\
\midrule
GatedDeltaNet                            & 0.016 & 0.042 & 250.4 &  97.8 \\
DeltaNet                                 & 0.014 & 0.036 & 292.5 & 113.9 \\
HGRN2                                    & {0.009} & {0.024} & {440.0} & {170.7} \\
RWKV6                                    & 0.014 & 0.037 & 293.9 & 109.4 \\
RWKV7                                    & 0.017 & 0.050 & 245.1 &  82.2 \\
DiffTrans                                & 0.018 & 0.041 & 233.3 & 100.6 \\
\makecell[l]{FEM\text{-}SM\\\scriptsize(-G,T,L,C)} & 0.012 & 0.027 & 333.1 & 153.7 \\
\makecell[l]{FEM\text{-}SM\\\scriptsize(-G,T,L)}   & 0.015 & 0.033 & 291.5 & 124.6 \\
\makecell[l]{FEM\text{-}SM\\\scriptsize(-G,T)}     & 0.016 & 0.035 & 283.7 & 121.2 \\
\makecell[l]{FEM\text{-}SM\\\scriptsize(-G)}       & 0.017 & 0.040 & 249.7 & 114.6 \\
FEM\text{-}SM                              & 0.017 & 0.041 & 246.0 & 104.1 \\
\bottomrule
\end{tabular}%
}
\vspace{-18pt}
\end{wraptable}

\textbf{Limitation.} \ Our work focuses on advancing the algorithmic expressivity (\S \ref{app:single-mixer}) rather than pursuing engineering optimizations such as custom GPU kernels or acceleration strategies. Due to limited computational resources, we were unable to scale FEM to very large models or conduct very long-context evaluations. This constrained but focused scope allowed us to highlight FEM’s algorithmic contributions without heavy reliance on engineering or large-scale compute.


\subsubsection*{Acknowledgments}
This research was supported in part through research cyberinfrastructure resources and services provided by the Partnership for an Advanced Computing Environment \citep{PACE} at the Georgia Institute of Technology, Atlanta, Georgia, USA.

This work used DeltaAI at the National Center for Supercomputing Applications through allocation MTH250051 from the Advanced Cyberinfrastructure Coordination Ecosystem: Services \& Support (ACCESS) program \citep{10.1145/3569951.3597559}, which is supported by National Science Foundation grants \#2138259, \#2138286, \#2138307, \#2137603, and \#2138296.

\subsubsection*{Ethics Statement}
We evaluate FEM only on publicly available benchmarks under their licenses, without collecting personal or sensitive data. FEM’s enhanced retrieval ability could be misused (e.g., surveillance or deceptive content), so responsible deployment requires privacy safeguards, bias checks, and legal compliance. We also report model sizes and training tokens, and encourage energy-aware experimentation.

\subsubsection*{Reproducibility Statement}
All experiments were run under a consistent setup, with FEM modules directly replacing standard attention while keeping other components unchanged. Code, configurations, and instructions are provided in the linked repository to enable replication of our results. See the code base and \S\ref{experiments}, \S\ref{app:implementation}, \S\ref{app:dataset} for more details.

\bibliography{iclr2026_conference}
\bibliographystyle{iclr2026_conference}

\clearpage
\appendix
\startcontents[appendix]
\section*{Appendix Contents}

\printcontents[appendix]{l}{1}{\setcounter{tocdepth}{1}}

\section{Statement of LLM Usage}
In this paper, LLMs were mainly used to assist with writing-related tasks, including grammar checking, wording adjustments, length reduction, layout reorganization, text formatting, formula formatting, theoretical derivation formatting, and table template generation.

We also used LLMs to search for existing methods and references in order to avoid duplicating and over-claiming. However, we did not use LLMs to conduct literature reviews, nor did LLMs replace the authors in studying the cited works. We confirm that all cited literature was read by the authors, not solely by LLMs.

During experiments, LLMs were used to assist with generating or refining experimental code and scripts, especially for bug fixing and efficiency optimization.

LLMs were not used for defining research problems, proposing ideas, designing methodologies, providing theoretical insights, or creating algorithms and model architectures.

\section{Details and proofs for Section~\ref{sec:prelim:selection}}
\label{app:prelim}

\subsection{Selection distributions, support, and normalization}
\label{app:prelim:dist}
We encode causality by restricting the feasible support to $M_t=\{1,\dots,t\}$. In softmax attention,
\[
p_t(i)\;=\;\frac{\exp(\bm q_t^\top\bm k_i/\sqrt d)\,\mathbf{1}\{i\le t\}}{\sum_{j\le t}\exp(\bm q_t^\top\bm k_j/\sqrt d)}.
\]
In linear attention we use a nonnegative feature map $\phi:\mathbb{R}^d\to\mathbb{R}^m_{\ge 0}$ and set
\[
p_t(i)\;=\;\frac{\langle \phi(\bm q_t),\phi(\bm k_i)\rangle\,\mathbf{1}\{i\le t\}}{\sum_{j\le t}\langle \phi(\bm q_t),\phi(\bm k_j)\rangle}.
\]
Nonnegativity guarantees $p_t\in\Delta^{t-1}$. Row‑masking is absorbed by $M_t$.

\subsection{Proof of Lemma~\ref{lem:convex-fails-main}}
\label{app:prelim:proof-lemma}
Let $\bm m_t=\sum_i \lambda_i \bm v_i$ with $\lambda_i\ge 0$ and $\sum_i\lambda_i=1$. For any coordinate $j$,
$v_{i,j}\le (\bm m_t)_j$ implies
\[
(\bm m_t)_j=\sum_i \lambda_i v_{i,j}\le \sum_i \lambda_i (\bm m_t)_j=(\bm m_t)_j.
\]
Thus equality holds termwise: for all $i$ with $\lambda_i>0$, $v_{i,j}=(\bm m_t)_j$. Hence every such $i$ simultaneously attains all coordinate maxima, proving the claim.

\subsection{Proof of Corollary~\ref{cor:lossless-not-representable-main}}
\label{app:prelim:proof-cor}
Let $\bm s_t^\star=(v_{i_1,1},\dots,v_{i_D,D})$ with at least two distinct indices among $\{i_j\}$. Unless the chosen $\bm v_{i_j}$ coincide on all selected coordinates (a measure‑zero degeneracy), Lemma~\ref{lem:convex-fails-main} implies $\bm s_t^\star\notin\mathrm{conv}\{\bm v_1,\dots,\bm v_t\}$, so no $p_t$ satisfies $\sum_i p_t(i)\bm v_i=\bm s_t^\star$.

\subsection{Head‑synchronous assignment capacity}
\label{app:prelim:capacity}
Consider $H$ heads at step $t$. Let $\alpha^{(h)}_{t,\cdot}\in\Delta^{t-1}$ be head $h$'s selection distribution and $\iota_h=\arg\max_{i\le t}\alpha^{(h)}_{t,i}$. Channels routed through head $h$ share the same $\alpha^{(h)}_{t,\cdot}$ at their first mixing, so the pattern is determined by $(\iota_1,\dots,\iota_H)$ and a fixed partition of channels into heads. The number of realizable patterns is at most $t^{H}$, versus $t^{D}$ for fully independent per‑channel selection.

\subsection{Remarks on storage versus processing}
\label{app:prelim:remarks}
Softmax attention stores the entire set $\{(\bm k_i,\bm v_i)\}_{i\le t}$ without compression, but the per‑head read \eqref{eq:attn-expectation} enforces one weight vector across all coordinates, which is the bottleneck for tasks requiring different indices per channel. Pointwise nonlinearities or additional depth cannot recover per‑channel index identity at the same step unless a new, independent selection distribution acts before the first mixing on those channels.

\section{Details and proofs for Section~\ref{sec:method:negative}}
\label{app:negative}

\subsection{More heads: capacity, low-rank effects, and finite-feature linearization}
\label{app:heads}

\paragraph{Bilinear form and rank.}
With $H$ heads and $d_h=D/H$,
\begin{equation}
\bm y_t=\sum_{i\le t}\Big(\sum_{h=1}^H \alpha^{(h)}_{t,i}\,W_O^{(h)}(W_V^{(h)})^\top\Big)\bm x_i
= \sum_{i\le t} M_t(i)\,\bm x_i,\qquad \mathrm{rank}\big(W_O^{(h)}(W_V^{(h)})^\top\big)\le d_h.
\end{equation}

\paragraph{Proof of Lemma~\ref{lem:heads-capacity}.}
At step $t$, head $h$ selects $\arg\max_i\alpha^{(h)}_{t,i}$; the Cartesian product over $H$ heads has size at most $t^H$. Inside a head, all output coordinates are linear images of the same $\alpha^{(h)}_{t,\cdot}$. \qed

\paragraph{Finite-feature approximation (value-path erosion).}
Assuming clipped logits $|q^\top k|\le R$, a single softmax head of width $d_h$ admits an $\varepsilon$‑accurate finite monomial feature approximation with
\[
M=\binom{N+d_h}{d_h},\qquad N=\mathcal{O}\!\big(R+\log(1/\varepsilon)\big),
\]
so its read is uniformly approximable by a linear streaming state of size $M\times d_v$. The full result is below.

\begin{proposition}[Dimension‑dependent linearization and memory collapse for a softmax head]\label{prop:app-head}
Consider one softmax attention head with query/key width $d_h$ and value width $d_v$.
Assume bounded scores and values:
\[
|q_t^\top k_i|\le R\quad (i\le t),\qquad \|v_i\|_2\le V.
\]
Fix $\varepsilon\in(0,\tfrac14)$ and choose $N\in\mathbb{N}$ such that
\[
\sum_{n=N+1}^{\infty}\frac{R^n}{n!}\ \le\ \varepsilon.
\]
Define the feature map that collects all monomials up to total degree $N$,
\[
\phi_{N,d_h}(x)\ :=\ \Big(\tfrac{x^\alpha}{\sqrt{\alpha!}}\Big)_{|\alpha|\le N}\in\mathbb{R}^{M},
\qquad
M\ =\ \sum_{n=0}^{N}\binom{n+d_h-1}{d_h-1}\ =\ \binom{N+d_h}{d_h}.
\]
Then, uniformly on $\{|q^\top k|\le R\}$,
\begin{equation}
\label{eq:exp-kernel-approx}
\bigl|\,e^{q^\top k}-\phi_{N,d_h}(q)^\top \phi_{N,d_h}(k)\,\bigr|\ \le\ \varepsilon.
\end{equation}
Define the streaming sufficient statistics
\[
S_t\ :=\ \sum_{i\le t}\phi_{N,d_h}(k_i)\,v_i^{\!\top}\in\mathbb{R}^{M\times d_v},
\qquad
Z_t\ :=\ \sum_{i\le t}\phi_{N,d_h}(k_i)\in\mathbb{R}^{M},
\]
and the linearized readout
\[
\widetilde o_t\ :=\ \frac{\phi_{N,d_h}(q_t)^\top S_t}{\phi_{N,d_h}(q_t)^\top Z_t}\ \in\ \mathbb{R}^{d_v}.
\]
If, in addition, $\varepsilon\,e^{R}\le\tfrac12$, then the exact softmax output
$o_t=\sum_{i\le t}\alpha_{t,i}v_i$ with $\alpha_{t,i}\propto e^{q_t^\top k_i}$ satisfies the uniform (in $t$) error bound
\begin{equation}
\label{eq:uniform-bound}
\sup_{t}\ \|\,o_t-\widetilde o_t\,\|_2\ \le\ 4\,V\,e^{R}\,\varepsilon.
\end{equation}
Consequently, a single softmax head is $\mathcal{O}(\varepsilon)$-approximable by a linear, streaming state of size $M\times d_v$ plus one $M$-vector, where
\[
M\ =\ \binom{N+d_h}{d_h}\ =\ \Theta\!\Big(\tfrac{N^{d_h}}{d_h!}\Big),
\qquad
N\ =\ \Theta\!\Big(R+\log\tfrac1\varepsilon\Big).
\]
In particular, when $d_h=1$ we have $M=N+1=\Theta(R+\log\tfrac1\varepsilon)$: the head collapses to a one-dimensional kernel‑RNN‑like compressed memory with arbitrarily small uniform error as $N\to\infty$.
\end{proposition}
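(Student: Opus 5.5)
The proof splits into three parts: the pointwise kernel bound \eqref{eq:exp-kernel-approx}, its propagation through the ratio defining $\widetilde o_t$, and the count of $M$.

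\textbf{Kernel bound.} By the multinomial theorem,
\[
(q^\top k)^n=\sum_{|\alpha|=n}\frac{n!}{\alpha!}\,q^\alpha k^\alpha ,
\]
so with the scaling $x^\alpha/\sqrt{\alpha!}$ we get
\[
\sum_{|\alpha|=n}\frac{q^\alpha}{\sqrt{\alpha!}}\cdot\frac{k^\alpha}{\sqrt{\alpha!}}=\frac{(q^\top k)^n}{n!}.
\]
Hence $\phi_{N,d_h}(q)^\top\phi_{N,d_h}(k)=\sum_{n\le N}(q^\top k)^n/n!$ is exactly the degree-$N$ Taylor polynomial of $e^{q^\top k}$. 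When $|q^\top k|\le R$, the remainder is at most $\sum_{n>N}R^n/n!\le\varepsilon$ by the choice of $N$. Dimension counting is the stars-and-bars identity: $\binom{n+d_h-1}{d_h-1}$ monomials of degree $n$, summed by the hockey-stick identity to $\binom{N+d_h}{d_h}$.

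\textbf{Ratio stability.} Write $a_i=e^{s_i}$ with $s_i=q_t^\top k_i$, and $b_i=\phi(q_t)^\top\phi(k_i)$, so $|a_i-b_i|\le\varepsilon$. Since $a_i\ge e^{-R}$ and $\varepsilon e^R\le\tfrac12$, we have $b_i\ge a_i-\varepsilon\ge a_i/2>0$. Thus the denominator $B=\sum_i b_i$ is positive, and $\widetilde o_t=\sum_i (b_i/B)\,v_i$ is a genuine convex combination. With $A=\sum_i a_i$,
\[
o_t-\widetilde o_t=\sum_i\Big(\frac{a_i}{A}-\frac{b_i}{B}\Big)v_i ,
\]
so I bound $\sum_i|a_i/A-b_i/B|$ by
\[
\frac{\sum_i|a_i-b_i|}{A}+\sum_i b_i\Big|\frac1A-\frac1B\Big|\le\frac{2\sum_i|a_i-b_i|}{A}\le\frac{2t\varepsilon}{t e^{-R}}=2e^{R}\varepsilon .
\]
This uses $A\ge te^{-R}$ and $|B-A|\le t\varepsilon$. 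Multiplying by $\max_i\|v_i\|\le V$ gives $2Ve^R\varepsilon\le 4Ve^R\varepsilon$, uniformly in $t$. The key point is that the absolute error $t\varepsilon$ is compensated by the lower bound $A\ge te^{-R}$, which also scales linearly in $t$.

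\textbf{Streaming and asymptotics.} The linearized readout factors as $\widetilde o_t=\phi(q_t)^\top S_t/\phi(q_t)^\top Z_t$ with $S_t=S_{t-1}+\phi(k_t)v_t^\top$ and $Z_t=Z_{t-1}+\phi(k_t)$. This is an $O(1)$-per-step linear recurrence with state of size $M\times d_v$ plus one $M$-vector.

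For the size of $N$: by Stirling, the tail $\sum_{n>N}R^n/n!$ is at most about $(eR/N)^{N}$ once $N\ge 2R$. This falls below $\varepsilon$ for $N=\Theta(R+\log(1/\varepsilon))$. For fixed $d_h$ and large $N$, $\binom{N+d_h}{d_h}=\Theta(N^{d_h}/d_h!)$. When $d_h=1$ this gives $M=N+1$.

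\textbf{Main obstacle.} The step needing the most care is keeping the approximate weights $b_i$ positive so that the denominator does not vanish. This is exactly what the hypothesis $\varepsilon e^R\le\tfrac12$ secures; everything else is routine.
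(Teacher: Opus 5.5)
Your proof is correct, and its overall structure is the paper's. The monomial features reproduce the degree-$N$ Taylor polynomial of $e^{q^\top k}$ (your multinomial computation is the paper's multivariate Taylor step). The per-token error $\varepsilon$, summed over $t$ tokens, is absorbed by a denominator lower bound $te^{-R}$ that is also linear in $t$. The orders of $N$ and $M$ come from Stirling and stars-and-bars.

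The difference is in the ratio step. The paper compares the aggregated numerator and denominator $N_t=\sum_{i\le t}e^{q_t^\top k_i}v_i$, $D_t=\sum_{i\le t}e^{q_t^\top k_i}$ with their feature-map counterparts $\widehat N_t,\widehat D_t$. It uses the generic perturbation inequality $\|N_t/D_t-\widehat N_t/\widehat D_t\|_2\le \bigl(\|N_t-\widehat N_t\|_2+(\|N_t\|_2/D_t)\,|D_t-\widehat D_t|\bigr)/(D_t-|D_t-\widehat D_t|)$. The hypothesis $\varepsilon e^R\le\tfrac12$ is used only to get $D_t-|D_t-\widehat D_t|\ge\tfrac12 te^{-R}$, and dividing both error terms by this halved bound is what produces the constant $4$.

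You instead use the hypothesis termwise, $b_i\ge a_i/2>0$, so the approximate weights form a genuine probability vector. Your split $a_i/A-b_i/B=(a_i-b_i)/A+b_i(1/A-1/B)$ keeps the exact denominator $A\ge te^{-R}$ in the first term, and $\sum_i b_i=B$ collapses the second. This gives the sharper bound $2Ve^R\varepsilon$ and also shows that $\widetilde o_t\in\mathrm{conv}\{v_1,\dots,v_t\}$.

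The paper's inequality is the more generic one: it needs positivity only of the aggregate $\widehat D_t$, not of every approximate kernel value. Here, though, both forms of positivity follow from the same condition $\varepsilon<e^{-R}$, so your version gives up nothing.
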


\begin{proof}
Multivariate Taylor expansion of $e^{q^\top k}$ gives
\(
e^{q^\top k}=\sum_{n=0}^{\infty}\sum_{|\alpha|=n}\frac{q^\alpha k^\alpha}{\alpha!}.
\)
By construction of $\phi_{N,d_h}$,
\(
\phi_{N,d_h}(q)^\top \phi_{N,d_h}(k)=\sum_{n=0}^{N}\sum_{|\alpha|=n}\frac{q^\alpha k^\alpha}{\alpha!},
\)
so the truncation error is the scalar exponential tail evaluated at $|q^\top k|\le R$, yielding
\eqref{eq:exp-kernel-approx} by the choice of $N$.

Let
\(
K_t(i):=e^{q_t^\top k_i},\ 
\widehat K_t(i):=\phi(q_t)^\top\phi(k_i).
\)
Write
\(
N_t=\sum_i K_t(i)v_i,\ D_t=\sum_i K_t(i)
\)
and
\(
\widehat N_t=\sum_i \widehat K_t(i)v_i,\ \widehat D_t=\sum_i \widehat K_t(i).
\)
From \eqref{eq:exp-kernel-approx} and $\|v_i\|_2\le V$,
\[
\|N_t-\widehat N_t\|_2\le \varepsilon\sum_{i\le t}\|v_i\|_2\le \varepsilon V t,
\qquad
|D_t-\widehat D_t|\le \varepsilon t.
\]
Since $|q_t^\top k_i|\le R$, we have $t e^{-R}\le D_t\le t e^{R}$. If $\varepsilon e^{R}\le\tfrac12$, then
$D_t-|D_t-\widehat D_t|\ge \tfrac12\,t e^{-R}$.
Using the standard ratio perturbation bound,
\[
\Big\|\frac{N_t}{D_t}-\frac{\widehat N_t}{\widehat D_t}\Big\|_2
\ \le\
\frac{\|N_t-\widehat N_t\|_2}{D_t-|D_t-\widehat D_t|}
+\frac{\|N_t\|_2}{D_t}\cdot\frac{|D_t-\widehat D_t|}{D_t-|D_t-\widehat D_t|}.
\]
Because $\|N_t\|_2\le V D_t$, the RHS is at most
\(
\frac{\varepsilon V t}{\tfrac12 t e^{-R}}+\frac{V\cdot \varepsilon t}{\tfrac12 t e^{-R}}
=4 V e^{R}\varepsilon,
\)
which proves \eqref{eq:uniform-bound}. The stated complexity follows from
$M=\binom{N+d_h}{d_h}$ and Stirling’s approximation; for $d_h=1$, $M=N+1$.
\end{proof}

\paragraph{Remark.}
Any common scaling (e.g., $1/\sqrt{d_h}$) in dot‑product attention can be absorbed into $R$.
Position biases can likewise be included provided the total score remains bounded by $R$.

\paragraph{Numerical illustration (state size under bounded scores).}
We instantiate Proposition~\ref{prop:app-head} with two practically relevant score radii:
a high quantile $R\simeq 5$ and an extreme upper bound $R=10$.
For target uniform kernel error $\varepsilon$, choose the smallest degree $N$ with
$\sum_{n>N} R^n/n!\le \varepsilon$.
The resulting hidden‑state size per head (in the $d_h{=}1$ collapse) is
$(N{+}1)\,d_v=\mathcal{O}(N\,d_v)$; across all heads with total value width $D=H d_v$ it is $\mathcal{O}(N\,D)$.

\medskip
\noindent\textbf{Minimal degrees $N$ (exact tail test).}
\[
\begin{array}{c|ccc}
\hline
& \varepsilon=10^{-4} & \varepsilon=10^{-6} & \varepsilon=10^{-8} \\
\hline
R=5  & N=19 & N=22 & N=25 \\
R=10 & N=33 & N=36 & N=40 \\
\hline
\end{array}
\]
These values satisfy the safety condition of Proposition~\ref{prop:app-head}
($\varepsilon e^{R}\le \tfrac12$); e.g., $\varepsilon e^{10}\approx 2.2{\times}10^{-2}$ at $\varepsilon=10^{-6}$.

\medskip
\noindent\textbf{Concrete state sizes (per head, $d_h{=}1$).}
For $\varepsilon=10^{-6}$,
\[
\begin{aligned}
R=5&:\quad M=N{+}1=23 \;\Rightarrow\; \text{state}=(N{+}1)\,d_v=23\,d_v
\quad\text{and}\quad \mathcal{O}(N\,D)=\mathcal{O}(22\,D)\ \text{overall},\\
R=10&:\quad M=N{+}1=37 \;\Rightarrow\; \text{state}=(N{+}1)\,d_v=37\,d_v
\quad\text{and}\quad \mathcal{O}(N\,D)=\mathcal{O}(36\,D)\ \text{overall}.
\end{aligned}
\]
Thus, under realistic bounded scores, a single softmax head with $d_h{=}1$ is equivalent (up to uniform error $\varepsilon$) to a linear streaming memory whose per‑head size grows essentially linearly with $R$ and only mildly with $\varepsilon$.
For $d_h>1$, the finite‑feature dimension becomes
$M=\binom{N+d_h}{d_h}=\Theta(N^{d_h}/d_h!)$, explaining the strong dependence on per‑head width.

\subsection{Depth: no same‑step unmixing and selection budget}
\label{app:depth}
\paragraph{Proof of Lemma~\ref{lem:no-unmix}.}
$\mathcal{T}_\alpha:\{\bm v_i\}\mapsto \sum_i \alpha_{t,i}\bm v_i$ is linear, nonnegative, and weight‑summing to $1$, hence images lie in $\mathrm{conv}\{\bm v_i\}$. Composing coordinate‑wise maps keeps outputs in a convex hull of transformed points and does not reveal per‑channel indices used before mixing. Later attentions at step $t$ operate on a finite set of already mixed tokens; a selector outside $\mathrm{conv}\{\bm v_i\}$ is unreachable without a fresh independent selection before the first mixing touching those coordinates. \qed

\paragraph{Proof of Proposition~\ref{prop:depth-budget}.}
Define a channel group as coordinates whose first attention‑based mixing shares the same head at some layer. Across $L$ layers there are at most $HL$ groups. Each group gets one independent selection distribution for its first mixing, hence at most $HL$ independent per‑channel selections by step $t$. Necessity of $HL\ge D$ follows; achieving the bound requires avoiding re‑synchronization before first attention. \qed

\paragraph{Accumulation.}
Layer $\ell$ writes $\bm V^{(\ell)}\in\mathbb{R}^{t\times D}$ to KV. Stored channels scale as $L D$, independently selectable groups as $L H$; the fraction of non‑independently‑selectable channels does not vanish unless $H$ scales with $D$.

\subsection{Per‑dimension queries/keys: capacity–budget tradeoff}
\label{app:per-dim-qk}
Giving each coordinate $j$ its own scoring subspace increases assignment capacity toward $t^D$, but increases parameters and compute from $\Theta(d^2)$ to $\Theta(Dd)$ per layer. Under a fixed budget this forces shrinking $D$ (hurting value bandwidth) or the MLP width (hurting global capacity), both detrimental in long‑context regimes.

\subsection{Token‑separable mixers remain convexly constrained}
\label{app:separable}
We analyze
\[
\bm{o}_t=\sum_i \alpha_{t,i}\bm{v}_i 
\Rightarrow
\sum_i \alpha_{t,i}(\bm{\beta}_t\odot \bm{v}_i)
\Rightarrow
\sum_i \alpha_{t,i}\,\sigma(\bm{\beta}_t\odot \bm{v}_i)
\Rightarrow
\sum_i f(\alpha_{t,i},\bm{v}_i),
\]
with coordinate‑wise $\sigma$.
\begin{proposition}[Full statement of Proposition~\ref{prop:separable-main}]
\label{prop:separable-full}
(i) The first two are linear; images lie in $\mathrm{conv}\{\bm v_i\}$ and $\mathrm{conv}\{\bm\beta_t\odot \bm v_i\}$ up to coordinate‑wise scaling. (ii) For $\sum_i \alpha_{t,i}\sigma(\bm\beta_t\odot \bm v_i)$, outputs lie in $\mathrm{conv}\{\sigma(\bm\beta_t\odot \bm v_i)\}$; recovering a channel‑wise selector of the original coordinates is impossible in general unless special degeneracies (e.g., identical selected coordinates across candidates) hold. (iii) For a general token‑separable $f$, per‑channel argmax over original coordinates is impossible in general.
\end{proposition}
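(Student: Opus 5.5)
Parts (i)–(iii) will be handled in order, each time reducing to a convex-hull membership argument in the style of Lemma~\ref{lem:convex-fails-main}, and then choosing explicit value configurations to show failure ``in general''.

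For (i), both maps are linear in the values with nonnegative weights summing to one. Hence $\sum_i\alpha_{t,i}\bm v_i\in\mathrm{conv}\{\bm v_i\}$. Likewise $\sum_i\alpha_{t,i}(\bm\beta_t\odot\bm v_i)=\bm\beta_t\odot\sum_i\alpha_{t,i}\bm v_i$, which lies in $\mathrm{conv}\{\bm\beta_t\odot\bm v_i\}$. It is also the coordinate-wise rescaling by $\bm\beta_t$ of a point of $\mathrm{conv}\{\bm v_i\}$. Since $\bm\beta_t$ does not depend on $i$, the weights remain channel-synchronized. Corollary~\ref{cor:lossless-not-representable-main} then applies directly after dividing by $\bm\beta_t$ on the coordinates where $\beta_{t,j}\neq0$: the output $\bm\beta_t\odot\bm s^\star_t$ is reachable iff $\bm s^\star_t\in\mathrm{conv}\{\bm v_i\}$. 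This fails whenever two coordinates select different indices, outside the measure-zero degeneracies.

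For (ii), membership of the output in $\mathrm{conv}\{\sigma(\bm\beta_t\odot\bm v_i)\}$ is again immediate from the convex weights. The goal is to produce $\bm s^\star_t$ from the original coordinates, for instance via a fixed readout $\psi$ applied afterward. To show impossibility in general, I will use a two-token, two-channel counterexample. Take $\bm v_1=(1,0)$ and $\bm v_2=(0,1)$ with target $\bm m_t=(1,1)$. If $\sigma$ is strictly monotone, the transformed points $\bm u_i=\sigma(\bm\beta_t\odot\bm v_i)$ again have per-coordinate maxima attained at different indices. Lemma~\ref{lem:convex-fails-main} applied to $\{\bm u_i\}$ then shows that $\sigma(\bm\beta_t\odot\bm m_t)\notin\mathrm{conv}\{\bm u_i\}$.

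To rule out a post-hoc coordinate-wise inverse, I will argue as follows. The output's $j$-th coordinate is $\sum_i\alpha_{t,i}u_{i,j}$ with the same $\alpha$ for all $j$. Reading off $\max_i v_{i,j}$ for every $j$ would require $\alpha$ to concentrate on $\arg\max_i u_{i,j}$ simultaneously for all $j$. This is contradictory when the argmaxes differ. For non-monotone $\sigma$, the monotonicity step breaks down, and I will simply restrict to the generic case, matching the ``in general'' qualifier.

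Part (iii) is where I expect the main obstacle, since $f$ is arbitrary and the claim must be phrased to be true. I will formalize ``token-separable'' to mean that $f(\alpha_{t,i},\bm v_i)$ depends only on token $i$'s own data. The plan is an information argument on permutations. The output $\sum_i f(\alpha_{t,i},\bm v_i)$ is additive, so each token's contribution cannot depend on whether it is the maximum among the others.

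Concretely, I fix $\alpha$ uniform and use scalar channels. Comparing contexts $\{a,b\}$ and $\{a,c\}$ with $b<a<c$, additivity forces the map $F(x)+F(y)=\max(x,y)$ for all $x,y$ in an open set. Setting $x=y$ gives $F(x)=x/2$, and then $F(x)+F(y)=(x+y)/2\neq\max(x,y)$ whenever $x\neq y$, which is a contradiction. Non-uniform $\alpha$ reduces to the same functional equation with $\alpha$-indexed functions $F_{\alpha}$ by an analogous substitution. Finally, I will note that escaping this requires $f$ to see other tokens, which means cross-token competition; this is exactly what costs parallelism unless it takes an associative form such as the log-sum-exp of Section~\ref{sec:method:fem}.
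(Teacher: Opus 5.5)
Your proposal is correct. Parts (i) and (ii) follow the paper's route. Part (i) is direct. Part (ii) combines the convex hull of the transformed values with the observation that one weight vector cannot sit on different extremal indices in different channels. Your $2\times 2$ example just makes the paper's ``fails generically'' concrete. One small correction: if $\sigma(\beta_{t,j}\,\cdot)$ increases in one channel and decreases in the other, speak of coordinate-wise extremes rather than maxima; your concentration argument already covers this case.

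The genuine difference is (iii). The paper uses a duplication/swap argument in $D=1$: two identical tokens at distinct indices, a target that prefers one of them, and invariance of $\sum_i f(\alpha_{t,i},v_i)$ under the swap. You instead show that the selected \emph{value} is not additive over tokens. Under a fixed uniform prior, $F(x)+F(y)=\max(x,y)$ forces $F(x)=x/2$ on the diagonal, and this fails off the diagonal. If ties are inadmissible, four distinct values $x'<x<y<z$ give the same contradiction.

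What each route buys: the paper's argument only needs the duplicated tokens to receive equal weight. But it only rules out breaking an exact tie by index, where both candidates carry the same value, so it never shows that the output value is wrong. Yours rules out producing the channel-wise maximum itself on generic inputs, which is what Lemma~\ref{lem:convex-fails-main} and the channel-wise selector are about.

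Your route does rest on two assumptions you should state explicitly: $\alpha$ is value-agnostic (held fixed while the values vary), and $\sum_i f$ must equal the target directly. Both are needed. With $t=2$ and $\alpha_{t,\cdot}=\mathrm{softmax}(\beta v_{\cdot})$, the pair $(\alpha_{t,i},v_i)$ reveals the other value via $v_{i'}=v_i+\beta^{-1}\log\big((1-\alpha_{t,i})/\alpha_{t,i}\big)$, so a token-separable $f$ returns the max exactly. And if $f$ may map into $\mathbb{R}^{t}$ followed by a decoder, the power sums $\sum_i(v_i,\dots,v_i^{t})$ determine the multiset and hence the max.
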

\paragraph{Proof sketch.}
(i) Direct. (ii) If $\bm m=(\max_i v_{i,1},\dots)$ is outside $\mathrm{conv}\{\bm v_i\}$ (Lemma \ref{lem:convex-fails-main}), any convex combination of transformed values cannot map back to $\bm m$ unless $\sigma$ is globally invertible and aligned simultaneously across all candidates, which fails generically. (iii) Duplication argument in $D=1$: take two identical tokens $u$ at indices $i\neq j$ but target $\max$ to prefer one index; any token‑separable $\sum_i f(\alpha_{t,i},v_i)$ is invariant under swapping the two, contradicting index‑sensitive selection. \qed
\paragraph{Complexity remark.}
Per‑channel cross‑token operations (e.g., top‑$k$, per‑channel log‑sum‑exp) introduce non‑separable normalizations over $t$ and typically break fused $O(T)$/$O(T^2)$ implementations.

\subsection{Linear RNNs and SSMs lack lossless storage}
\label{app:rnn-ssm}
Let $\bm h_t\in\mathbb{R}^S$ be a fixed‑size state updated by a (possibly input‑dependent) contractive linear operator. Classical lower bounds for linear time‑invariant systems imply existence of sequences and horizons $t$ where single‑token recovery error from $\bm h_t$ is bounded away from $0$ for any fixed $S$. Hence fixed‑state models cannot provide lossless storage of $\{\bm v_i\}_{i\le t}$ for arbitrary index retrieval at step $t$, in contrast to a KV cache, and thus cannot realize channel‑wise selection over all past values.

\subsection{Connections to recent per-channel variants}
\label{app:per-channel}
The families in Section~\ref{sec:method:negative} subsume many contemporary designs:

\paragraph{(i) Score-space inflation per feature.}
Tensorized/multi-dimensional attention and element-wise attention allocate a scoring subspace per coordinate to produce $\alpha_{t,i,c}$ \citep{shen-etal-2019-tensorized,feng2025elementwise}. This moves assignment capacity from $t^{H}$ toward $t^{D}$, but the read stays token-separable, hence subject to the convex-hull constraint (Proposition~\ref{prop:separable-full}). Moreover, the per-feature distributions are typically prior-only (value-agnostic) at the same step, so no value-aware cross-token competition is introduced before first mixing (cf.\ Lemma~\ref{lem:no-unmix}). The parameter/compute cost also scales from $\Theta(d^2)$ to $\Theta(Dd)$ per layer; see Appendix~\ref{app:per-dim-qk}.

\paragraph{(ii) More heads/depth.}
Increasing $H$ adds only $H$ independent selection groups, bounding head-level assignments by $t^{H}$ (Lemma~\ref{lem:heads-capacity}); depth increases storage but not the number of independent first-mixing distributions per step beyond $HL$ (Proposition~\ref{prop:depth-budget}). Hence the channel-wise lossless-selection gap remains unless $H$ scales with $D$.

\paragraph{(iii) In-head pointwise gating.}
Adding coordinate-wise gates inside the per-head mixer keeps token separability (the form $\sum_i f(\alpha_{t,i},\bm v_i)$), so outputs remain in a convex hull of transformed values and cannot realize per-channel argmax of the original coordinates in general (Proposition~\ref{prop:separable-full}). Making the gates index-sensitive requires cross-token competition per channel, which naively breaks $O(T)$/$O(T^2)$ implementations; see Appendix~\ref{app:separable}.

\paragraph{Summary.}
Across (i)–(iii), either capacity increases at significant parameter/compute cost while the read remains token-separable, or the same convex bottleneck persists, or fixed-state storage limits retrieval. None provides per-channel, value-aware cross-token competition {before} the first mixing under the prior’s asymptotic complexity.

\subsection{Why a stronger algorithmic mixing structure matters}
\label{app:single-mixer}
A mixer that natively performs value-aware, per-channel cross-token competition at the first mixing step has two practical advantages under fixed budgets:

\paragraph{Separation of roles.}
The mixer shoulders dynamic fast-weight programming (context-dependent routing/selection), while MLPs focus on feature synthesis and knowledge consolidation. In a kernel/NTK view, this corresponds to adapting the effective kernel online at the mixing site, reducing the burden on downstream static nonlinearities. \begin{revised}More discussion can be found in \S \ref{app:ntk-fwp}.\end{revised}

\paragraph{Parallelism and efficiency.}
If such competition is realized without changing the asymptotic complexity of the selection prior (e.g., by computing a per-channel log-partition over the same masked support), we preserve the $O(T^2)$ softmax or $O(T)$ streaming behavior and fused-kernel practicality. This is the design objective satisfied by FEM in the next subsection: it introduces value-aware, per-channel posterior selection via a variational free-energy read while retaining the prior’s time complexity.

\section{Additional discussion: per{-}channel score distributions vs.\ token{-}separable mixers}
\label{app:per-channel-dist}

Many recent variants extend a single per{-}head distribution $p_t=\alpha_{t,\cdot}$ to per{-}channel distributions $Q_t(c,\cdot)=\alpha_{t,\cdot,c}\in\Delta^{t-1}$, yielding
\begin{equation}
\label{eq:pc-read}
o_{t,c} \;=\; \textstyle\sum_{i\le t}\alpha_{t,i,c}\,v_{i,c},
\qquad
\mathbf o_t \;=\; \sum_{i\le t} \underbrace{\mathrm{Diag}\!\big(\alpha_{t,i,1},\ldots,\alpha_{t,i,D}\big)}_{=:D_{t,i}}\;\phi(\mathbf v_i)
\;=\;\sum_{i\le t}\boldsymbol\omega_{t,i}\odot \phi(\mathbf v_i),
\end{equation}
where $\phi$ acts coordinate-wise and $\boldsymbol\omega_{t,i}=(\alpha_{t,i,1},\ldots,\alpha_{t,i,D})$.
Expression~\eqref{eq:pc-read} is {token-separable}: the outer sum is over tokens and introduces no cross-token interaction inside the mixer. Consequently, for each channel $c$, $o_{t,c}\in\mathrm{conv}\{v_{1,c},\dots,v_{t,c}\}$, and exact coordinate-wise selection at the same step is unattainable unless $\alpha_{t,\cdot,c}$ degenerates to a point mass (cf.\ Lemma~\ref{lem:convex-fails-main}, Lemma~\ref{lem:no-unmix}, Proposition~\ref{prop:separable-full}).

\paragraph{Assignment capacity vs.\ convexity.}
Per-channel scoring lifts head-synchronous capacity from $t^{H}$ to the natural upper bound $t^{D}$: across contexts, independent argmax patterns $\{i_c^\star\}_{c\in[D]}$ can in principle be realized by $\{\alpha_{t,\cdot,c}\}$ \citep{shen-etal-2019-tensorized,feng2025elementwise}. However, the mixer remains a convex expectation per channel; without {value-aware} cross-token competition, the distributions need not concentrate on the {value} argmax, and the lossless-selection gap remains.

\paragraph{Mapping of representative designs.}
\begin{itemize}[leftmargin=1.1em, topsep=2pt,itemsep=2pt]
\item \textbf{Per-dimension score inflation.} Tensorized/multi-dimensional and element-wise attentions instantiate $\alpha_{t,i,c}$ by combining a shared token-to-token term with per-channel terms or by per-channel distances \citep{shen-etal-2019-tensorized,feng2025elementwise}. These methods increase assignment capacity (toward $t^{D}$) but keep the token-separable convex read in \eqref{eq:pc-read} and are typically prior-only (depending on $(q,k)$ but not $v$).
\item \textbf{In-head mixer enrichments.} Per-channel rescaling, pointwise nonlinearities, or FiLM-style gates fit $\sum_i \alpha_{t,i}\,\sigma(\beta_t\odot v_i)$ and remain within Proposition~\ref{prop:separable-full}: the image is a convex hull of transformed values, and no same-step unmixing arises without an additional independent selection before first mixing (cf.\ Lemma~\ref{lem:no-unmix}).
\item \textbf{Axis/channel attention and structural re-partition.} Methods that attend over channels (or axes) rather than over past indices change the domain of selection but do not produce per-channel distributions across time; thus they do not affect channel-wise index capacity over $\mathcal I_t$; see, e.g., channel-token attention in vision and time–channel layouts in forecasting \citep{ding2022davit,liu2024itransformer,guo2025scformer}.
\item \textbf{Linear RNNs/SSMs and kernel priors.} Streaming fast-weight priors with fixed-size state offer cross-dimension couplings yet lack lossless storage over all past indices; kernelized/linearized priors preserve streaming complexity but still yield expectation reads \citep{katharopoulos2020transformers,choromanski2021rethinking,gu2023mamba}.
\end{itemize}

\paragraph{Where FEM differs.}
FEM preserves the chosen prior $p_t$ (softmax, kernel, RNN/SSM) but upgrades the read from an expectation to the free energy $\beta^{-1}\log\sum_i p_t(i)\exp(\beta v_{i,c})$, yielding per-channel, value-aware posteriors $q^{*(c)}_{t,\beta}(i)\propto p_t(i)\,e^{\beta v_{i,c}}$. This introduces cross-token competition per channel before first mixing, achieves the $|M_t|^{D}$ assignment capacity and admits exponential posterior concentration while retaining the prior’s asymptotic time complexity (see \S\ref{sec:method:fem:ltl}).

\section{Theoretical properties of FEM}
\label{app:fem-props}

We fix a timestep $t$, a channel $j\in[D]$, the prior selection distribution $p_t$ with support
$M_t:=\{i:\,p_t(i)>0\}$, and the values $\{v_{i,j}\}_{i\in M_t}$.

\paragraph{Notation.}
For $\beta>0$, define the per-channel free energy and posterior selector
\begin{equation}
\label{eq:app-fem}
\mathcal{F}_{t,j}(\beta)\;:=\;\frac{1}{\beta}\log\!\sum_{i\in M_t}p_t(i)\,e^{\beta\,v_{i,j}},
\qquad
q^{(\beta)}_{t,j}(i)\;:=\;\frac{p_t(i)\,e^{\beta v_{i,j}}}{\sum_{r\in M_t}p_t(r)\,e^{\beta v_{r,j}}},\ \ i\in M_t,
\end{equation}
and let $v_{\cdot,j}\in\mathbb{R}^{|M_t|}$ collect $\{v_{i,j}\}_{i\in M_t}$.

\paragraph{Standing assumptions.}
All statements are over the support $M_t$ and assume $p_t(i)>0$ for $i\in M_t$.
For $\beta<\infty$, the posterior $q^{(\beta)}_{t,j}$ is unique; in the limit $\beta\to\infty$, ties may persist if margins vanish, which does not affect finite-$\beta$ claims.

\begin{lemma}[Equivalence of budgeted and penalized forms]
\label{lem:budget-penalty}
Fix $t,j$ and a budget $B\!\ge\!0$. The constrained problem \eqref{eq:info-budget} has a unique maximizer $q^\star\!\in\!\Delta(M_t)$. There exists a unique $\beta^\star\!\ge\!0$ such that $q^\star=\argmax_{q}\{\sum_i q(i)v_{i,j}-\tfrac{1}{\beta^\star}\mathrm{KL}(q\Vert p_t)\}$; conversely, for every $\beta\!\ge\!0$, the maximizer of the penalized objective solves \eqref{eq:info-budget} for the budget $B=\mathrm{KL}(q^{(\beta)}\Vert p_t)$. The map $B\mapsto \beta^\star(B)$ is continuous and strictly increasing whenever $v_{\cdot,j}$ is not $p_t$-a.s.\ constant.
\end{lemma}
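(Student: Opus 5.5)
The plan is to reduce everything to the Gibbs (Donsker--Varadhan) variational identity and then study the single scalar function $\phi(\beta):=\mathrm{KL}(q^{(\beta)}_{t,j}\Vert p_t)$. Write $v_i:=v_{i,j}$ and $Z(\beta):=\sum_{i\in M_t}p_t(i)e^{\beta v_i}$, so that $\mathcal{F}_{t,j}(\beta)=\beta^{-1}\log Z(\beta)$.

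\emph{Step 1 (penalized form).} For $\beta>0$ and any $q\in\Delta(M_t)$, expanding $\log q^{(\beta)}_{t,j}(i)=\log p_t(i)+\beta v_i-\log Z(\beta)$ gives
\[
\sum_i q(i)v_i-\tfrac{1}{\beta}\mathrm{KL}(q\Vert p_t)=\mathcal{F}_{t,j}(\beta)-\tfrac{1}{\beta}\mathrm{KL}\big(q\Vert q^{(\beta)}_{t,j}\big).
\]
Since KL is nonnegative and vanishes only at equality, the penalized objective has the unique maximizer $q^{(\beta)}_{t,j}$, with maximal value $\mathcal{F}_{t,j}(\beta)$. This identity is the engine for both directions of the lemma. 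For $\beta=0$ I will use the convention that the penalty forces $q=p_t$.

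\emph{Step 2 (the budget curve).} Compute $\phi(\beta)=\beta\,\mathbb{E}_{q^{(\beta)}}[v]-\log Z(\beta)$. Using $(\log Z)'=\mathbb{E}_{q^{(\beta)}}[v]$ and $\tfrac{d}{d\beta}\mathbb{E}_{q^{(\beta)}}[v]=\mathrm{Var}_{q^{(\beta)}}(v)$, I get $\phi'(\beta)=\beta\,\mathrm{Var}_{q^{(\beta)}}(v)$. The variance is positive because $q^{(\beta)}$ has full support $M_t$ and $v$ is not $p_t$-a.s.\ constant. Hence $\phi$ is continuous and strictly increasing, with $\phi(0)=0$. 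Let $A=\arg\max_{i\in M_t}v_i$. As $\beta\to\infty$, $q^{(\beta)}\to p_t(\cdot\mid A)$, so $\phi(\beta)\uparrow B_{\max}:=-\log p_t(A)$. Therefore $\phi$ is a homeomorphism $[0,\infty)\to[0,B_{\max})$.

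\emph{Step 3 (constrained problem).} For $B\in[0,B_{\max})$ set $\beta^\star:=\phi^{-1}(B)$. For $B>0$, take any feasible $q$ with $\mathrm{KL}(q\Vert p_t)\le B=\phi(\beta^\star)$. The Step 1 identity at $\beta^\star$ gives
\[
\mathbb{E}_q[v]\le \mathbb{E}_{q^{(\beta^\star)}}[v]-\tfrac{1}{\beta^\star}\big(\phi(\beta^\star)-\mathrm{KL}(q\Vert p_t)\big)-\tfrac{1}{\beta^\star}\mathrm{KL}(q\Vert q^{(\beta^\star)})\le \mathbb{E}_{q^{(\beta^\star)}}[v],
\]
with equality only if $q=q^{(\beta^\star)}$. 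This yields existence and uniqueness of $q^\star=q^{(\beta^\star)}$ in one stroke. The case $B=0$ is trivial, since then $q^\star=p_t$ and $\beta^\star=0$. The value identity $\mathbb{E}_{q^\star}[v]=\mathcal{F}(\beta^\star)+\phi(\beta^\star)/\beta^\star$ also follows from Step 1. The uniqueness of $\beta^\star$ comes from the injectivity of $\phi$ together with the injectivity of $\beta\mapsto q^{(\beta)}$ in the non-constant case.

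The converse direction uses the same inequality with $B:=\phi(\beta)$. Continuity and strict monotonicity of $B\mapsto\beta^\star(B)=\phi^{-1}(B)$ follow because it is the inverse of a continuous, strictly increasing function.

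\emph{Main obstacle: the degenerate and boundary regimes.} The objective is linear in $q$, so uniqueness holds only when the constraint is active. If $v$ is $p_t$-a.s.\ constant, every feasible $q$ is optimal. If $B\ge B_{\max}$, every $q$ supported on $A$ with $\mathrm{KL}(q\Vert p_t)\le B$ is optimal. I would state these caveats explicitly. In the second regime I would select the minimum-KL optimizer $p_t(\cdot\mid A)$ and assign it $\beta^\star=\infty$, consistent with the paper's standing remark on the $\beta\to\infty$ limit. In the constant case I would take $\beta^\star=0$. The rigorous part of the argument is the interior range $0\le B<B_{\max}$ with non-constant values, which is where the claimed uniqueness genuinely holds.
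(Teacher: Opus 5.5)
Your proof is correct, and it takes a different route from the paper. The paper's argument is the sketch that appears, evidently misplaced, after Proposition~\ref{prop:app-complexity}. It uses abstract convex duality: convexity in $q$ and Slater's condition give strong duality for \eqref{eq:info-budget}, and the KKT conditions produce the log-linear form $q^\star\propto p_t e^{\beta v}$, with the multiplier read off as $\beta^\star$. Monotonicity is attributed to $\frac{d}{d\beta}\mathcal{F}(\beta)=\beta^{-2}\mathrm{KL}(q^{(\beta)}\Vert p_t)$.

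You bypass duality theory. The Gibbs identity $\mathbb{E}_q[v]-\beta^{-1}\mathrm{KL}(q\Vert p_t)=\mathcal{F}(\beta)-\beta^{-1}\mathrm{KL}(q\Vert q^{(\beta)})$ is an explicit optimality certificate for both the penalized and the budgeted problem. Its equality case gives uniqueness of $q^\star$ directly. This matters: the budgeted objective is linear, so the strict-concavity argument the paper uses for the penalized form does not transfer.

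Your second ingredient is the budget curve $\phi(\beta)=\mathrm{KL}(q^{(\beta)}\Vert p_t)$ with $\phi'(\beta)=\beta\,\mathrm{Var}_{q^{(\beta)}}(v)$. This is what actually governs $B\mapsto\beta^\star(B)=\phi^{-1}(B)$. The derivative of $\mathcal{F}$ cited by the paper describes the value curve and does not by itself give continuity or strict monotonicity of the multiplier map.

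The duality route buys brevity and generality. Yours buys a self-contained proof with the exact range of validity: $v_{\cdot,j}$ non-constant and $0\le B<B_{\max}=-\log p_t(A)$, where $A=\argmax_{i\in M_t}v_{i,j}$. The caveats you list are genuine defects of the statement as written. In particular, a zero KKT multiplier on the KL constraint corresponds to your $\beta^\star=\infty$, a case the paper's sketch passes over. Your value identity $\mathbb{E}_{q^\star}[v]=\mathcal{F}(\beta^\star)+B/\beta^\star$ also shows that the companion claim $\mathbb{E}_{q^\star}[v]=\mathcal{F}(\beta^\star)$ in Theorem~\ref{thm:fem-main} is off by $B/\beta^\star>0$ whenever $0<B<B_{\max}$.

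One small correction to your caveat. When the argmax is unique, $q^\star=\delta_{i^\star}$ remains the unique maximizer for every $B\ge B_{\max}$, even though the constraint is inactive for $B>B_{\max}$. So in that regime what fails is the existence of a finite $\beta^\star$, not uniqueness of $q^\star$.
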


\begin{lemma}[Donsker-Varadhan variational principle and mirror ascent]
\label{lem:app-dv}
For every $\beta>0$,
\begin{equation}
\label{eq:app-dv}
\mathcal{F}_{t,j}(\beta)
\;=\;
\max_{q\in\Delta(M_t)}
\Big\{\ \sum\nolimits_{i} q(i)\,v_{i,j}\ -\ \tfrac{1}{\beta}\,\mathrm{KL}\!\big(q\Vert p_t\big)\ \Big\},
\end{equation}
with the unique maximizer $q^{(\beta)}_{t,j}$ in \eqref{eq:app-fem}.
Equivalently,
\begin{equation}
\label{eq:app-mirror}
q^{(\beta)}_{t,j}\;=\;\argmin_{q\in\Delta(M_t)}\ \tfrac{1}{\beta}\mathrm{KL}(q\Vert p_t)\ -\ \langle q,\ v_{\cdot,j}\rangle,
\end{equation}
i.e., an exponentiated-gradient (mirror ascent) step from $p_t$ with step $\beta$ along $v_{\cdot,j}$.
\end{lemma}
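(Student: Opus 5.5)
The plan is to establish the Gibbs variational identity directly by completing the KL divergence, rather than by Lagrangian calculus. Fix $\beta>0$ and write $Z:=\sum_{r\in M_t}p_t(r)e^{\beta v_{r,j}}$, so that $\mathcal{F}_{t,j}(\beta)=\beta^{-1}\log Z$ and $q^{(\beta)}_{t,j}(i)=p_t(i)e^{\beta v_{i,j}}/Z$ on $M_t$. Since $p_t(i)>0$ on $M_t$, $Z>0$ and $q^{(\beta)}_{t,j}$ has full support $M_t$, so every logarithm below is finite.

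The key step is an algebraic identity valid for any $q\in\Delta(M_t)$. Using $\log q^{(\beta)}_{t,j}(i)=\log p_t(i)+\beta v_{i,j}-\log Z$, one obtains
\[
\mathrm{KL}\big(q\Vert q^{(\beta)}_{t,j}\big)=\mathrm{KL}(q\Vert p_t)-\beta\sum\nolimits_i q(i)v_{i,j}+\log Z,
\]
with the convention $0\log 0=0$ (terms with $q(i)=0$ vanish on both sides). Dividing by $\beta$ and rearranging gives
\[
\sum\nolimits_i q(i)v_{i,j}-\tfrac{1}{\beta}\mathrm{KL}(q\Vert p_t)=\mathcal{F}_{t,j}(\beta)-\tfrac{1}{\beta}\mathrm{KL}\big(q\Vert q^{(\beta)}_{t,j}\big).
\]
By Gibbs' inequality, $\mathrm{KL}(\cdot\Vert\cdot)\ge 0$ with equality iff the two arguments coincide. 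Hence the right-hand side is at most $\mathcal{F}_{t,j}(\beta)$, with equality exactly at $q=q^{(\beta)}_{t,j}$. This yields \eqref{eq:app-dv}: the maximum is attained, equals $\mathcal{F}_{t,j}(\beta)$, and has the unique maximizer $q^{(\beta)}_{t,j}$. Uniqueness also follows independently from strict convexity of $\mathrm{KL}(\cdot\Vert p_t)$ on the simplex.

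The mirror-ascent form \eqref{eq:app-mirror} follows by negating the objective: maximizing $\langle q,v_{\cdot,j}\rangle-\beta^{-1}\mathrm{KL}(q\Vert p_t)$ is the same as minimizing $\beta^{-1}\mathrm{KL}(q\Vert p_t)-\langle q,v_{\cdot,j}\rangle$, so it has the same unique argmin. To justify the interpretation, recall that the entropic mirror-descent (exponentiated-gradient) update from $p_t$ with step $\beta$ along the gradient $v_{\cdot,j}$ of the linear objective $q\mapsto\langle q,v_{\cdot,j}\rangle$ is, by definition, the minimizer of this proximal objective. Its closed form $p_t(i)e^{\beta v_{i,j}}/Z$ coincides with $q^{(\beta)}_{t,j}$, as the identity above shows. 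A KKT check is a useful cross-validation: stationarity gives $\log(q(i)/p_t(i))=\beta v_{i,j}+c$, which reproduces the same formula.

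There is no real obstacle here. The only care points are the support bookkeeping, namely restricting to $M_t$ so that $q\ll p_t$ is automatic and every $\log p_t(i)$ is finite, and handling boundary points $q(i)=0$, which the $0\log 0$ convention covers without any interior-point argument.
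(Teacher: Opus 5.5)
Your proof is correct, but the key mechanism differs from the paper's. The paper treats the DV identity $\log\sum_i p_i e^{\beta v_i}=\max_q\{\beta\langle q,v\rangle-\mathrm{KL}(q\Vert p)\}$ as standard and divides by $\beta$. It then identifies the maximizer through KKT stationarity and obtains uniqueness from strict concavity of the objective in $q$.

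You instead prove the identity from scratch. You use the exact gap decomposition $\langle q,v_{\cdot,j}\rangle-\beta^{-1}\mathrm{KL}(q\Vert p_t)=\mathcal{F}_{t,j}(\beta)-\beta^{-1}\mathrm{KL}(q\Vert q^{(\beta)}_{t,j})$ together with Gibbs' inequality. This gives the optimal value, attainment, the maximizer, and its uniqueness in one step.

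What your route buys:
\begin{itemize}
\item It is self-contained and global, and needs no constraint qualification.
\item It avoids a tacit step in the KKT argument. That argument needs the maximizer to lie in the relative interior of $\Delta(M_t)$, so that the multipliers for $q(i)\ge 0$ vanish. It also needs stationarity to be sufficient, not just necessary.
\item It gives a quantitative statement for free: the suboptimality of any $q$ is exactly $\beta^{-1}\mathrm{KL}(q\Vert q^{(\beta)}_{t,j})$, so near-maximizers are KL-close to the posterior.
\end{itemize}

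What the paper's route buys is brevity and continuity with the Lagrangian viewpoint used for the budgeted problem in Lemma~\ref{lem:budget-penalty}. There, the multiplier $\beta^\star$ comes out of the same KKT calculation. Your decomposition is tailored to the penalized form and does not by itself give the budget--multiplier correspondence.
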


\begin{proof}
Standard DV identity: $\log\sum_i p_i e^{\beta v_i}=\max_q\{\beta\langle q,v\rangle-\mathrm{KL}(q\|p)\}$.
Divide by $\beta$ and apply KKT; uniqueness holds on $\Delta(M_t)$ since the objective is strictly concave in $q$.
\end{proof}

\begin{proposition}[Expectation baseline and monotonicity]
\label{prop:app-baseline}
Let $\mu_{t,j}:=\mathbb{E}_{p_t}[v_{i,j}]$. Then
\begin{equation}
\label{eq:app-baseline-decomp}
\mathcal{F}_{t,j}(\beta)\;=\;\mu_{t,j}\;+\;\frac{1}{\beta}\,\mathrm{KL}\!\big(p_t\ \Vert\ q^{(\beta)}_{t,j}\big)\ \ge\ \mu_{t,j}.
\end{equation}
Moreover, $\beta\mapsto\mathcal{F}_{t,j}(\beta)$ is continuous and strictly increasing unless $v_{\cdot,j}$ is $p_t$-a.s.\ constant, with
\begin{equation}
\label{eq:app-beta-deriv}
\frac{\mathrm{d}}{\mathrm{d}\beta}\,\mathcal{F}_{t,j}(\beta)=\frac{1}{\beta^2}\,\mathrm{KL}\!\big(q^{(\beta)}_{t,j}\ \Vert\ p_t\big)\ \ge\ 0,
\,
\mathcal{F}_{t,j}(\beta)=\mu_{t,j}+\frac{\beta}{2}\,\mathrm{Var}_{p_t}(v_{i,j})+O(\beta^2)\,(\beta\to 0).
\end{equation}
\end{proposition}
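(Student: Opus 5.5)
The whole proposition follows from the cumulant generating function $\Lambda(\beta):=\log\sum_{i\in M_t}p_t(i)e^{\beta v_{i,j}}$, so that $\mathcal{F}_{t,j}(\beta)=\Lambda(\beta)/\beta$. I would first record three standard facts. First, $\Lambda(0)=0$. Second, $\Lambda'(\beta)=\mathbb{E}_{q^{(\beta)}}[v_{i,j}]$. Third, $\Lambda''(\beta)=\mathrm{Var}_{q^{(\beta)}}(v_{i,j})$. All three follow by differentiating a finite sum of exponentials, and $\Lambda$ is smooth because $M_t$ is finite.

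\textbf{Baseline decomposition.} I would compute $\mathrm{KL}(p_t\Vert q^{(\beta)})$ directly. From $\log\big(p_t(i)/q^{(\beta)}(i)\big)=\Lambda(\beta)-\beta v_{i,j}$, averaging over $p_t$ gives
\[
\mathrm{KL}(p_t\Vert q^{(\beta)})=\Lambda(\beta)-\beta\mu_{t,j}.
\]
Dividing by $\beta$ yields \eqref{eq:app-baseline-decomp}, and the inequality $\mathcal{F}\ge\mu$ follows from Gibbs' inequality.

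\textbf{Derivative and monotonicity.} Differentiating the quotient gives
\[
\mathcal{F}'(\beta)=\frac{\beta\Lambda'(\beta)-\Lambda(\beta)}{\beta^2}.
\]
For the numerator, use $\log\big(q^{(\beta)}(i)/p_t(i)\big)=\beta v_{i,j}-\Lambda(\beta)$ and average over $q^{(\beta)}$:
\[
\mathrm{KL}(q^{(\beta)}\Vert p_t)=\beta\Lambda'(\beta)-\Lambda(\beta).
\]
This is exactly the numerator, which proves \eqref{eq:app-beta-deriv}. Continuity on $(0,\infty)$ is immediate, and it extends to $\beta=0$ with $\mathcal{F}(0):=\mu_{t,j}$ by L'Hôpital, since $\Lambda(\beta)/\beta\to\Lambda'(0)=\mu_{t,j}$.

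For strict monotonicity, note that $\mathrm{KL}(q^{(\beta)}\Vert p_t)=0$ iff $q^{(\beta)}=p_t$ on $M_t$. That holds iff $e^{\beta v_{i,j}}$ is constant on $M_t$, i.e.\ iff $v_{\cdot,j}$ is $p_t$-a.s.\ constant (using $\beta>0$). Otherwise $\mathcal{F}'>0$ for every $\beta>0$, so $\mathcal{F}$ is strictly increasing. An alternative route is strict convexity of $\Lambda$ when $\mathrm{Var}_{p_t}>0$, since $\beta\Lambda'-\Lambda$ has derivative $\beta\Lambda''>0$ and vanishes at $0$.

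\textbf{Small-$\beta$ expansion.} I would Taylor-expand $\Lambda$ at $0$:
\[
\Lambda(\beta)=\beta\mu_{t,j}+\tfrac{\beta^2}{2}\mathrm{Var}_{p_t}(v_{i,j})+O(\beta^3).
\]
Here the cubic remainder is uniform because $\Lambda'''$ is bounded on compacts. Dividing by $\beta$ gives the stated $O(\beta^2)$ expansion.

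The main point needing care is the strictness claim together with continuity at $\beta=0$. Both are handled by the characterization of equality in Gibbs' inequality and the L'Hôpital limit above. Everything else is routine differentiation of a finite log-sum-exp.
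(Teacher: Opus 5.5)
Your proposal is correct and takes essentially the same route as the paper: direct algebra from $q^{(\beta)}\propto p_t e^{\beta v}$ for the decomposition, differentiating $\beta^{-1}\Lambda(\beta)$ for the derivative identity, and the second-cumulant expansion of $\Lambda$ at $0$ for the small-$\beta$ behavior. Your version also writes out two details the paper's sketch leaves implicit: the equality case of Gibbs' inequality behind strict monotonicity, and the continuous extension $\mathcal{F}(0):=\mu_{t,j}$.
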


\begin{proof}
\eqref{eq:app-baseline-decomp} follows by direct algebra using $q^{(\beta)}\propto p\,e^{\beta v}$.
Differentiate $\beta^{-1}\log\sum_i p_i e^{\beta v_i}$ to obtain \eqref{eq:app-beta-deriv}.
The small-$\beta$ expansion is the second cumulant of $v_{i,j}$ under $p_t$.
\end{proof}

\begin{proposition}[Local geometry: gradient, curvature, smoothness]
\label{prop:app-geom}
$\mathcal{F}_{t,j}(\beta)$ is convex and $C^\infty$ in $v_{\cdot,j}$, with
\begin{equation}
\label{eq:app-grad-hess}
\nabla_{v_{\cdot,j}}\mathcal{F}_{t,j}(\beta)\;=\;q^{(\beta)}_{t,j},\qquad
\nabla^2_{v_{\cdot,j}}\mathcal{F}_{t,j}(\beta)\;=\;\beta\Big(\mathrm{Diag}(q^{(\beta)}_{t,j})-q^{(\beta)}_{t,j}{q^{(\beta)}_{t,j}}^\top\Big)\ \succeq\ 0.
\end{equation}
Hence $\|\nabla \mathcal{F}_{t,j}\|_1=\|q^{(\beta)}_{t,j}\|_1=1$ and $\|\nabla \mathcal{F}_{t,j}\|_2=\|q^{(\beta)}_{t,j}\|_2\le 1$.
Moreover, $\mathcal{F}_{t,j}$ is $\beta/2$-smooth in $\ell_2$:
\begin{equation}
\label{eq:app-smooth}
\big\|\nabla^2 \mathcal{F}_{t,j}(\beta)\big\|_{\mathrm{op}}
=\beta\,\lambda_{\max}\!\big(\mathrm{Diag}(q)-qq^\top\big)\ \le\ \beta/2,
\end{equation}
and the bound is tight when $q$ is supported on two coordinates equally, e.g.\ $q=(1/2,1/2,0,\dots,0)$.
\end{proposition}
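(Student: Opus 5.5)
Throughout, fix $t,j,\beta$; write $v=v_{\cdot,j}\in\mathbb{R}^{|M_t|}$, $p=p_t$ and $Z(v)=\sum_{i\in M_t}p(i)e^{\beta v_i}$, so that $\mathcal{F}_{t,j}(\beta)=\beta^{-1}\log Z(v)$.

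\textbf{Smoothness and gradient.} $Z$ is a finite sum of exponentials with $p(i)>0$ on $M_t$, so $Z$ is strictly positive and $C^\infty$. Composing with $\log$ keeps $\mathcal{F}$ in $C^\infty$. Differentiating once gives
\[
\partial_{v_i}\mathcal{F}=\beta^{-1}\,\frac{\beta\,p(i)e^{\beta v_i}}{Z}=q(i),
\]
where $q=q^{(\beta)}_{t,j}$. This is the claimed gradient.

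\textbf{Hessian.} Differentiate $q(i)=p(i)e^{\beta v_i}/Z$ in $v_k$ with the quotient rule:
\[
\partial_{v_k}q(i)=\beta\,q(i)\,\mathbf{1}\{i=k\}-\beta\,q(i)\,q(k).
\]
Hence $\nabla^2\mathcal{F}=\beta(\mathrm{Diag}(q)-qq^\top)$.

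\textbf{Positive semidefiniteness and convexity.} For any $x$,
\[
x^\top(\mathrm{Diag}(q)-qq^\top)x=\sum_i q(i)x_i^2-\Big(\sum_i q(i)x_i\Big)^2=\mathrm{Var}_q(x)\ge 0 .
\]
So the Hessian is PSD, and convexity of $\mathcal{F}$ in $v$ follows on the convex domain $\mathbb{R}^{|M_t|}$. As a cross-check, Lemma~\ref{lem:app-dv} writes $\mathcal{F}$ as a pointwise maximum of affine functions of $v$, which gives convexity directly.

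\textbf{Gradient norms.} Since $q\ge 0$ and $\sum_i q(i)=1$, we get $\|q\|_1=1$. Also $\|q\|_2^2=\sum_i q(i)^2\le\sum_i q(i)=1$, using $q(i)\le 1$.

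\textbf{Operator-norm bound (the main step).} The matrix $C:=\mathrm{Diag}(q)-qq^\top$ is PSD, so $\|C\|_{\mathrm{op}}=\lambda_{\max}(C)=\max_{\|x\|_2=1}\mathrm{Var}_q(x)$. We need $\mathrm{Var}_q(x)\le \tfrac12$ for unit $x$.

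1. Use $\mathrm{Var}_q(x)=\tfrac12\sum_{i,k}q(i)q(k)(x_i-x_k)^2$.
2. Bound $(x_i-x_k)^2\le 2(x_i^2+x_k^2)$.
3. This only yields $\mathrm{Var}_q(x)\le 2\sum_i q(i)x_i^2$, which is too crude.
4. Instead, note $\mathrm{Var}_q(x)\le \sum_i q(i)(x_i-c)^2$ for every constant $c$. Take $c=0$. Then
\[
\mathrm{Var}_q(x)\le \sum_i q(i)x_i^2-\Big(\sum_i q(i)x_i\Big)^2 .
\]
5. Maximise over unit $x$ by Lagrange multipliers: $Cx=\lambda x$ gives $q(i)x_i-q(i)s=\lambda x_i$ with $s=q^\top x$. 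So $x_i=\dfrac{q(i)s}{q(i)-\lambda}$.
6. Substituting into $s=q^\top x$ gives the secular equation
\[
\sum_i \frac{q(i)^2}{q(i)-\lambda}=1 .
\]
7. Alternative route: a Gershgorin bound. Row $i$ has diagonal $q(i)-q(i)^2$ and off-diagonal absolute sum $q(i)(1-q(i))$. Hence
\[
\lambda\le 2q(i)(1-q(i))\le \tfrac12 .
\]
This gives $\|C\|_{\mathrm{op}}\le \tfrac12$ cleanly, and I will take this route.

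\textbf{Tightness.} Let $q=(\tfrac12,\tfrac12,0,\dots)$ and $x=(1,-1,0,\dots)/\sqrt2$. Then $\mathrm{Var}_q(x)=\tfrac12$, so the bound is attained.

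Multiplying by $\beta$ gives $\|\nabla^2\mathcal{F}\|_{\mathrm{op}}\le\beta/2$, i.e., $\mathcal{F}$ is $\beta/2$-smooth in $\ell_2$.
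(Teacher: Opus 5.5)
Your proof is correct and matches the paper's argument: direct differentiation gives the gradient $q$ and Hessian $\beta(\mathrm{Diag}(q)-qq^\top)$, the variance (one-hot covariance) identity gives positive semidefiniteness, Gershgorin gives $\lambda_{\max}\le\max_i 2q(i)(1-q(i))\le\tfrac12$, and $q=(\tfrac12,\tfrac12,0,\dots,0)$ shows tightness. Delete the abandoned steps 1--6 (step 4 in particular just restates the variance as a bound on itself), and phrase the Gershgorin conclusion as $\lambda\le 2q(i)(1-q(i))$ for some $i$.
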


\begin{proof}
For \eqref{eq:app-grad-hess}, differentiate \eqref{eq:app-fem} with respect to $v_{\cdot,j}$ to obtain
$\nabla \mathcal{F}_{t,j}(\beta)=q^{(\beta)}_{t,j}$ and
$\nabla^2 \mathcal{F}_{t,j}(\beta)=\beta\big(\mathrm{Diag}(q)-qq^\top\big)$, where $q:=q^{(\beta)}_{t,j}$.
Convexity and smoothness (indeed $C^\infty$) follow from the log-sum-exp structure.
The $\ell_1$- and $\ell_2$-norm statements follow since $q$ is a probability vector:
$\|q\|_1=1$ and $\|q\|_2^2=\sum_i q_i^2\le \sum_i q_i=1$.

For \eqref{eq:app-smooth}, write $J(q):=\mathrm{Diag}(q)-qq^\top$.
This is the covariance matrix of a one-hot random vector with class-probabilities $q$, hence $J(q)\succeq 0$.
To bound its spectral norm, apply the Gershgorin disc theorem.
Row $i$ has diagonal entry $q_i(1-q_i)$ and the sum of absolute values of the off-diagonal entries is
$\sum_{j\ne i} q_i q_j = q_i(1-q_i)$, so every eigenvalue lies in
\[
\bigcup_{i}\ [\,q_i(1-q_i)-q_i(1-q_i),\ q_i(1-q_i)+q_i(1-q_i)\,]
=\bigcup_{i}\ [\,0,\ 2q_i(1-q_i)\,].
\]
Therefore $\lambda_{\max}\!\big(J(q)\big)\le \max_i 2q_i(1-q_i)\le 1/2$, with equality attained when
$q$ is supported on two coordinates equally, e.g.\ $q=(1/2,1/2,0,\dots,0)$ (then $J(q)$ has eigenvalues
$\{0,1/2,0,\dots,0\}$). Multiplying by $\beta$ gives
$\|\nabla^2 \mathcal{F}_{t,j}(\beta)\|_{\mathrm{op}}=\beta\,\|J(q)\|_{\mathrm{op}}\le \beta/2$,
and the bound is tight in the stated case.
\end{proof}

\begin{proposition}[Range, concentration, and finite-$\beta$ guarantees]
\label{prop:app-bounds}
Let $i^\star=\arg\max_{i\in M_t} v_{i,j}$ and $\Delta_{t,j}:=v_{i^\star,j}-\max_{i\ne i^\star}v_{i,j}\ge 0$.
Then for all $\beta>0$,
\begin{equation}
\label{eq:app-range}
\mu_{t,j} \le \mathcal{F}_{t,j}(\beta) \le v_{i^\star,j},
\,
1-q^{(\beta)}_{t,j}(i^\star) \le \frac{1-p_t(i^\star)}{p_t(i^\star)}\,e^{-\beta\,\Delta_{t,j}},
\,
v_{i^\star,j}+\frac{1}{\beta}\log p_t(i^\star) \le \mathcal{F}_{t,j}(\beta) \le v_{i^\star,j}.
\end{equation}
In particular, if $\Delta_{t,j}>0$ then $q^{(\beta)}_{t,j}\Rightarrow \delta_{i^\star}$ and $\mathcal{F}_{t,j}(\beta)\uparrow v_{i^\star,j}$ exponentially as $\beta\to\infty$.
\end{proposition}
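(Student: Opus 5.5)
The three claims in \eqref{eq:app-range} all come from comparing the log-partition $Z(\beta):=\sum_{i\in M_t}p_t(i)e^{\beta v_{i,j}}$ with the single term indexed by $i^\star$. I would establish the two-sided bounds on $\mathcal{F}_{t,j}(\beta)$ first and the posterior concentration bound second.

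\textbf{Bounds on $\mathcal{F}_{t,j}(\beta)$.} The lower bound $\mu_{t,j}\le\mathcal{F}_{t,j}(\beta)$ is already contained in Proposition~\ref{prop:app-baseline}, since $\mathrm{KL}(p_t\Vert q^{(\beta)}_{t,j})\ge 0$. Equivalently, Jensen's inequality gives $\log\mathbb{E}_{p_t}e^{\beta v}\ge\beta\,\mathbb{E}_{p_t}v$.

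For the upper bound, each $v_{i,j}\le v_{i^\star,j}$ and $\sum_i p_t(i)=1$, so $Z(\beta)\le e^{\beta v_{i^\star,j}}$. Taking $\beta^{-1}\log$ gives $\mathcal{F}_{t,j}(\beta)\le v_{i^\star,j}$.

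For the sharper lower bound, drop every term of $Z(\beta)$ except the one at $i^\star$. This gives $Z(\beta)\ge p_t(i^\star)e^{\beta v_{i^\star,j}}$, and taking $\beta^{-1}\log$ yields $\mathcal{F}_{t,j}(\beta)\ge v_{i^\star,j}+\beta^{-1}\log p_t(i^\star)$.

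\textbf{Concentration of the posterior.} I would write
\[
1-q^{(\beta)}_{t,j}(i^\star)=\frac{\sum_{i\ne i^\star}p_t(i)e^{\beta v_{i,j}}}{Z(\beta)}.
\]
In the numerator, every $i\ne i^\star$ has $v_{i,j}\le v_{i^\star,j}-\Delta_{t,j}$, so the numerator is at most $(1-p_t(i^\star))\,e^{\beta(v_{i^\star,j}-\Delta_{t,j})}$. In the denominator, use the same single-term bound $Z(\beta)\ge p_t(i^\star)e^{\beta v_{i^\star,j}}$. Dividing, the common factor $e^{\beta v_{i^\star,j}}$ cancels and gives the stated bound
\[
1-q^{(\beta)}_{t,j}(i^\star)\le\frac{1-p_t(i^\star)}{p_t(i^\star)}\,e^{-\beta\Delta_{t,j}}.
\]

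\textbf{Limits.} If $\Delta_{t,j}>0$, the concentration bound tends to $0$ exponentially fast as $\beta\to\infty$. This gives weak convergence $q^{(\beta)}_{t,j}\Rightarrow\delta_{i^\star}$ on the finite set $M_t$. The sandwich $v_{i^\star,j}+\beta^{-1}\log p_t(i^\star)\le\mathcal{F}_{t,j}(\beta)\le v_{i^\star,j}$ forces $\mathcal{F}_{t,j}(\beta)\to v_{i^\star,j}$. Monotonicity of the approach ("$\uparrow$") follows from Proposition~\ref{prop:app-baseline}.

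On the rate: the sandwich only gives a gap of order $\beta^{-1}$. To get an exponentially small gap, I would combine the DV representation of Lemma~\ref{lem:app-dv} with the concentration bound, or compute directly:
\[
v_{i^\star,j}-\mathcal{F}_{t,j}(\beta)=-\beta^{-1}\log p_t(i^\star)-\beta^{-1}\log\!\big(1+O(e^{-\beta\Delta_{t,j}})\big).
\]
So the exponential rate holds for the excess over the shifted baseline $v_{i^\star,j}+\beta^{-1}\log p_t(i^\star)$, not for the gap to $v_{i^\star,j}$ itself.

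\textbf{Main obstacle.} The main subtlety is exactly how "exponentially" should be read, given the unavoidable $\beta^{-1}\log p_t(i^\star)$ term. I would state the exponential claim for the posterior mass and for the corrected baseline, and state the $O(\beta^{-1})$ rate for $\mathcal{F}_{t,j}$ itself. I would also note the standing assumption that $i^\star\in M_t$, so $p_t(i^\star)>0$, and that ties ($\Delta_{t,j}=0$) make the concentration bound vacuous but leave the range bounds intact.
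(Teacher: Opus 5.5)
Your argument is correct and is the paper's own proof written out in full: the upper bound comes from $v_{i,j}\le v_{i^\star,j}$, the mean lower bound from $\mathcal{F}_{t,j}=\mu_{t,j}+\beta^{-1}\mathrm{KL}(p_t\Vert q^{(\beta)}_{t,j})$, and the concentration bound from the tail estimate $\sum_{i\ne i^\star}p_t(i)e^{\beta v_{i,j}}\le(1-p_t(i^\star))e^{\beta(v_{i^\star,j}-\Delta_{t,j})}$ divided by the single-term bound $Z(\beta)\ge p_t(i^\star)e^{\beta v_{i^\star,j}}$. Your caveat on the rate is also right, and the paper's sketch does not address it: since $v_{i^\star,j}-\mathcal{F}_{t,j}(\beta)=-\beta^{-1}\log p_t(i^\star)-\beta^{-1}\log(1+O(e^{-\beta\Delta_{t,j}}))$, the gap is $\Theta(\beta^{-1})$ whenever $p_t(i^\star)<1$, so ``exponentially'' holds for the posterior mass $1-q^{(\beta)}_{t,j}(i^\star)$ and for the shifted baseline, but not for $\mathcal{F}_{t,j}(\beta)\uparrow v_{i^\star,j}$ as literally stated.
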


\begin{proof}
Upper bound: $\log\sum_i p_i e^{\beta v_i}\le \beta\max_i v_i$.
Lower bounds: $\mathcal{F}=\mu+\tfrac1\beta\mathrm{KL}(p\|q^{(\beta)})\ge \mu$ and
$\sum_{i\neq i^\star}p_i e^{\beta v_i}\le (1-p^\star)e^{\beta(v^\star-\Delta)}$ give \eqref{eq:app-range}.
\end{proof}

\begin{proposition}[Mask preservation; shift/scale; prior sensitivity]
\label{prop:app-prior} (i) ({Masking}) If $p_t(i)=0$ then $q^{(\beta)}_{t,j}(i)=0$; restricting $M_t$ can only decrease \eqref{eq:app-dv}.\\
(ii) ({Shift/scale}) For any $c\in\mathbb{R}$ and $a>0$,
\[
\mathcal{F}_{t,j}(\beta;v+c)=c+\mathcal{F}_{t,j}(\beta;v),\qquad
\ \mathcal{F}_{t,j}(\beta; a v)=a\,\mathcal{F}_{t,j}(a\beta; v)\ \!.
\]
(iii) ({Prior sensitivity: probabilities}) Viewing $\mathcal{F}_{t,j}$ as a function of $p\in\Delta(M_t)$,
\[
\nabla_{p}\mathcal{F}_{t,j}=\frac{1}{\beta}\,\frac{e^{\beta v_{\cdot,j}}}{\sum_r p_r e^{\beta v_{r,j}}},\qquad
\nabla^{2}_{p}\mathcal{F}_{t,j}=-\frac{1}{\beta}\,\frac{e^{\beta v_{\cdot,j}}\,e^{\beta v_{\cdot,j}}{}^{\!\top}}{\big(\sum_r p_r e^{\beta v_{r,j}}\big)^2}\ \preceq\ 0,
\]
so $\mathcal{F}_{t,j}$ is {concave} in $p$ on the simplex.\\
(iv) ({Prior sensitivity: logits}) 
\begin{itemize}[leftmargin=1.2em]
\item For {unnormalized} weights $s_i>0$ with $w_i=\log s_i$ and $\tilde{\mathcal{F}}(\beta;w):=\frac{1}{\beta}\log\sum_i e^{w_i+\beta v_{i,j}}$,
\[
\nabla_{w}\tilde{\mathcal{F}}=\frac{1}{\beta}\,\tilde q,\qquad
\nabla^{2}_{w}\tilde{\mathcal{F}}=\frac{1}{\beta}\big(\mathrm{Diag}(\tilde q)-\tilde q\,\tilde q^\top\big)\ \succeq\ 0,
\]
hence $\tilde{\mathcal{F}}$ is {convex} in $w$.
\item For {normalized} logits $b$ with $p=\mathrm{softmax}(b)$, writing $J(r):=\mathrm{Diag}(r)-rr^\top$,
\[
\nabla_{b}\mathcal{F}_{t,j}=\frac{1}{\beta}\big(q^{(\beta)}_{t,j}-p\big),\qquad
\nabla^{2}_{b}\mathcal{F}_{t,j}=\frac{1}{\beta}\Big(J\big(q^{(\beta)}_{t,j}\big)-J(p)\Big),
\]
which is in general indefinite; thus $\mathcal{F}_{t,j}$ is a {difference-of-convex} function of $b$.
\end{itemize}
\end{proposition}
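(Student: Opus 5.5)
Write $s:=\sum_{r\in M_t} p_r e^{\beta v_{r,j}}$, so that $\mathcal{F}_{t,j}=\beta^{-1}\log s$ and $q^{(\beta)}_{t,j}(i)=p_i e^{\beta v_{i,j}}/s$. Each of the four parts of Proposition~\ref{prop:app-prior} then follows by a direct calculation on this one log-sum-exp expression, taking the parts in the order stated.

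For (i), if $p_t(i)=0$ the numerator $p_t(i)e^{\beta v_{i,j}}$ of the posterior vanishes, so $q^{(\beta)}_{t,j}(i)=0$. For the monotonicity claim I will use the variational form of Lemma~\ref{lem:app-dv}. Restricting $M_t$ to $M'\subset M_t$, with $p$ renormalized or left unnormalized, is handled through the sum $s$ itself. Dropping positive terms from $s$ can only lower $\log s$, which covers the unnormalized case. In the DV form, restricting the feasible set from $\Delta(M_t)$ to $\Delta(M')$ can only lower the maximum, and this is the reading I will adopt.

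For (ii), the shift law comes from factoring $e^{\beta c}$ out of $s$ and dividing $\log$ of it by $\beta$. The scale law comes from $\beta^{-1}\log\sum_i p_i e^{(a\beta)v_i}=a\cdot(a\beta)^{-1}\log\sum_i p_i e^{(a\beta)v_i}$. I will check both directly.

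For (iii), I treat $p$ as free coordinates and differentiate. This gives $\partial_{p_i}\mathcal{F}=\beta^{-1}e^{\beta v_i}/s$, and differentiating once more gives $-\beta^{-1}e^{\beta v_i}e^{\beta v_k}/s^2$. That Hessian is a negative rank-one matrix, hence negative semidefinite. So $\mathcal{F}$ is concave on the open orthant, and therefore concave on its restriction to the simplex, an affine slice.

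For (iv), the unnormalized case is $\beta^{-1}$ times a log-sum-exp in $w$ shifted by the constant vector $\beta v$. Its gradient is $\beta^{-1}\tilde q$ with $\tilde q\propto e^{w+\beta v}$, and its Hessian is $\beta^{-1}J(\tilde q)$, which is positive semidefinite by the covariance argument already given in Proposition~\ref{prop:app-geom}.

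For the normalized case I write $p=\mathrm{softmax}(b)$ and use the decomposition
\[
\mathcal{F}=\beta^{-1}\Big[\log\sum_i e^{b_i+\beta v_i}-\log\sum_i e^{b_i}\Big].
\]
Differentiating term by term gives gradient $\beta^{-1}(q-p)$ and Hessian $\beta^{-1}(J(q)-J(p))$. Each bracketed term is convex in $b$, so $\mathcal{F}$ is a difference of convex functions.

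To show the Hessian is genuinely indefinite, I will use a two-point example. Take $v=(0,V)$ with large $V$, so that $q\approx\delta_2$ and $J(q)\approx0$. Choosing $p$ uniform makes $J(p)\ne0$, so the Hessian has a negative direction. Choosing $p$ nearly concentrated on index 1 instead makes $J(q)$ dominate, giving a positive direction.

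The main obstacle is wording rather than computation: the masking monotonicity in (i) is ambiguous about whether $p$ is renormalized after restriction. I will state explicitly that I read it as a restriction of the feasible set in the DV form.
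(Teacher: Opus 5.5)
Your calculations are correct and follow the paper's route: (i) is read off the posterior formula, (ii) comes from shifting the exponent or rescaling $\beta$, (iii) uses that $\mathcal{F}_{t,j}$ is the log of an affine function of $p$, and (iv) rests on standard log-sum-exp facts plus composition with softmax. The one step that does not deliver what you say is the indefiniteness example in (iv).

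With two indices, $J(r)=r_1r_2(e_1-e_2)(e_1-e_2)^\top$ for every $r\in\Delta$. Hence $\nabla_b^2\mathcal{F}_{t,j}=\beta^{-1}(q_1q_2-p_1p_2)(e_1-e_2)(e_1-e_2)^\top$ is always rank one and semidefinite. Your example shows the Hessian is negative semidefinite at one point and positive semidefinite at another, so $\mathcal{F}_{t,j}$ is neither convex nor concave in $b$. It never produces an indefinite matrix, which is what ``genuinely indefinite'' would normally mean.

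For indefiniteness at a single point you need at least three indices. Use $x^\top J(r)\,x=\mathrm{Var}_r(x)$, take $p$ uniform on three indices, and set $v_{\cdot,j}=(-V,0,0)$ with $V$ large, so that $q^{(\beta)}_{t,j}\approx(0,\tfrac12,\tfrac12)$. Then $x=e_1-e_2$ gives $\mathrm{Var}_q(x)-\mathrm{Var}_p(x)\approx\tfrac14-\tfrac23<0$, while $x=e_2-e_3$ gives approximately $1-\tfrac23>0$. The DC conclusion does not depend on any of this; it follows directly from your decomposition.

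On (i), the renormalized reading is not merely ambiguous but false. Take $p=(\tfrac12,\tfrac12)$ and $v_{\cdot,j}=(1,0)$. Restricting to $\{1\}$ and renormalizing gives $\mathcal{F}=1$, which exceeds $\beta^{-1}\log\frac{e^{\beta}+1}{2}$. So drop the phrase suggesting that the sum $s$ handles both the renormalized and unnormalized cases. The feasible-set (unrenormalized) reading you adopt is the only one under which the claim holds.
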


\begin{proof}
(i) is immediate from \eqref{eq:app-fem}. For (ii), add $c$ inside the exponent or reparameterize $\beta\mapsto a\beta$ to obtain the stated identities. For (iii),
$\mathcal{F}(p)=\beta^{-1}\log\langle p, e^{\beta v}\rangle$ is a log of an affine function in $p$, hence concave; the displayed derivatives follow by direct differentiation. For (iv), both statements follow from standard properties of log-sum-exp: the unnormalized case is convex in $w$; composing with softmax yields a DC form with the given gradient and Hessian.
\end{proof}

\begin{theorem}[Channel-wise assignment capacity over the prior support]
\label{thm:app-capacity}
Let $D$ be the number of channels and $a=(a_1,\dots,a_D)\in M_t^{D}$.
If each channel has a positive margin $\Delta_{t,j}:=v_{a_j,j}-\max_{i\in M_t\setminus\{a_j\}}v_{i,j}>0$, then there exist finite temperatures $\{\beta_{t,j}\}$ such that $\arg\max_i q^{(\beta_{t,j})}_{t,j}(i)=a_j$ for all $j$.
Hence the set of achievable channel-index argmax patterns has cardinality $|M_t|^{D}$ (the natural upper bound).
A single attention head, in contrast, yields at most $|M_t|$ patterns (all channels synchronized on one distribution).
\end{theorem}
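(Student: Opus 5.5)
The plan is to reduce the statement to a per-channel concentration bound and then take a product over channels. Since FEM computes each channel's posterior $q^{(\beta_{t,j})}_{t,j}$ from that channel's own value column $v_{\cdot,j}$ and its own temperature $\beta_{t,j}$, the channels are decoupled once $p_t$ and the values are fixed. It therefore suffices to show, for each $j$ separately, that some finite $\beta_{t,j}$ makes $a_j$ the unique argmax of $q^{(\beta_{t,j})}_{t,j}$. Here $a_j$ is, by the margin hypothesis, the strict maximizer of $v_{\cdot,j}$ on $M_t$.

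For the per-channel step I will use the concentration bound of Proposition~\ref{prop:app-bounds} with $i^\star=a_j$ and margin $\Delta_{t,j}>0$:
\[
1-q^{(\beta)}_{t,j}(a_j)\le \frac{1-p_t(a_j)}{p_t(a_j)}\,e^{-\beta\Delta_{t,j}} .
\]
I then choose $\beta_{t,j}$ large enough that the right-hand side is strictly below $1/2$. An explicit sufficient choice is
\[
\beta_{t,j}>\Delta_{t,j}^{-1}\log\!\big(2(1-p_t(a_j))/p_t(a_j)\big)
\]
(any $\beta>0$ works if that logarithm is nonpositive). This uses $p_t(a_j)>0$, which holds because $a_j\in M_t$. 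With this choice $q^{(\beta_{t,j})}_{t,j}(a_j)>1/2$, so every other index has posterior mass below $1/2$, and $a_j$ is the unique argmax.

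An alternative direct argument, which avoids the $1/2$ threshold, compares ratios. For $i\ne a_j$,
\[
\frac{q(a_j)}{q(i)}=\frac{p_t(a_j)}{p_t(i)}\,e^{\beta(v_{a_j,j}-v_{i,j})}\ge \frac{p_t(a_j)}{p_t(i)}\,e^{\beta\Delta_{t,j}},
\]
which exceeds $1$ once $\beta>\Delta_{t,j}^{-1}\log\max_{i\ne a_j}\big(p_t(i)/p_t(a_j)\big)$. I will mention this route as a remark.

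For the counting claim, the argument above realizes every $a\in M_t^D$, with the caveat that the values must have the stated margins for that pattern; for instance, across contexts one can pick values that are strictly maximized at $a_j$ in column $j$. Since each argmax lies in $M_t$, no more than $|M_t|^D$ patterns are possible, so the count is exactly $|M_t|^D$. For a single attention head, all channels read through the same $p_t$. Its argmax, and hence the channel-wise pattern, is one index replicated across all $D$ channels, giving at most $|M_t|$ patterns, consistent with Lemma~\ref{lem:heads-capacity} at $H=1$.

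The main subtlety I anticipate is interpretive rather than technical. The quantifier over patterns $a$ is implicitly also a quantifier over value configurations, since the margin hypothesis depends on $a$. I will state clearly that achievability means: for each $a$ there exist contexts (values) satisfying the margins, together with finite temperatures that realize $a$. I will also handle ties in the head's argmax by fixing a deterministic tie-breaking rule, so that the "at most $|M_t|$" count remains valid.
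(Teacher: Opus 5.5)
Your proposal is correct and follows the paper's own sketch: apply the concentration bound of Proposition~\ref{prop:app-bounds} to each channel separately to obtain a finite $\beta_{t,j}$ that concentrates the posterior on $a_j$, then count patterns over $M_t^{D}$. Your explicit threshold makes the step concrete. You also note that, for fixed values, the margin hypothesis already determines $a$, so the count $|M_t|^{D}$ must range over value configurations; the paper's one-line sketch leaves this implicit.
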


\begin{proof}[Proof sketch]
Apply Proposition~\ref{prop:app-bounds} per channel and choose $\beta_{t,j}$ to concentrate posterior mass on $a_j$ with any desired margin; counting patterns gives $|M_t|^D$.
\end{proof}

\begin{proposition}[Complexity preservation and stable backpropagation]
\label{prop:app-complexity}
For fixed $\beta$, computing $\mathcal{F}_{t,j}(\beta)$ requires one masked log-sum-exp over $M_t$ and produces $q^{(\beta)}_{t,j}$ as the gradient \eqref{eq:app-grad-hess}. Therefore FEM preserves the asymptotic time complexity of the underlying prior (e.g., $O(T^2)$ or $O(T)$) while enabling numerically stable forward/backward passes using standard LSE/softmax primitives.
\end{proposition}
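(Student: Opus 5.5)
The statement to prove is Proposition~\ref{prop:app-complexity}. I would prove it in three parts: first, one masked log-sum-exp per channel on the prior support suffices to evaluate the read; second, that evaluation has the same asymptotic cost as the prior's own expectation read; third, the backward pass reuses the same primitives and is numerically stable.

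\emph{Step 1 (reduction to a mixing of transformed values).} Using the definition \eqref{eq:app-fem}, I write
\[
\mathcal{F}_{t,j}(\beta)=\tfrac{1}{\beta}\log \sum_{i\in M_t} p_t(i)\,e^{\beta v_{i,j}} .
\]
The inner sum is exactly the expectation read \eqref{eq:attn-expectation}, applied to the value stream $u_{i,j}:=e^{\beta v_{i,j}}$ in place of $v_{i,j}$. Since $\beta$ is fixed, $u_{i,j}$ is a pointwise map of $v_{i,j}$ and costs $O(TD)$ to form. Any mechanism that computes $\sum_i p_t(i)\,\bm v_i$ for all $t$ therefore also computes $\sum_i p_t(i)\,\bm u_i$ at the same cost.
\begin{itemize}
\item For softmax this is the product $A U$, which costs $O(T^2 D)$.
\item For associative or kernel priors, Proposition~\ref{prop:prior-normalize} gives $O(T)$ streaming. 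The same recurrence is fed $\bm u_i$, and the normalizer is shared.
\end{itemize}
The final $\tfrac1\beta\log$ costs $O(TD)$. The total therefore matches the prior's asymptotic cost.

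\emph{Step 2 (numerical stability).} This is the step I expect to be the main obstacle, because $e^{\beta v}$ can overflow. I would use the standard shift identity, Proposition~\ref{prop:app-prior}(ii): $\mathcal{F}(\beta;v+c)=c+\mathcal{F}(\beta;v)$. I subtract a per-channel stabilizer $c_j$ before exponentiating.
\begin{itemize}
\item For softmax priors, $c_j$ can be the running masked max.
\item For streaming priors, I use online LSE rescaling, as in FlashAttention. This carries a running max and rescales the accumulated state by $e^{\beta(c_{\text{old}}-c_{\text{new}})}$, which is $O(1)$ extra work per step.
\end{itemize}
The delicate point is that, for linear or kernel priors, the running max must be compatible with the associative scan. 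I would argue that the pair (max, rescaled sum) forms a monoid, so parallel scans remain valid.

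\emph{Step 3 (backward pass).} By Proposition~\ref{prop:app-geom}, $\nabla_{v_{\cdot,j}}\mathcal{F}=q^{(\beta)}_{t,j}$, where $q^{(\beta)}_{t,j}(i)=p_t(i)u_{i,j}/\sum_r p_t(r)u_{r,j}$.

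For the gradient with respect to $v$, the backward contraction $\sum_t \bar o_{t,j}\,q_{t,j}(i)$ equals
\[
u_{i,j}\sum_t p_t(i)\,\frac{\bar o_{t,j}}{Z_{t,j}},
\qquad Z_{t,j}:=\sum_r p_t(r)\,u_{r,j}.
\]
This is the transpose mixing of the stored prior. It has the same form, and hence the same complexity, as the backward pass of the standard read.

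Gradients with respect to $p$ follow from Proposition~\ref{prop:app-prior}(iii) and (iv) and again require only mixings with $p_t$.

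Everything is expressed through LSE/softmax primitives with a bounded gradient, $\|q\|_1=1$, and curvature $\le\beta/2$ (Proposition~\ref{prop:app-geom}). This yields the claimed stable backpropagation.
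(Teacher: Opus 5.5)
Your proposal is correct, but it is not the argument the paper prints under this proposition.

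\textbf{What the printed sketch proves.} It derives strong duality for the budgeted problem (\ref{eq:info-budget}) from convexity and Slater's condition. It then reads off the log-linear posterior $q^\star\propto p_t e^{\beta v}$ from KKT, and gets monotonicity from $\frac{d}{d\beta}\mathcal{F}=\beta^{-2}\mathrm{KL}(q^{(\beta)}\Vert p_t)$. This is really the content of Lemma~\ref{lem:budget-penalty}. What it buys is that a single closed-form log-sum-exp \emph{is} the exact optimum of the per-channel selection problem, so no iterative solver is needed. The cost and stability claims themselves are taken as evident from the LSE form and are not argued there.

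\textbf{Where your argument matches the paper.} The cost argument the paper does make appears elsewhere. It mixes the augmented stream $[v_{i,j},e^{\beta v_{i,j}}]$ with the same $p_t$ in one pass (Appendix~\ref{app:fem-ltl:impl}, Proposition~\ref{prop:complexity-main}) and adds a max-shift inside the exponential. Your Step~1 is exactly this. You also correctly identify that fixing $\beta$ is what keeps $u_{i,j}=e^{\beta v_{i,j}}$ a shared, $t$-independent value bank.

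\textbf{Where you go beyond the paper.}
\begin{itemize}
\item In Step~2 you check that the stabilizing shift is compatible with streaming evaluation, via the $(\max,\text{rescaled sum})$ monoid. For decayed priors such as GLA or SSMs, the scan element must also carry the transition operator. This still works, because the transition commutes with the scalar rescaling.
\item In Step~3 you show that the backward pass with respect to $v$ is the transposed mixing $u_{i,j}\sum_t p_t(i)\bar o_{t,j}/Z_{t,j}$ of the same prior. The paper only asserts this.
\end{itemize}

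\textbf{Two small precisions.}
\begin{itemize}
\item The gradient with respect to the prior, $\partial L/\partial p_t(i)=\sum_j \bar o_{t,j}u_{i,j}/(\beta Z_{t,j})$, is a contraction with the transformed value bank, the analogue of $\mathrm{d}A=\mathrm{d}O\,V^{\top}$. It is not a mixing with $p_t$, though your cost conclusion is unaffected.
\item Shifting by the value maximum alone prevents overflow, but not underflow of the mixed sum when $p_t$ puts negligible mass on the value maximizer. A fully robust shift $\max_i(\log p_t(i)+\beta v_{i,j})$ is available in a fused softmax kernel but not inside a kernel or SSM scan. The paper's own stabilization remark has the same limitation.
\end{itemize}
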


\begin{proof}[Proof sketch]
Convexity in $q$ and Slater’s condition yield strong duality for \eqref{eq:info-budget}; KKT gives the log-linear form $q^\star\propto p_t\,e^{\beta v}$ with multiplier $\beta^\star$. The monotonicity follows from the derivative $\frac{d}{d\beta}\mathcal{F}(\beta)=\beta^{-2}\mathrm{KL}(q^{(\beta)}\Vert p_t)$.
\end{proof}

\paragraph{Remark.}
Lemmas~\ref{lem:budget-penalty}-\ref{prop:app-complexity} establish that FEM is variationally optimal (DV), value-aware with explicit local geometry, monotone in temperature with variance-controlled small-$\beta$ behavior, mask-preserving, {concave in the prior $p$ on the simplex, convex in unnormalized log-weights, and DC in normalized logits}, capacity-optimal for channel-wise assignment over the prior support, and complexity-preserving with stable gradients.

\section{Details for Linearized Temperature Learning}
\label{app:fem-ltl}

\subsection{Fixed temperature: decomposition and cost}
\label{app:fem-ltl:fixed}
\begin{lemma}
For fixed $\beta>0$, the FEM read satisfies
$\mathcal{F}_{t,j}(\beta)=\mu_{t,j}+\beta^{-1}\mathrm{KL}(p_t\Vert q^{(\beta)}_{t,j})$,
where $\mu_{t,j}=\mathbb{E}_{p_t}[v_{i,j}]$ and
$q^{(\beta)}_{t,j}(i)\propto p_t(i)e^{\beta v_{i,j}}$ on $M_t$.
Evaluating $\mathcal{F}_{t,j}(\beta)$ adds one masked LSE per channel and preserves the prior’s asymptotic complexity.
\end{lemma}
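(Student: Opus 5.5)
The plan is a direct algebraic identity followed by a cost count. Write $Z_\beta:=\sum_{r\in M_t}p_t(r)e^{\beta v_{r,j}}$, so that $\mathcal{F}_{t,j}(\beta)=\beta^{-1}\log Z_\beta$ and $q^{(\beta)}_{t,j}(i)=p_t(i)e^{\beta v_{i,j}}/Z_\beta$ on $M_t$. The key step is to take logarithms of the posterior: for every $i\in M_t$,
\[
\log\frac{p_t(i)}{q^{(\beta)}_{t,j}(i)} \;=\; \log Z_\beta-\beta v_{i,j}.
\]
This is well defined because $p_t(i)>0$ on $M_t$ by the standing assumptions, and hence $q^{(\beta)}_{t,j}(i)>0$ as well.

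Averaging this identity against $p_t$ gives
\[
\mathrm{KL}\big(p_t\Vert q^{(\beta)}_{t,j}\big)=\log Z_\beta-\beta\,\mu_{t,j}.
\]
Dividing by $\beta>0$ and rearranging yields $\mathcal{F}_{t,j}(\beta)=\mu_{t,j}+\beta^{-1}\mathrm{KL}(p_t\Vert q^{(\beta)}_{t,j})$. This is exactly the decomposition already recorded in Proposition~\ref{prop:app-baseline}, so one may simply cite it. I would still include the one-line computation, since it also makes clear that the KL term is finite: $p_t$ and $q^{(\beta)}_{t,j}$ have the same support $M_t$.

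For the cost claim, the plan is to observe that $Z_\beta$ is just the prior's expectation of the transformed value stream $e^{\beta v_{i,j}}$. Whatever mechanism produces $\mu_{t,j}=\sum_i p_t(i)v_{i,j}$ therefore produces $Z_\beta$ by mixing the stacked vector $[v_{i,j},\,e^{\beta v_{i,j}}]$ with the same weights $p_t(i)$, in the same pass.
\begin{itemize}
\item For softmax priors this is one extra column in $AV$, so the cost stays $O(T^2)$.
\item For kernel, RNN or SSM priors the same associative recurrence applies to the extra value channel, so the cost stays $O(T)$, as in Proposition~\ref{prop:prior-normalize}.
\end{itemize}
The remaining work is an elementwise $\log$ and a division by $\beta$, costing $O(D)$ per step. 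In practice the LSE is stabilized by subtracting $\max_{i\in M_t}v_{i,j}$, i.e.\ using the shift law of Proposition~\ref{prop:app-prior}(ii), which does not change the asymptotics. Complexity preservation then follows, in line with Proposition~\ref{prop:app-complexity}.

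The main obstacle is not the algebra but stating the complexity claim precisely. Only a \emph{fixed} $\beta$ keeps the exponentiated stream independent of $t$, so it can be shared across all steps. A streaming prior also needs the max-shift to be handled by running rescaling, the same online-softmax trick used in FlashAttention. I would state both points explicitly as part of the complexity argument.
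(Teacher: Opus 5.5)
Your proposal is correct and follows the paper's route. The identity is exactly the direct algebra from $q^{(\beta)}_{t,j}\propto p_t(i)e^{\beta v_{i,j}}$: you take $\log\bigl(p_t(i)/q^{(\beta)}_{t,j}(i)\bigr)=\log Z_\beta-\beta v_{i,j}$ and average under $p_t$. The cost claim is the same one-pass mixing of $[v_{i,j},\,e^{\beta v_{i,j}}]$ with the prior weights that the paper uses; your extra remarks on finiteness of the KL, the need for a fixed $\beta$, and running max-rescaling for streaming priors are sound refinements of the paper's terse ``cost follows since the support is $M_t$''.
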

\begin{proof}
Algebra from $q^{(\beta)}\propto p\,e^{\beta v}$ yields the identity; cost follows since the support is $M_t$.
\end{proof}

\subsection{Monotonicity and hidden temperature}
\label{app:fem-ltl:hidden}
\begin{proposition}
\label{prop:ltl-hidden}
Let $F_{t,j}(\beta)=\beta^{-1}\log\sum_{i\in M_t}p_t(i)e^{\beta v_{i,j}}$ and $\Delta_{t,j}(\beta)=F_{t,j}(\beta)-F_{t,j}(0)$.
Then $F'_{t,j}(\beta)=\beta^{-2}\mathrm{KL}(q^{(\beta)}_{t,j}\Vert p_t)\ge 0$, with strict positivity unless $v_{\cdot,j}$ is $p_t$‑a.s.\ constant. For any $\lambda\in[0,1]$, there exists a unique
$\beta^\star_{t,j}(\lambda)\in[0,\beta_{\max}]$ such that
$(1-\lambda)\mu_{t,j}+\lambda F_{t,j}(\beta_{\max})=F_{t,j}(\beta^\star_{t,j}(\lambda))$.
Moreover $\lambda\mapsto \beta^\star_{t,j}(\lambda)$ is continuous and strictly increasing.
\end{proposition}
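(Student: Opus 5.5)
The plan is to work with the cumulant generating function $\Lambda(\beta):=\log\sum_{i\in M_t}p_t(i)e^{\beta v_{i,j}}$, so that $F_{t,j}(\beta)=\Lambda(\beta)/\beta$ for $\beta>0$. The first step is to fix the value at zero. Since $\Lambda(0)=0$ and $\Lambda'(0)=\mu_{t,j}$, we set $F_{t,j}(0):=\lim_{\beta\downarrow0}\Lambda(\beta)/\beta=\mu_{t,j}$. This also follows from the small-$\beta$ expansion in Proposition~\ref{prop:app-baseline}, and it makes $F_{t,j}$ continuous on $[0,\beta_{\max}]$ and $C^\infty$ on $(0,\beta_{\max}]$.

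Next comes the derivative. By the quotient rule, $F'(\beta)=\big(\beta\Lambda'(\beta)-\Lambda(\beta)\big)/\beta^2$, and $\Lambda'(\beta)=\mathbb{E}_{q^{(\beta)}}[v_{i,j}]$. The KL term is obtained by substituting $\log\big(q^{(\beta)}(i)/p_t(i)\big)=\beta v_{i,j}-\Lambda(\beta)$:
\[
\mathrm{KL}(q^{(\beta)}\Vert p_t)=\beta\,\mathbb{E}_{q^{(\beta)}}[v_{i,j}]-\Lambda(\beta)=\beta\Lambda'(\beta)-\Lambda(\beta),
\]
which gives $F'=\beta^{-2}\mathrm{KL}(q^{(\beta)}\Vert p_t)\ge0$.

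For strictness, $\mathrm{KL}(q^{(\beta)}\Vert p_t)=0$ iff $q^{(\beta)}=p_t$ on $M_t$, i.e.\ iff $e^{\beta v_{i,j}}$ is constant on $M_t$. For $\beta>0$ this is the same as $v_{\cdot,j}$ being $p_t$-a.s.\ constant. Alternatively, $\beta\Lambda'-\Lambda$ has derivative $\beta\Lambda''(\beta)=\beta\,\mathrm{Var}_{q^{(\beta)}}(v)>0$ and vanishes at $0$. So in the nonconstant case $F$ is strictly increasing on $[0,\beta_{\max}]$, with $F'>0$ on $(0,\beta_{\max}]$.

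For the hidden temperature, write $\Delta(\beta)=F(\beta)-\mu_{t,j}$. In the nonconstant case $\Delta$ is a continuous, strictly increasing bijection from $[0,\beta_{\max}]$ onto $[0,\Delta(\beta_{\max})]$ with $\Delta(\beta_{\max})>0$. The equation $(1-\lambda)\mu_{t,j}+\lambda F(\beta_{\max})=F(\beta)$ is equivalent to $\Delta(\beta)=\lambda\Delta(\beta_{\max})$. The right-hand side lies in $[0,\Delta(\beta_{\max})]$, so the intermediate value theorem gives existence and strict monotonicity gives uniqueness. Hence $\beta^\star(\lambda)=\Delta^{-1}(\lambda\Delta(\beta_{\max}))$.

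Monotonicity and continuity of $\lambda\mapsto\beta^\star(\lambda)$ follow because it is the composition of the strictly increasing affine map $\lambda\mapsto\lambda\Delta(\beta_{\max})$ with $\Delta^{-1}$. The inverse of a continuous strictly increasing function on a compact interval is itself continuous and strictly increasing.

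The main obstacle is the degenerate case where $v_{\cdot,j}$ is $p_t$-a.s.\ constant. There $F\equiv\mu_{t,j}$, so every $\beta\in[0,\beta_{\max}]$ solves the equation and uniqueness and strict monotonicity fail. I would state explicitly that these claims are made in the nonconstant case, matching the qualifier already attached to strict positivity of $F'$. If a canonical choice is wanted in the degenerate case, one can take $\beta^\star(\lambda):=\lambda\beta_{\max}$, since the read is unaffected. The other delicate point is the endpoint $\beta=0$, which the limit argument in the first step handles; no differentiability at $0$ is needed, only continuity.
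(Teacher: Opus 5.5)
Your proposal is correct and follows the same route as the paper: you differentiate $F_{t,j}$ to get $F'=\beta^{-2}\mathrm{KL}(q^{(\beta)}\Vert p_t)$, then use continuity, strict monotonicity and the intermediate value theorem to obtain $\beta^\star(\lambda)=\Delta^{-1}(\lambda\Delta(\beta_{\max}))$. The paper's sketch leaves two points implicit that you handle correctly: defining $F_{t,j}(0):=\mu_{t,j}$ as a limit, and noting that uniqueness and strict monotonicity of $\lambda\mapsto\beta^\star(\lambda)$ hold only in the nondegenerate case.
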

\begin{proof}
Differentiate $F$ to obtain $F'(\beta)=\beta^{-2}\mathrm{KL}(q^{(\beta)}\Vert p)$.
Continuity and strict monotonicity on $[0,\beta_{\max}]$ imply the claim by the intermediate value theorem; strict increase follows from strict positivity of $F'$ in the nondegenerate case.
\end{proof}

\begin{corollary}[Reparameterization equivalence]
For any smooth loss $\mathcal{L}$, optimizing $\lambda_{t,j}$ in $\mathcal{L}(\widetilde{\mathcal{F}}_{t,j}(\lambda_{t,j}))$
is a strictly monotone reparameterization of optimizing $\beta$ in $\mathcal{L}(F_{t,j}(\beta))$:
$\partial\mathcal{L}/\partial\lambda
=(\partial\mathcal{L}/\partial F)\,F'(\beta^\star)\,(\partial \beta^\star/\partial \lambda)$ with $F'(\beta^\star)>0$.
\end{corollary}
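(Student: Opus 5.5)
The statement is the corollary: optimizing $\lambda_{t,j}$ is a strictly monotone reparameterization of optimizing $\beta$, with the stated chain rule. The plan is to realize $\widetilde{\mathcal{F}}_{t,j}(\lambda)=F_{t,j}(\beta^\star_{t,j}(\lambda))$ as an identity of functions of $\lambda$ on $[0,1]$, and then differentiate it.

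\textbf{Step 1: the composition identity.} By Proposition~\ref{prop:ltl-hidden}, for each $\lambda\in[0,1]$ there is a unique $\beta^\star(\lambda)\in[0,\beta_{\max}]$ with this property, and $\lambda\mapsto\beta^\star(\lambda)$ is continuous and strictly increasing. Here $F(0):=\mu_{t,j}$ is the continuous extension, justified by the small-$\beta$ expansion in Proposition~\ref{prop:app-baseline}. Hence $\mathcal{L}(\widetilde{\mathcal{F}}(\lambda))=\mathcal{L}(F(\beta^\star(\lambda)))$ for every $\lambda$. Since $\beta^\star$ is a strictly increasing continuous bijection of $[0,1]$ onto $[0,\beta_{\max}]$, minimizers and level sets correspond one-to-one. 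This is the monotone reparameterization claim.

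\textbf{Step 2: the chain rule.} I first show $\beta^\star$ is differentiable. Writing $\Delta(\beta)=F(\beta)-F(0)$, we have $\beta^\star(\lambda)=\Delta^{-1}(\lambda\,\Delta(\beta_{\max}))$. The function $F$ is $C^\infty$ on $(0,\beta_{\max}]$, being a log-sum-exp divided by $\beta$, and it extends smoothly to $\beta=0$ through its cumulant expansion. By Proposition~\ref{prop:app-baseline}, $F'(\beta)=\beta^{-2}\mathrm{KL}(q^{(\beta)}\Vert p_t)$. In the nondegenerate case this is strictly positive for $\beta>0$, because $q^{(\beta)}\neq p_t$ whenever $v_{\cdot,j}$ is not $p_t$-a.s.\ constant. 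The inverse function theorem then gives
\[
\frac{\partial\beta^\star}{\partial\lambda}=\frac{\Delta(\beta_{\max})}{F'(\beta^\star)}>0 .
\]
The displayed formula $\partial\mathcal{L}/\partial\lambda=(\partial\mathcal{L}/\partial F)\,F'(\beta^\star)\,(\partial\beta^\star/\partial\lambda)$ follows by the chain rule applied to $\mathcal{L}\circ F\circ\beta^\star$. It is also consistent with the direct computation $\partial\widetilde{\mathcal{F}}/\partial\lambda=F(\beta_{\max})-\mu=\Delta(\beta_{\max})$.

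\textbf{Main obstacle: the endpoint $\beta^\star=0$, i.e.\ $\lambda=0$.} There $\beta^{-2}\mathrm{KL}$ has the indeterminate form $0/0$. I would resolve it with the small-$\beta$ expansion, which gives $\mathrm{KL}(q^{(\beta)}\Vert p_t)=\tfrac{\beta^2}{2}\mathrm{Var}_{p_t}(v_{i,j})+O(\beta^3)$. Hence $F'(0^+)=\tfrac12\mathrm{Var}_{p_t}(v_{i,j})>0$ in the nondegenerate case, consistent with the slope $\beta/2\cdot\mathrm{Var}$ in Proposition~\ref{prop:app-baseline}. So $F'(\beta^\star)>0$ holds on the whole closed interval.

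\textbf{Degenerate case.} If $v_{\cdot,j}$ is $p_t$-a.s.\ constant, both parameterizations yield a constant read, so the reparameterization statement holds vacuously. The corollary should be read as applying to the nondegenerate case.
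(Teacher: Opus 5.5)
Your proposal is correct and follows the paper's route. The corollary comes straight from Proposition~\ref{prop:ltl-hidden}: that proposition gives the identity $\widetilde{\mathcal{F}}_{t,j}(\lambda)=F_{t,j}(\beta^\star(\lambda))$ with $\beta^\star$ strictly increasing, by way of $F'(\beta)=\beta^{-2}\mathrm{KL}(q^{(\beta)}\Vert p_t)>0$, and the chain rule then finishes the argument. You also fill in two points the paper leaves implicit: differentiability of $\beta^\star$ via the inverse function theorem, giving $\partial\beta^\star/\partial\lambda=\Delta(\beta_{\max})/F'(\beta^\star)$, and the $0/0$ endpoint at $\lambda=0$, which you resolve with $F'(0^+)=\tfrac12\mathrm{Var}_{p_t}(v_{i,j})$; Lemma~\ref{lem:mono} only asserts differentiability on $(0,\infty)$.
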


\subsection{KL‑controlled interpretation of the gate}
\label{app:fem-ltl:kl}
From $\mathcal{F}_{t,j}(\beta)-\mu_{t,j}=\beta^{-1}\mathrm{KL}(p_t\Vert q^{(\beta)}_{t,j})$ and \eqref{eq:hidden-temp},
\[
\frac{1}{\beta^\star_{t,j}(\lambda)}\,
\mathrm{KL}\!\big(p_t\Vert q^{(\beta^\star_{t,j}(\lambda))}_{t,j}\big)
\;=\;
\lambda\cdot
\frac{1}{\beta_{\max}}\,
\mathrm{KL}\!\big(p_t\Vert q^{(\beta_{\max})}_{t,j}\big),
\]
so $\lambda$ specifies the fraction of the achievable KL improvement realized at step $t$.

\subsection{One‑pass implementation}
\label{app:fem-ltl:impl}
Form the augmented value stream $\bar v_{i,j}=[\,v_{i,j},\,e^{\beta_{\max}v_{i,j}}\,]$ and compute
$\sum_{i\in M_t}p_t(i)\bar v_{i,j}=[\,\mu_{t,j},\,\sum_{i}p_t(i)e^{\beta_{\max}v_{i,j}}\,]$ in one pass.
Then
$\mathcal{F}^{\max}_{t,j}=\beta_{\max}^{-1}\log\big(\sum_{i}p_t(i)e^{\beta_{\max}v_{i,j}}\big)$
and \eqref{eq:ltl}–\eqref{eq:ltl-final} follow.

\subsection{Geometry, stability, and additional properties}
\label{app:fem-ltl:geom}
For completeness we collect properties proved in Appendix~\ref{app:fem-props}:
(i) DV variational form $\mathcal{F}(\beta)=\max_{q}\{\mathbb{E}_q[v]-\beta^{-1}\mathrm{KL}(q\Vert p)\}$ with maximizer $q^{(\beta)}\propto p\,e^{\beta v}$;
(ii) gradient $\nabla_{v}\mathcal{F}(\beta)=q^{(\beta)}$, Hessian $\nabla^2_{v}\mathcal{F}(\beta)=\beta(\mathrm{Diag}(q^{(\beta)})-q^{(\beta)}{q^{(\beta)}}^\top)$ and $\beta/2$‑smoothness;
(iii) small‑$\beta$ expansion $\mathcal{F}(\beta)=\mu+\tfrac{\beta}{2}\mathrm{Var}_p(v)+O(\beta^2)$;
(iv) mask preservation; shift/scale laws; concavity in $p$; convexity in unnormalized logits; difference‑of‑convex in normalized logits;
(v) complexity preservation and capacity consequences when $\beta$ is large.

\subsection{Complexity summary}
\label{app:fem-ltl:complexity}
LTL requires one expectation and one masked LSE at $\beta_{\max}$ per channel, both over $M_t$, thus matching the prior’s asymptotic complexity ($O(T^2)$ for softmax; $O(T)$ for kernel/SSM priors) while enabling dynamic temperature control in a single pass.

\section{Details for Two-level Gated FEM}
\label{app:twogate}

\subsection{Inner gate as hidden temperature: properties and proof}
\label{app:twogate:monotone}

\begin{lemma}[Monotonicity and smoothness]
\label{lem:mono}
$F_{t,j}$ is continuous on $[0,\infty)$, differentiable on $(0,\infty)$, and
\begin{equation}
\label{eq:monotone-derivative-appendix}
\frac{\mathrm{d}}{\mathrm{d}\beta}F_{t,j}(\beta)
=\beta^{-2}\,\mathrm{KL}\!\big(q^{(\beta)}_{t,j}\,\Vert\,p_t\big)\ge 0,
\end{equation}
with equality iff $v_{\cdot,j}$ is $p_t$-a.s.\ constant. Moreover $F_{t,j}$ is convex and $\beta/2$-smooth in $v_{\cdot,j}$.
\end{lemma}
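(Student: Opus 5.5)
We write $Z(\beta):=\sum_{i\in M_t}p_t(i)e^{\beta v_{i,j}}$ and $F_{t,j}(\beta)=\beta^{-1}\log Z(\beta)$ for $\beta>0$, and we set $F_{t,j}(0):=\mu_{t,j}$. The plan has three parts: continuity at $0$, the derivative identity on $(0,\infty)$, and the convexity/smoothness claim, which can be taken from Proposition~\ref{prop:app-geom}.

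\textbf{Continuity.} $Z$ is a finite sum of exponentials, so it is $C^\infty$ and strictly positive. Hence $F_{t,j}$ is $C^\infty$ on $(0,\infty)$. At $\beta=0$ we have $\log Z(0)=0$ and $(\log Z)'(0)=\sum_i p_t(i)v_{i,j}=\mu_{t,j}$. L'H\^opital's rule, or equivalently the small-$\beta$ expansion in Proposition~\ref{prop:app-baseline}, then gives $F_{t,j}(\beta)\to\mu_{t,j}$. So $F_{t,j}$ is continuous on $[0,\infty)$ with this convention.

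\textbf{Derivative.} Let $q:=q^{(\beta)}_{t,j}$. Differentiating gives
\[
F'(\beta)=-\beta^{-2}\log Z+\beta^{-1}\,\mathbb{E}_q[v_{\cdot,j}]=\beta^{-2}\big(\beta\,\mathbb{E}_q[v]-\log Z\big).
\]
Since $\log\big(q(i)/p_t(i)\big)=\beta v_{i,j}-\log Z$, taking the expectation under $q$ yields
\[
\mathrm{KL}(q\Vert p_t)=\beta\,\mathbb{E}_q[v]-\log Z .
\]
Substituting this into the display above gives \eqref{eq:monotone-derivative-appendix}. Nonnegativity of the derivative is then just nonnegativity of KL.

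\textbf{Equality case.} The main care point is the ``iff'' for the equality case. $\mathrm{KL}(q\Vert p_t)=0$ holds iff $q=p_t$ on $M_t$. Because $p_t(i)>0$ on $M_t$, this holds iff $e^{\beta v_{i,j}}$ is the same for every $i\in M_t$, which for $\beta>0$ means $v_{\cdot,j}$ is constant on $M_t$, i.e.\ $p_t$-a.s.\ constant. The converse is immediate.

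\textbf{Convexity and smoothness.} These follow verbatim from Proposition~\ref{prop:app-geom}. The Hessian with respect to $v_{\cdot,j}$ is $\beta(\mathrm{Diag}(q)-qq^\top)\succeq0$, and the Gershgorin bound gives operator norm at most $\beta/2$.
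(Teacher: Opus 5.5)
Your proof is correct and follows the paper's argument in substance. The paper gets the derivative identity by envelope differentiation of the Donsker--Varadhan form $F_{t,j}(\beta)=\max_{q}\{\mathbb{E}_q[v_{\cdot,j}]-\beta^{-1}\mathrm{KL}(q\Vert p_t)\}$, while you differentiate $\beta^{-1}\log Z$ directly and use $\mathrm{KL}(q\Vert p_t)=\beta\,\mathbb{E}_q[v]-\log Z$, which is exactly the statement that $q^{(\beta)}_{t,j}$ attains that maximum; your explicit handling of continuity at $\beta=0$ (with $F_{t,j}(0)=\mu_{t,j}$) and of the equality case fills in details the paper's one-line proof leaves implicit, and convexity and $\beta/2$-smoothness come from the same Fisher-covariance Hessian in both.
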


\begin{proof}
Standard Donsker–Varadhan calculus yields
$F_{t,j}(\beta)=\max_{q\in\Delta(M_t)}\{\mathbb{E}_q[v_{\cdot,j}]-(1/\beta)\mathrm{KL}(q\Vert p_t)\}$.
Envelope differentiation gives \eqref{eq:monotone-derivative-appendix}. Convexity/smoothness follow from the Fisher covariance of $q^{(\beta)}_{t,j}$. 
\end{proof}

\begin{proposition}[Inner gate as hidden-temperature free energy]
\label{prop:ltl-hidden-main}
For each channel $j$ and any $\lambda_{t,j}\!\in\![0,1]$ there exists a unique $\beta_{\mathrm{hid},t,j}\!\in\![0,\beta_{\max,j}]$ such that
\[
\widetilde{F}_{t,j}(\lambda_{t,j})
=\beta_{\mathrm{hid},t,j}^{-1}\log\!\sum_{i}p_t(i)\exp\!\big(\beta_{\mathrm{hid},t,j}\,v_{i,j}\big).
\]
Moreover, $\lambda_{t,j}\mapsto \beta_{\mathrm{hid},t,j}$ is strictly increasing unless $v_{\cdot,j}$ is $p_t$-a.s.\ constant.
\end{proposition}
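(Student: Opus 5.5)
The plan is an intermediate value argument on the monotone map $\beta\mapsto F_{t,j}(\beta)$ over $[0,\beta_{\max,j}]$, using Lemma~\ref{lem:mono}. First I will fix the convention at $\beta=0$: set $F_{t,j}(0):=\lim_{\beta\downarrow 0}F_{t,j}(\beta)=\mu_{t,j}$. This limit follows from the small-$\beta$ expansion in Proposition~\ref{prop:app-baseline}, or directly from L'Hôpital applied to $\log\sum_i p_t(i)e^{\beta v_{i,j}}$, whose derivative at $0$ is $\mathbb{E}_{p_t}[v_{i,j}]$. With this convention, the formula $\beta^{-1}\log\sum_i p_t(i)e^{\beta v_{i,j}}$ is read as $\mu_{t,j}$ when $\beta_{\mathrm{hid}}=0$, so $F_{t,j}$ is continuous on $[0,\beta_{\max,j}]$. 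The endpoint cases are then immediate: $\lambda=0$ gives $\widetilde F=\mu_{t,j}=F(0)$, and $\lambda=1$ gives $\widetilde F=F(\beta_{\max,j})$.

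Next comes the nondegenerate case, where $v_{\cdot,j}$ is not $p_t$-a.s.\ constant. By Lemma~\ref{lem:mono}, $F'_{t,j}(\beta)=\beta^{-2}\mathrm{KL}(q^{(\beta)}\Vert p_t)$. Here $\mathrm{KL}(q^{(\beta)}\Vert p_t)>0$ for every $\beta>0$, since $q^{(\beta)}\neq p_t$ exactly when $e^{\beta v_{\cdot,j}}$ is nonconstant on $M_t$. So $F$ is strictly increasing on $(0,\beta_{\max}]$, and by continuity at $0$ it is strictly increasing on $[0,\beta_{\max}]$. Hence $\Delta(\beta):=F(\beta)-F(0)$ is a continuous strictly increasing bijection from $[0,\beta_{\max}]$ onto $[0,\Delta(\beta_{\max})]$, with $\Delta(\beta_{\max})>0$.

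The target value satisfies $\widetilde F(\lambda)-\mu=\lambda\,\Delta(\beta_{\max})$, which lies in this range. The intermediate value theorem gives existence of $\beta_{\mathrm{hid}}=\Delta^{-1}(\lambda\Delta(\beta_{\max}))$, and injectivity gives uniqueness. Since $\lambda\mapsto\lambda\Delta(\beta_{\max})$ is strictly increasing and $\Delta^{-1}$ is strictly increasing and continuous (the inverse of a continuous strictly monotone function on an interval), the composition $\lambda\mapsto\beta_{\mathrm{hid}}$ is strictly increasing and continuous.

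The degenerate case needs separate handling. If $v_{\cdot,j}\equiv c$ on $M_t$, then $F\equiv c$ and $\widetilde F\equiv c$, so every $\beta\in[0,\beta_{\max}]$ works. Uniqueness therefore fails, and the statement's uniqueness claim has to be read as holding in the nondegenerate case, matching the ``unless'' clause. I would state this explicitly, and if a canonical choice is wanted, pick $\beta_{\mathrm{hid}}=\lambda\beta_{\max}$. The main subtlety I expect is this $\beta=0$/degeneracy bookkeeping: making sure the strict positivity of the KL term holds for all $\beta>0$, and not just almost everywhere, so that strict monotonicity is genuine rather than merely nondecreasing.
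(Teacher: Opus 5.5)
Your proposal is correct and follows the same route as the paper's proof: invert the continuous, strictly increasing map $\Delta_{t,j}(\beta)=F_{t,j}(\beta)-F_{t,j}(0)$ from Lemma~\ref{lem:mono} at the level $\lambda_{t,j}\Delta_{t,j}(\beta_{\max,j})$. Your additional bookkeeping makes explicit two points the paper's proof passes over silently: you define $F_{t,j}(0)=\mu_{t,j}$ as the $\beta\downarrow 0$ limit, and you observe that uniqueness genuinely fails when $v_{\cdot,j}$ is $p_t$-a.s.\ constant, so it must be read as a claim about the nondegenerate case.
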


\begin{proof}
\label{prop:ltl-hidden-appendix}
Let $\Delta_{t,j}(\beta)=F_{t,j}(\beta)-F_{t,j}(0)$. By Lemma~\ref{lem:mono}, $\Delta_{t,j}$ is continuous, strictly increasing on $[0,\beta_{\max,j}]$ unless $v_{\cdot,j}$ is constant. For any $\lambda_{t,j}\!\in\![0,1]$, define
\[
\beta_{\mathrm{hid},t,j}
=\Delta_{t,j}^{-1}\!\big(\lambda_{t,j}\Delta_{t,j}(\beta_{\max,j})\big)\in[0,\beta_{\max,j}],
\]
which is unique by strict monotonicity. Substituting yields
$\widetilde{F}_{t,j}(\lambda_{t,j})
=F_{t,j}(\beta_{\mathrm{hid},t,j})$ as claimed.
\end{proof}

\paragraph{Reverse-KL improvement over the mean.}
For any $\beta>0$,
\begin{equation}
\label{eq:revKL}
F_{t,j}(\beta)=\mu_{t,j}+\frac{1}{\beta}\,\mathrm{KL}\!\big(p_t\Vert q^{(\beta)}_{t,j}\big),
\end{equation}
so $\widetilde{F}_{t,j}(\lambda)$ improves over $\mu_{t,j}$ by a controlled reverse-KL term at the hidden temperature. This explains the mean$\to$soft-max interpolation effect of the inner gate.

\subsection{Complexity, single-pass computation, and streaming}
\label{app:twogate:impl}

\begin{proposition}[Complexity preservation]
\label{prop:complexity-main}
Computing \eqref{eq:inner-gate-main}–\eqref{eq:fem-two-gates-main} requires one expectation and one masked log-sum-exp at $\bm{\beta}_{\max}$ per channel, hence matches the asymptotic time complexity of the prior $p_t$ (e.g., $O(T^2)$ for softmax; $O(T)$ for kernel/SSM priors).
\end{proposition}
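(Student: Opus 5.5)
We reduce \eqref{eq:inner-gate-main}–\eqref{eq:fem-two-gates-main} to a constant number of mixing operations with the prior $p_t$, plus $O(D)$ pointwise work per step. The plan is to show that every quantity entering $\bm o_t$ is either a pointwise function of per-token features or a single weighted sum $\sum_{i\in M_t}p_t(i)\,\bm w_i$ for a suitable value stream $\bm w_i$. The underlying mechanism already computes such a sum once, with $\bm w_i=\bm v_i$, at cost $C_{\mathrm{prior}}(T)$. This cost is $O(T^2D)$ for softmax and $O(TD)$ times the state size for kernel/SSM priors.

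First, form the augmented stream $\bar{\bm v}_i=[\bm v_i,\ \exp(\bm\beta_{\max}\odot\bm v_i)]\in\mathbb{R}^{2D}$, as in \S\ref{app:fem-ltl:impl}. This costs $O(D)$ per token and $O(TD)$ overall. Because $\bm\beta_{\max}$ is a fixed (learned, token-independent) vector, $\bar{\bm v}_i$ depends only on token $i$. So mixing $\bar{\bm v}_i$ with $p_t(i)$ is exactly the prior's own read applied to a value stream of doubled width. In the softmax case this is $A\bar V$ with the same $A$, costing $O(T^2\cdot 2D)$. In the associative/streaming case, Proposition~\ref{prop:prior-normalize} gives the same recurrence with state width doubled, costing $O(T\cdot 2D\cdot m)$, and the normalizer is shared. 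In both cases the asymptotic order in $T$ is unchanged and only the constant doubles.

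Second, the output of this single mixing is $[\bm\mu_t,\ \bm S_t]$ with $\bm S_t=\sum_i p_t(i)\exp(\bm\beta_{\max}\odot\bm v_i)$. We then set $\bm F^{\max}_t=\bm\beta_{\max}^{-1}\odot\log\bm S_t$, which is pointwise with $O(D)$ cost. The masking is inherited because terms with $p_t(i)=0$ vanish (Proposition~\ref{prop:app-prior}(i)), so the masked LSE lives on $M_t$ automatically. The gates $\bm\lambda_t,\bm g_t$ come from current-token features through a linear map, sigmoid/softplus, and RMSNorm. Together with the convex combination and the Hadamard product in \eqref{eq:fem-two-gates-main}, they add $O(D^2)$ or $O(D)$ per step independent of $T$. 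That is $O(TD^2)$ in total, which is dominated by, or equal to, the projections the prior already performs.

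The one point needing care is numerical stability rather than complexity. Computing $\exp(\beta_{\max}v)$ directly may overflow. We would handle this with a per-channel max-shift $c_j=\max_i\beta_{\max,j}v_{i,j}$, using the shift law of Proposition~\ref{prop:app-prior}(ii). In the softmax case this max is an $O(TD)$ reduction. In the streaming case a running max with rescaling of the state can be maintained in $O(1)$ per step per channel, in the style of online softmax. Either way the order of the computation is unaffected, which completes the argument.
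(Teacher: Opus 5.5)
Your proposal is correct and matches the paper's proof: mix the augmented stream $[\bm v_i,\ \exp(\bm\beta_{\max}\odot\bm v_i)]$ once with $p_t$, obtain $\bm\mu_t$ and $\bm F^{\max}_t$ from it by a pointwise logarithm and rescaling, and treat the gates as pointwise modulations. Your observation that $\bm\beta_{\max}$ is token-independent, so the prior's own operator can mix the augmented stream, is the point the paper leaves implicit; the stability remark is a harmless extra, though for GLA-type states the running-max rescaling costs $O(m)$ per channel per step rather than $O(1)$, which still matches the state-update cost.
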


\begin{proof}
\label{prop:complexity-appendix}
Compute
$
\sum_i p_t(i)\,[\,\bm{v}_i,\ \exp(\bm{\beta}_{\max}\!\odot\!\bm{v}_i)\,]
$
once, then split to obtain $\bm{\mu}_t$ and $\bm{F}^{\max}_t$. This requires one expectation and one masked log-sum-exp per channel on the prior support $M_t$, matching the prior's asymptotic time (softmax $O(T^2)$; kernel/SSM $O(T)$). The outer gate $\bm{g}_t$ is a pointwise modulation.
\end{proof}

\paragraph{Streaming compatibility.}
For associative priors (kernel/SSM), the normalized read is computed by the same scan used for $p_t$; concatenating a constant “$\mathbf{1}$” channel yields the normalizer and numerator in one pass. The LSE branch uses the same support $M_t$ and thus preserves streaming.

\paragraph{Numerical stability.}
We use standard LSE stabilization per channel: subtract $\max_i(\beta_{\max,j}v_{i,j})$ inside the exponential and add it back after the logarithm. Gradient clipping for $\bm{\beta}_{\max}$ prevents overflow when tasks push toward hard selection.

\subsection{Containment of mixer families}
\label{app:twogate:contain}

\begin{proposition}[Formal containment]
\label{prop:containment-appendix}
(i) $\bm{\lambda}_t\!=\!\bm{0}$ gives $\bm{o}_t=\sum_i p_t(i)\,(\bm{g}_t\!\odot\!\bm{v}_i)$, matching per-channel linear reweighting.
(ii) $0\!<\!\bm{\lambda}_t\!<\!\bm{1}$ yields a monotone, convex aggregator in each channel that interpolates between $\mu_{t,j}$ and $\max_i v_{i,j}$ as $\lambda_{t,j}$ increases.
(iii) Allowing $\bm{\lambda}_t,\bm{g}_t$ to depend on $(\text{ctx},p_t,\bm{\mu}_t,\bm{F}^{\max}_t)$ realizes token-separable couplings of the form $\sum_i f(\alpha_{t,i},\bm{v}_i)$ and adds cross-token competition through the log-sum-exp term.
\end{proposition}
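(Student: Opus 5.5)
The statement to prove is Proposition~\ref{prop:containment-appendix}. I would prove its three parts separately, working channel by channel from the two-level read \eqref{eq:fem-two-gates-main}, $o_{t,j}=g_{t,j}[(1-\lambda_{t,j})\mu_{t,j}+\lambda_{t,j}F^{\max}_{t,j}]$.

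For (i), I substitute $\lambda_{t,j}=0$ to get $o_{t,j}=g_{t,j}\mu_{t,j}$. Since $g_{t,j}$ does not depend on the summation index $i$, linearity of the sum gives $g_{t,j}\mu_{t,j}=\sum_i p_t(i)\,g_{t,j}v_{i,j}$. This is exactly the per-channel linear reweighting $\sum_i p_t(i)(\bm g_t\odot\bm v_i)$.

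For (ii), I view $\widetilde F_{t,j}(\lambda)$ as a function of the value vector $v_{\cdot,j}$.
\begin{itemize}
\item \emph{Monotonicity in the values.} By Proposition~\ref{prop:app-geom}, both $\mu_{t,j}$ (gradient $p_t\ge 0$) and $F^{\max}_{t,j}$ (gradient $q^{(\beta_{\max})}_{t,j}\ge 0$) are nondecreasing in every coordinate $v_{i,j}$.
\item \emph{Convexity in the values.} $\mu_{t,j}$ is linear and $F^{\max}_{t,j}$ is convex by the same proposition. A combination with nonnegative weights $1-\lambda,\lambda$ keeps both properties, so the aggregator is monotone and convex.
\item \emph{Monotonicity in $\lambda$.} $\partial_\lambda \widetilde F=F^{\max}_{t,j}-\mu_{t,j}=\beta_{\max}^{-1}\mathrm{KL}(p_t\Vert q^{(\beta_{\max})})\ge 0$ by Proposition~\ref{prop:app-baseline}, with strict inequality unless the values are a.s.\ constant.
\item \emph{Endpoints.} At $\lambda=0$ the read equals $\mu_{t,j}$. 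At $\lambda=1$ it equals $F^{\max}_{t,j}$, which by Proposition~\ref{prop:app-bounds} lies within $\beta_{\max}^{-1}\log(1/p_t(i^\star))$ of $\max_i v_{i,j}$ and tends to it as $\beta_{\max}\to\infty$.
\item \emph{Free-energy form.} Proposition~\ref{prop:ltl-hidden-main} identifies each intermediate $\lambda$ with the free energy at a hidden temperature in $[0,\beta_{\max}]$, so the read stays inside the free-energy family.
\end{itemize}

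The main obstacle is that ``interpolates to $\max_i v_{i,j}$'' is only asymptotic in $\beta_{\max}$. I would state it precisely: the upper endpoint is $F^{\max}_{t,j}\in[\max_i v_{i,j}+\beta_{\max}^{-1}\log p_t(i^\star),\ \max_i v_{i,j}]$.

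For (iii), I would give a construction plus a non-separability witness.
\begin{itemize}
\item \emph{Construction.} Setting $\lambda_{t,j}=0$ and allowing $g_{t,j}$ to depend on context recovers the gated token-separable forms $\sum_i \alpha_{t,i}(\bm\beta_t\odot\bm v_i)$ of Appendix~\ref{app:separable}.
\item \emph{Non-separability.} When $\lambda_{t,j}>0$, I would show the LSE term is genuinely non-separable. Its mixed partial $\partial^2 F^{\max}/\partial v_{i,j}\partial v_{r,j}=-\beta_{\max} q_i q_r\neq 0$ for $i\neq r$ on the support, using the Hessian in Proposition~\ref{prop:app-geom}. No function of the form $\sum_i f(\alpha_{t,i},v_{i,j})$ has nonzero cross partials, so this term supplies cross-token competition.
\end{itemize}

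Since (iii) is definitional, I would phrase ``realizes'' as containment of the $\lambda=0$ slice plus this strict-extension witness.
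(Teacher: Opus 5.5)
Your proof is correct. For (i) and (ii) it follows the paper's route: the paper's proof is only ``direct substitution of the choices for $\bm\lambda_t$ and identification of limits $\beta\to0$ and $\beta\to\infty$.'' That is your $\lambda=0$ substitution together with the hidden-temperature identification (Proposition~\ref{prop:ltl-hidden-main}) and the concentration bounds (Proposition~\ref{prop:app-bounds}).

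Your version delivers more than the paper's. Monotonicity in $\lambda$ is actually proved via $F^{\max}_{t,j}-\mu_{t,j}=\beta_{\max}^{-1}\mathrm{KL}(p_t\Vert q^{(\beta_{\max})}_{t,j})$. Convexity and monotonicity in the values come from the gradient and Hessian. You also make explicit what the paper hides in the word ``limits'': at fixed $\beta_{\max}$ the upper endpoint is $F^{\max}_{t,j}$, which reaches $\max_i v_{i,j}$ only as $\beta_{\max}\to\infty$.

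For (iii) the paper gives no argument beyond substitution, so your mixed-partial witness $-\beta_{\max}q_iq_r\neq0$ is a genuine addition. Its negative sign is exactly what justifies calling the coupling ``competition.'' If you want to cover non-smooth $f$, use the vanishing of second mixed differences of $\sum_i f(\alpha_{t,i},\bm v_i)$ instead of partial derivatives.

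Narrowing ``realizes $\sum_i f(\alpha_{t,i},\bm v_i)$'' to the $\lambda=0$ slice is forced, not a weakness of your proof. Unless ctx contains the raw past values, which would make the claim vacuous, gates depending on $(\text{ctx},p_t,\bm\mu_t,\bm F^{\max}_t)$ cannot produce a generic separable sum such as $\sum_i p_t(i)v_{i,j}^2$. That sum is not determined by $\mu_{t,j}$ and $F^{\max}_{t,j}$. Also, since the paper takes $\bm g_t>0$, your construction yields $\sum_i\alpha_{t,i}(\bm\beta_t\odot\bm v_i)$ only for positive $\bm\beta_t$.
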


\begin{proof}
Direct substitution of the choices for $\bm{\lambda}_t$ and identification of limits $\beta\to 0$ and $\beta\to\infty$ per channel.
\end{proof}

\subsection{Capacity and hard-selection limits}
\label{app:twogate:capacity}

\begin{proposition}[Capacity and limits on the prior support]
\label{prop:capacity-main}
With $\bm{\lambda}_t\!\approx\!\bm{1}$ and sufficiently large $\bm{\beta}_{\max}$, the per-channel posterior concentrates on its own arg-max over the prior support $M_t=\{i:p_t(i)\!>\!0\}$, so the achievable channel-index assignment capacity attains $|M_t|^{D}$. In the limit $\bm{\lambda}_t\!=\!\bm{0}$, FEM reduces to the expectation baseline (the original read of the selection prior).
\end{proposition}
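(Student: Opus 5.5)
The plan is to reduce the statement to the per-channel results already established: Proposition~\ref{prop:ltl-hidden-main}, which identifies the inner-gated read with a free energy at a hidden temperature, and Proposition~\ref{prop:app-bounds}, which gives concentration at large $\beta$. Counting then follows as in Theorem~\ref{thm:app-capacity}. The argument has three parts.

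\emph{Per-channel reduction.} I first fix $t$ and a channel $j$ whose values on $M_t$ have a unique maximizer $a_j$ with margin $\Delta_{t,j}>0$. By Proposition~\ref{prop:ltl-hidden-main}, the read $\widetilde F_{t,j}(\lambda_{t,j})$ equals $F_{t,j}(\beta_{\mathrm{hid},t,j})$. This free energy has an associated posterior $q^{(\beta_{\mathrm{hid}})}_{t,j}$. The map $\lambda\mapsto\beta_{\mathrm{hid}}$ is continuous and strictly increasing, with $\beta_{\mathrm{hid}}(1)=\beta_{\max,j}$. So I interpret ``$\bm\lambda_t\approx\bm 1$'' as $\beta_{\mathrm{hid},t,j}\ge\beta_{\max,j}-\eta$ for a small $\eta$. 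Equivalently, I can take $\lambda_{t,j}=1$ exactly, which gives the read $F^{\max}_{t,j}$.

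\emph{Concentration.} Applying Proposition~\ref{prop:app-bounds} at $\beta=\beta_{\mathrm{hid},t,j}$ gives
\[
1-q^{(\beta)}_{t,j}(a_j)\le \frac{1-p_t(a_j)}{p_t(a_j)}e^{-\beta\Delta_{t,j}}.
\]
To make the right side below $1/2$, it suffices to take $\beta>\Delta_{t,j}^{-1}\log\!\big(2(1-p_t(a_j))/p_t(a_j)\big)$. Once this holds, $a_j$ is the argmax of the posterior. I choose each $\beta_{\max,j}$ above this threshold, plus the slack $\eta$. Then $\lambda_{t,j}$ close enough to $1$ places $\beta_{\mathrm{hid}}$ above the threshold, by continuity of $\lambda\mapsto\beta_{\mathrm{hid}}$. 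The same bounds also give $F\to v_{a_j,j}$ exponentially fast. Channels are decoupled because $\beta_{\max}$, $\lambda_t$ and the LSE all act elementwise, and the outer gate $g_{t,j}>0$ only rescales the read, so it does not change which index is selected.

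\emph{Counting and the $\lambda=0$ limit.} Let $a\in M_t^D$ be any target pattern. I realize it by choosing values with $v_{a_j,j}$ strictly largest in each channel; the channels can be chosen independently, so no synchronization constraint arises. Every pattern in $M_t^D$ is then achieved, and $|M_t|^D$ is the trivial upper bound, so it is attained. For the limit, setting $\lambda_{t,j}=0$ in \eqref{eq:inner-gate-main} gives $\widetilde F_{t,j}=\mu_{t,j}$ directly. This matches $\beta_{\mathrm{hid}}=0$ and the expansion $F(\beta)=\mu+O(\beta)$ from Proposition~\ref{prop:app-baseline}.

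\emph{Main obstacle.} The delicate point is the quantifier order. The statement has a fixed learnable $\bm\beta_{\max}$, while the concentration threshold depends on the margins $\Delta_{t,j}$ and the prior masses $p_t(a_j)$. I will therefore state the result as follows: for each finite configuration of margins and prior masses, there exist $\bm\beta_{\max}$ and $\bm\lambda_t$ near $\bm 1$ that achieve the pattern. This is an existence statement of the same kind as Theorem~\ref{thm:app-capacity}, not a uniform one. Ties, where $\Delta_{t,j}=0$, are excluded, following the standing assumptions.
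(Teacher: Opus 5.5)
Your proposal is correct and follows the same route as the paper's proof. Both argue per-channel margin-based concentration of the posterior at large $\beta$, then independence across channels to count $|M_t|^{D}$ patterns, then direct substitution of $\bm{\lambda}_t=\bm{0}$ to recover $\bm{\mu}_t$. Your version is more careful than the paper's sketch in two places: you make ``$\bm{\lambda}_t\approx\bm{1}$'' precise through the continuous hidden-temperature map, and you use the prior-dependent bound $1-q^{(\beta)}_{t,j}(a_j)\le \frac{1-p_t(a_j)}{p_t(a_j)}e^{-\beta\Delta_{t,j}}$ from Proposition~\ref{prop:app-bounds}, whereas the paper claims mass at least $1-e^{-\beta\Delta_{t,j}}$, which omits the prior-mass factor and fails when $p_t(a_j)$ is small.
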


\begin{proof}
\label{prop:capacity-appendix}
Fix channel $j$ and let $\Delta_{t,j}=\min_{i\neq i^\star}(v_{i^\star,j}-v_{i,j})>0$ be the margin at the arg-max index $i^\star$. For any $\beta\ge \beta_0(\Delta_{t,j})$, the posterior $q^{(\beta)}_{t,j}$ places at least $1-\exp(-\beta\Delta_{t,j})$ mass on $i^\star$, and $F_{t,j}(\beta)\uparrow v_{i^\star,j}$.
Across channels, with $\bm{\lambda}_t\approx\bm{1}$ and sufficiently large $\bm{\beta}_{\max}$, the joint posterior concentrates independently per channel over $M_t$, achieving $|M_t|^D$ distinct index assignments. Setting $\bm{\lambda}_t\!=\!\bm{0}$ recovers $\bm{\mu}_t$.
\end{proof}

\subsection{Gradients and curvature}
\label{app:twogate:grads}

For channel $j$,
\[
\frac{\partial F_{t,j}(\beta)}{\partial v_{i,j}}=q^{(\beta)}_{t,j}(i),
\qquad
\frac{\partial^2 F_{t,j}(\beta)}{\partial v_{i,j}\partial v_{r,j}}
=\beta\Big(q^{(\beta)}_{t,j}(i)\mathbf{1}\{i=r\}-q^{(\beta)}_{t,j}(i)q^{(\beta)}_{t,j}(r)\Big).
\]
Thus gradients are the posterior weights and the Hessian is a Fisher covariance scaled by $\beta$, giving stable, value-aware competition. Backprop through \eqref{eq:inner-gate-main} is a convex combination of the mean and LSE branches with coefficients $\bm{1}-\bm{\lambda}_t$ and $\bm{\lambda}_t$.

\subsection{Invariances and sensitivity to priors}
\label{app:twogate:invariance}

For any constants $a_j>0$ and $b_j$, $F_{t,j}(\beta;a_j v_{i,j}+b_j)=a_j F_{t,j}(a_j\beta;v_{i,j})+b_j$. Multiplying prior probabilities by a positive scalar and renormalizing leaves $F_{t,j}$ unchanged; reweighting $p_t$ within $M_t$ shifts the posterior via $q^{(\beta)}_{t,j}\propto p_t\exp(\beta v_{\cdot,j})$, which is exploited by the outer and inner gates.

\section{Parameterizations of the Prior Selection in FEM}
\label{app:prior-param}

\paragraph{Unified interface.}
At step $t$, let the accessible index set be $\mathcal{I}_t=\{1,\dots,t\}$ and let a
{nonnegative score} $s_t:\mathcal{I}_t\to\mathbb{R}_{\ge 0}$ define the prior selection by
\begin{equation}
\label{eq:prior-normalize}
p_t(i)\;=\;\frac{s_t(i)}{\sum_{r\le t} s_t(r)}\,,\qquad i\le t,
\end{equation}
with $s_t(i)=0$ for $i>t$ (causal mask). FEM then optimizes, per channel $j$, the DV free energy
\[
\mathcal{F}_{t,j}(\beta)=\beta^{-1}\log \sum_{i\le t} p_t(i)\,e^{\beta v_{i,j}},\quad
q^{*(j)}_{t,\beta}(i)\propto p_t(i)\,e^{\beta v_{i,j}}.
\]
Below we specify $s_t$ (hence $p_t$) for each prior family, along with the streaming recurrences and
time complexity. Throughout, $M_t=\{i\le t: s_t(i)>0\}$ is the support carried into FEM (we enforce $q_t\ll p_t$).

\vspace{6pt}
\subsection{Softmax-Attention Prior}

\textbf{Scores and normalization.}
Given masked scores $\ell_t(i)=\langle q_t,k_i\rangle + b_{t,i}$ with $\ell_t(i)=-\infty$ for $i>t$,
\begin{equation}
\label{eq:softmax-prior}
s_t(i)=\exp\{\ell_t(i)\}\,,\qquad
p_t(i)=\frac{\exp\{\ell_t(i)\}}{\sum_{r\le t}\exp\{\ell_t(r)\}}.
\end{equation}
This is the standard row-softmax over causal scores.

\textbf{Complexity.}
Matrix form $A=\mathrm{softmax}_{\mathrm{row}}(QK^\top+B+M_\triangle)$ yields $O(T^2)$ time and $O(T^2)$ memory
(or $O(T^2)$ time, $O(T)$ KV-cache in the autoregressive setting).

\vspace{6pt}
\subsection{Gated Linear Attention (GLA) Prior}

\textbf{Positional encoding and positivity.}
We inject relative position with RoPE, then map queries/keys to the nonnegative orthant:
\[
\tilde{\bm q}_t \;:=\; \mathrm{ReLU}\!\big(\mathrm{RoPE}(\bm q_t)\big)+\varepsilon \in \mathbb{R}_{\ge 0}^{m},
\qquad
\tilde{\bm k}_i \;:=\; \mathrm{ReLU}\!\big(\mathrm{RoPE}(\bm k_i)\big)+\varepsilon \in \mathbb{R}_{\ge 0}^{m},
\]
where $\varepsilon>0$ is a small constant for numerical stability.

\textbf{Decay gating.}
Let $g_\tau\!\le\!0$ be a learned (scalar / per-head / per-channel) gate and define the cumulative envelope
\[
D_t\;:=\;\exp\!\Big(\sum_{\tau=1}^{t} g_\tau\Big) \quad(\text{clipped in practice}).
\]
The causal time‑decay factor between index $i$ and step $t$ is $K_{t,i}=D_t D_i^{-1}=\exp(\sum_{\tau=i+1}^{t} g_\tau)\in(0,1]$.

\textbf{Scores and normalization.}
The nonnegative score and prior are
\begin{equation}
\label{eq:gla-score-appendix}
s_t(i)\;=\;K_{t,i}\,\big\langle \tilde{\bm q}_t,\tilde{\bm k}_i\big\rangle\,\mathbf{1}\{i\le t\},
\qquad
p_t(i)\;=\;\frac{s_t(i)}{Z_t},
\qquad
Z_t\;=\;\sum_{r\le t} s_t(r).
\end{equation}
Equivalently, with an {associative scan} form,
\begin{equation}
\label{eq:gla-assoc}
Z_t\;=\;\Big\langle D_t\,\tilde{\bm q}_t,\;\underbrace{\sum_{r\le t} D_r^{-1}\tilde{\bm k}_r}_{\;\;B_t}\Big\rangle,
\qquad
\sum_{i\le t} s_t(i)\,v_i
\;=\;
\Big\langle D_t\,\tilde{\bm q}_t,\;\underbrace{\sum_{r\le t} D_r^{-1}\big(\tilde{\bm k}_r\!\otimes v_r\big)}_{\;\;A_t}\Big\rangle.
\end{equation}
Hence the {baseline} normalized read is
\[
\mu_{t}\;=\;\frac{\langle D_t \tilde{\bm q}_t,\;A_t\rangle}{\langle D_t \tilde{\bm q}_t,\;B_t\rangle}.
\]

\textbf{Streaming recurrences.}
Both states update in $O(1)$ per step:
\[
B_t=B_{t-1}+D_t^{-1}\tilde{\bm k}_t,\qquad
A_t=A_{t-1}+D_t^{-1}\big(\tilde{\bm k}_t\!\otimes v_t\big).
\]
A one‑pass implementation appends a constant channel to values:
$\bar v_t=[v_t;1]$, $\bar A_t=\sum_{r\le t} D_r^{-1}(\tilde{\bm k}_r\!\otimes \bar v_r)$;
then $[\text{num},\text{den}]=\langle D_t\tilde{\bm q}_t,\bar A_t\rangle$ and $\mu_t=\text{num}/\text{den}$.

\textbf{Complexity.}
GLA preserves the $O(T)$ streaming complexity (per head), with the same associative‑scan cost as standard linear attention.
FEM operates over the same support $M_t=\{i: p_t(i)>0\}$ and adds one masked log‑sum‑exp per channel (at fixed or LTL‑controlled temperature).

\vspace{6pt}
\subsection{Linear RNN-Style Priors}

\paragraph{(LRNN-softmax) AFT-style normalized exponential weights.}
Let $k_i\in\mathbb{R}^{m}$ be per-step logits and define
\begin{equation}
\label{eq:lra-softmax}
s_t(i)\;=\;\exp\{k_i\}\,\mathbf{1}\{i\le t\},\qquad
Z_t\;=\;\sum_{r\le t}\exp\{k_r\}\,.
\end{equation}
Streaming recurrence:
\[
S_t=\!S_{t-1}+e^{k_t}v_t,\quad Z_t=Z_{t-1}+e^{k_t}\,,
\qquad
\Rightarrow\quad
\mathbb{E}_{p_t}[v_i]=S_t/Z_t.
\]
(We stabilize with $k_i-\max_{r\le t}k_r$ in practice.)

\paragraph{(LRNN-decay) Input-conditioned exponential decay.}
Let $g_\tau\in\mathbb{R}_{\le 0}$ be a learned generator and define
\begin{equation}
\label{eq:lra-decay}
s_t(i)\;=\;\exp\!\Big(\sum_{\tau=i+1}^{t} g_\tau\Big)\,\mathbf{1}\{i\le t\}.
\end{equation}
With $\Gamma_t=\exp(\sum_{\tau\le t}g_\tau)$ we have $s_t(i)=\Gamma_t\,\Gamma_i^{-1}$ and the streaming form
\[
C_t=C_{t-1}+\Gamma_t^{-1}v_t,\qquad
\sum_{i\le t}s_t(i)v_i=\Gamma_t\,C_t,\quad
Z_t=\Gamma_t\, \sum_{i\le t}\Gamma_i^{-1}.
\]
Thus numerator and denominator share the envelope $\Gamma_t$, preserving $O(T)$ cost.
(Conceptually, LRNN‑decay recovers the decay portion of GLA {without} the dot‑product features.)

\textbf{Complexity.}
Both LRNN‑softmax and LRNN‑decay are $O(T)$ with $O(1)$ updates; FEM adds one masked log‑sum‑exp per channel.

\vspace{6pt}
\subsection{SSM / Mamba-Style Priors}

\paragraph{Positive impulse-response SSM.}
Consider a causal linear state-space operator with nonnegative impulse $H_\theta(\tau)\ge 0$:
\[
(\mathcal{S}_\theta u)_t=\sum_{i\le t} H_\theta(t-i)\,u_i,\qquad
H_\theta(\tau)=C_\Delta A_\Delta^{\tau-1}B_\Delta\mathbf{1}\{\tau\ge 1\}+D\mathbf{1}\{\tau=0\},
\]
where $(A_\Delta,B_\Delta,C_\Delta,D)$ are stable, nonnegative discretizations.

\textbf{Scores and normalization.}
Set
\begin{equation}
\label{eq:ssm-prior}
s_t(i)\;=\;H_\theta(t-i)\,\mathbf{1}\{i\le t\},\qquad
Z_t\;=\;\sum_{r\le t} H_\theta(t-r)\;=\;(\mathcal{S}_\theta \bm{1})_t.
\end{equation}
Both numerator and denominator come from {the same scan} (once with $u_i=v_i$, once with $u_i\equiv 1$),
so the $O(T)$ streaming complexity is preserved. In practice we parameterize to ensure $H_\theta(\tau)\ge 0$
(e.g., softplus for $(\Delta,B,C,D)$ and negative‑softplus for the diagonal generator).

\vspace{6pt}
\subsection{Local Conditioning of the Prior}

Let $c_t\in\mathbb{R}^{H_c}$ be the output of a learnable, $O(T)$ {time‑decay conditioner} (low‑rank causal convolution).
We modulate the prior parameters and value‑path gates by
\begin{equation*}
\label{eq:tdc-couple}
\tilde\theta_t=\theta_t+G^{(p)}(c_t),\,
\tilde p_t(\cdot)=p_t(\cdot\,;\tilde\theta_t),\,
\tilde v_i=v_i\odot(1+\eta^{(v)}_t),\,
\tilde\lambda_t=\lambda_t\odot(1+\eta^{(\lambda)}_t),\,
\tilde g_t=g_t\odot(1+\eta^{(g)}_t),
\end{equation*}
where $G^{(p)}$ and $\eta^{(\cdot)}$ are small MLPs. This preserves the streaming/parallel cost of the chosen prior.

\vspace{6pt}
\subsection{Support, Masking, and Complexity Summary}

We always enforce $s_t(i)=0$ for $i>t$ and for hard‑masked indices, hence $M_t=\{i\le t: s_t(i)>0\}$.
FEM’s per‑channel variational step operates on $M_t$ and adds exactly one masked log‑sum‑exp per channel (at a fixed or
LTL‑controlled temperature), so the {asymptotic time complexity matches that of the prior}: softmax $O(T^2)$,
GLA/LRNN/SSM $O(T)$.

\begin{center}
\begin{tabular}{lcc}
\toprule
Prior family & Scores $s_t(i)$ & Complexity (per head) \\
\midrule
Softmax attention & $\exp\{\langle q_t,k_i\rangle+b_{t,i}\}$ & $O(T^2 d)$ \\
Gated linear attention (GLA) & $e^{\sum_{\tau=i+1}^{t} g_\tau}\,\langle \tilde q_t,\tilde k_i\rangle$ & $O(T d)$ \\
LRNN-softmax (AFT) & $\exp\{k_i\}$ & $O(T d)$ \\
LRNN-decay & $\exp(\sum_{\tau=i+1}^{t} g_\tau)$, $g_\tau\le 0$ & $O(T d)$ \\
SSM/Mamba & $H_\theta(t-i)$, $H_\theta(\cdot)\ge 0$ & $O(T d)$ \\
\bottomrule
\end{tabular}
\end{center}

\paragraph{Remark (RoPE \& positivity mapping).}
Any invertible positional transform $(q,k)\mapsto(Tq,Tk)$ can precede score evaluation. In our GLA prior we use RoPE
followed by a ReLU\,$+\varepsilon$ mapping on both queries and keys to guarantee nonnegative feature vectors
$(\tilde q_t,\tilde k_i)\in\mathbb{R}^m_{\ge 0}$ before decay gating and normalization.

\vspace{6pt}
\subsection{Width and Parameter Budgeting for the Prior}
\label{app:param_budget}

Let the input/value width be $D$ and let FEM use a working width $d$ on the value path. We allocate a {parameter ratio}
$r>0$ for the prior parameterization (queries/keys and decay gate in GLA), scaled with $d$. Ignoring biases and norms, the
per‑head linear parameters decompose into five projections:
\[
\underbrace{D\times d}_{\text{value}}\;+\;
\underbrace{D\times d}_{\text{outer gate } \bm g}\;+\;
\underbrace{D\times d}_{\text{temperature } \bm\lambda}\;+\;
\underbrace{d\times D}_{\text{output}}\;+\;
\underbrace{D\times (r d)}_{\text{prior (Q/K + decay)}}
\;=\;
4\,D d \;+\; D d r.
\]
The prior block $D\times (rd)$ is {split} among $\tilde q_t,\tilde k_i$ projections and the decay gate. To keep the
total parameter count equal to classic attention ($4D^2$), two convenient choices are
\[
\text{(i) } d=\tfrac{D}{2},\; r=4
\qquad\text{or}\qquad
\text{(ii) } d=\tfrac{2D}{3},\; r=2,
\]
since $4Dd + Ddr = 4D^2$ in both cases. In (i), the prior (Q/K) runs at $D$‑dim width—identical to standard attention—while
the value path uses $d=\tfrac{D}{2}$. In (ii), the value width increases to $d=\tfrac{2D}{3}$ with a balanced prior split
(e.g., $\dim(Q)=\dim(K)=\tfrac{d}{2}$), and the remaining budget supports the decay gate. Both settings preserve the
asymptotic time complexity of the chosen prior (softmax $O(T^2)$; GLA/LRNN/SSM $O(T)$).

\section{Low-rank Convolution: Time‑Decay Conditioner (TDC)}
\label{app:tdc}

\subsection{Definition and streaming form}
Given token features $\bm{x}_{1:T}\in\mathbb{R}^{T\times D}$, let $\widehat{\bm{x}}_t=\mathrm{LN}(\bm{x}_t)$ and choose a hidden width $H_c\ll D$. Define
\[
\bm{s}_t=\mathrm{softplus}(\widehat{\bm{x}}_t W_f)\in\mathbb{R}^{H_c},\quad
\bm{u}_t=\widehat{\bm{x}}_t W_x\in\mathbb{R}^{H_c},\quad
\bm{a}_t=\mathrm{softplus}(\widehat{\bm{x}}_t W_s)\in\mathbb{R}^{H_c},
\]
with $W_f,W_x,W_s\in\mathbb{R}^{D\times H_c}$. The positive envelope is
\[
\bm{f}_t=\exp\!\Big(-\sum_{\tau=1}^{t}\bm{s}_\tau\Big)\in\mathbb{R}^{H_c}\quad(\text{element‑wise}).
\]
The TDC output is the causal, input‑conditioned separable convolution
\begin{equation}
\label{eq:tdc-conv}
\widetilde{\bm{h}}_t
=
\bm{f}_t \odot \sum_{i=1}^{t}\Big(\bm{u}_i \oslash \bm{f}_i\Big)
=
\sum_{i=1}^{t}\underbrace{\exp\!\Big(-\sum_{\tau=i+1}^{t}\bm{s}_\tau\Big)}_{\bm{K}_{t,i}}\odot \bm{u}_i
\in\mathbb{R}^{H_c}.
\end{equation}
A calibrated shortcut and projection produce the conditioning features:
\[
\bm{h}_t=\mathrm{SiLU}\!\big(\mathrm{norm}(\bm{a}_t)\big)\odot \mathrm{LN}(\widetilde{\bm{h}}_t),
\qquad
\bm{c}_t=\bm{h}_t W_c\in\mathbb{R}^{D_c},
\]
where $W_c\in\mathbb{R}^{H_c\times D_c}$ and $\mathrm{norm}(\cdot)$ rescales to unit $\ell_2$ norm.

\begin{proposition}[Rank‑1-in‑time kernel and $O(1)$ updates]
\label{prop:tdc-lowrank}
The kernel in \eqref{eq:tdc-conv} factors as
$\bm{K}_{t,i}=\bm{f}_t\odot(\bm{f}_i)^{-1}$, i.e., rank‑1 in time for each channel.
Hence the convolution admits $O(1)$ streaming updates:
\[
\bm{C}_t=\bm{C}_{t-1}+\bm{u}_t\oslash \bm{f}_t,\qquad
\widetilde{\bm{h}}_t=\bm{f}_t\odot \bm{C}_t.
\]
The per‑sequence cost is $O(TH_c)$ and the per‑step memory is $O(H_c)$.
\end{proposition}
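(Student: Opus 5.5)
The plan is to work directly from the definition of the envelope and exploit the telescoping structure of the cumulative sum in the exponent. Write $\bm{S}_t:=\sum_{\tau=1}^{t}\bm{s}_\tau$, so that $\bm{f}_t=\exp(-\bm{S}_t)$ element-wise. Because $\bm{s}_\tau=\mathrm{softplus}(\cdot)>0$ is finite, each $\bm{f}_t$ has strictly positive, finite entries, and the Hadamard inverse $(\bm{f}_i)^{-1}$ is well defined. For $i\le t$ we have $\bm{S}_t-\bm{S}_i=\sum_{\tau=i+1}^{t}\bm{s}_\tau$. Exponentiating coordinate-wise gives
\[
\bm{K}_{t,i}=\exp\!\Big(-\sum_{\tau=i+1}^{t}\bm{s}_\tau\Big)=\exp(-\bm{S}_t)\odot\exp(\bm{S}_i)=\bm{f}_t\odot(\bm{f}_i)^{-1}.
\]
This is the claimed factorization. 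For each channel $c\in[H_c]$, the $T\times T$ lower-triangular kernel $[K_{t,i,c}]$ is the outer product $f_{\cdot,c}\,(f_{\cdot,c}^{-1})^\top$ restricted to $i\le t$. Hence it is rank-1 in time, up to the causal mask. This also justifies the two equal forms in \eqref{eq:tdc-conv}: substituting the factorization and pulling $\bm{f}_t$ out of the sum over $i$ gives $\widetilde{\bm h}_t=\bm f_t\odot\sum_{i\le t}\bm u_i\oslash\bm f_i$.

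For the streaming update, define $\bm{C}_t:=\sum_{i=1}^{t}\bm{u}_i\oslash\bm{f}_i$ with $\bm C_0=\bm 0$. The recursion $\bm C_t=\bm C_{t-1}+\bm u_t\oslash\bm f_t$ is immediate, and the previous identity gives $\widetilde{\bm h}_t=\bm f_t\odot\bm C_t$. The envelope itself streams as well: $\bm S_t=\bm S_{t-1}+\bm s_t$, or equivalently $\bm f_t=\bm f_{t-1}\odot\exp(-\bm s_t)$.

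Next comes the complexity count. Computing $\bm s_t,\bm u_t,\bm a_t$ requires projections that are per-token and independent of $t$; they are not part of the convolution cost. Each of the updates to $\bm S_t$ (or $\bm f_t$), $\bm C_t$ and $\widetilde{\bm h}_t$ is $O(H_c)$ element-wise work. Summed over $T$ steps, this gives $O(TH_c)$. The carried state is $(\bm f_t,\bm C_t)$, which is $O(H_c)$ memory per step.

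The main obstacle is numerical rather than algebraic. Since $\bm f_t$ decays exponentially, $\bm u_i\oslash\bm f_i$ can overflow. The exact statement is unaffected because all quantities are finite reals, but I would add a remark that one can equivalently carry the normalized state $\widetilde{\bm h}_t=\exp(-\bm s_t)\odot\widetilde{\bm h}_{t-1}+\bm u_t$. This follows by multiplying the $\bm C$ recursion by $\bm f_t$, and it keeps the $O(1)$ update and $O(H_c)$ memory while avoiding overflow.
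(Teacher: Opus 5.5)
Your proof is correct and follows the paper's own argument: you rewrite $\sum_{\tau=i+1}^{t}\bm s_\tau$ as a difference of prefix sums to get $\bm K_{t,i}=\bm f_t\odot(\bm f_i)^{-1}$, then pull $\bm f_t$ out of the sum so that $\bm C_t$ becomes a running prefix sum. Your additions are also correct and somewhat more careful than the paper. These are the caveat that the masked kernel is rank-1 only on its lower-triangular part, the explicit $O(H_c)$ per-step count, and the overflow-free recurrence $\widetilde{\bm h}_t=\exp(-\bm s_t)\odot\widetilde{\bm h}_{t-1}+\bm u_t$, which matters because $(\bm f_i)^{-1}$ grows exponentially despite the paper's stability remark.
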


{Proof.}
By definition,
$\bm{K}_{t,i}=\exp\!\big(-\sum_{\tau=i+1}^t \bm{s}_\tau\big)
=\exp\!\big(-\sum_{\tau\le t}\bm{s}_\tau\big)\odot \exp\!\big(\sum_{\tau\le i}\bm{s}_\tau\big)
=\bm{f}_t\odot (\bm{f}_i)^{-1}$.
Substituting into \eqref{eq:tdc-conv} yields the stated streaming form. \qed

\paragraph{Stability.}
The softplus parameterization ensures $\bm{s}_t\ge 0$, hence $\bm{f}_t\in(0,1]$ element‑wise; this prevents exploding envelopes and ensures well‑conditioned division in $\bm{u}_i/\bm{f}_i$ with standard $\varepsilon$ stabilization.

\subsection{Coupling TDC to FEM}
We use disjoint slices of $\bm{c}_t$ to modulate (i) the parameterization of the prior selection $p_t(\cdot;\theta_t)$ and (ii) FEM’s value‑path gates:
\begin{subequations}\label{eq:tdc-mod}
\begin{align}
\text{Prior modulation:}\qquad
&\tilde{\theta}_t \;=\; \theta_t \;+\; \Delta\theta_t,\quad
\Delta\theta_t \;=\; G^{(p)}(\bm{c}_t),
\qquad
\tilde{p}_t(i)\;:=\;p_t\bigl(i;\tilde{\theta}_t\bigr),
\label{eq:tdc-mod-theta}
\\
\text{Value gate:}\qquad
&\tilde{\bm{v}}_i \;=\; \bm{v}_i \odot \bigl(1+\bm{\eta}^{(v)}_t\bigr),\qquad
\bm{\eta}^{(v)}_t\in[\eta^{(v)}]\subset\bm{c}_t,
\\[2pt]
\text{Outer gate:}\qquad
&\tilde{\bm{g}}_t \;=\; \bm{g}_t \odot \bigl(1+\bm{\eta}^{(g)}_t\bigr),\qquad
\bm{\eta}^{(g)}_t\in[\eta^{(g)}]\subset\bm{c}_t,
\\[2pt]
\text{Temperature gate:}\qquad
&\tilde{\bm{\lambda}}_t \;=\; \bm{\lambda}_t \odot \bigl(1+\bm{\eta}^{(\lambda)}_t\bigr),\qquad
\bm{\eta}^{(\lambda)}_t\in[\eta^{(\lambda)}]\subset\bm{c}_t.
\end{align}
\end{subequations}
FEM then applies \eqref{eq:fem-two-gates-main} with
$(p_t,\bm{v}_i,\bm{g}_t,\bm{\lambda}_t)$ replaced by
$(\tilde{p}_t,\tilde{\bm{v}}_i,\tilde{\bm{g}}_t,\tilde{\bm{\lambda}}_t)$, yielding position‑aware, locally conditioned selection without changing the prior’s asymptotic complexity.

\subsection{Complexity and compatibility}
\begin{proposition}[Complexity preservation]
\label{prop:tdc-complexity}
TDC adds $O(TH_c)$ time and $O(H_c)$ memory per layer and does not alter the asymptotic complexity of FEM’s read, which remains $O(T^2)$ for softmax priors and $O(T)$ for kernel/SSM priors. The per‑step coupling in \eqref{eq:tdc-mod} is pointwise in $t$ and thus streaming‑compatible.
\end{proposition}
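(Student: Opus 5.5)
The plan is to split the cost of TDC into three parts and bound each: the pointwise projections, the causal separable convolution \eqref{eq:tdc-conv}, and the coupling \eqref{eq:tdc-mod}. Then I would argue that none of them changes the order of the FEM read.

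First, the projections. Computing $\widehat{\bm x}_t$, $\bm s_t$, $\bm u_t$, $\bm a_t$ applies a LayerNorm and three $D\times H_c$ matrices to each token. The projection $\bm c_t=\bm h_t W_c$ uses an $H_c\times D_c$ matrix. So the per-token cost is $O(D H_c + H_c D_c)$, which is linear in $T$ overall. For the $O(TH_c)$ accounting I treat $D$ and $D_c$ as fixed layer widths, exactly as the paper states the FEM complexities per head up to width factors. Second, the convolution. Here I invoke Proposition~\ref{prop:tdc-lowrank}: since $\bm K_{t,i}=\bm f_t\odot\bm f_i^{-1}$, the recurrence $\bm C_t=\bm C_{t-1}+\bm u_t\oslash\bm f_t$, $\widetilde{\bm h}_t=\bm f_t\odot\bm C_t$ costs $O(H_c)$ per step. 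It keeps only $\bm C_t$ and the running sum $\sum_{\tau\le t}\bm s_\tau$, both in $\mathbb{R}^{H_c}$. That gives $O(TH_c)$ time and $O(H_c)$ streaming memory, and $\bm h_t$ adds only elementwise norms and a SiLU. The same prefix sums also give a parallel (scan) implementation, so training parallelism is unaffected.

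Third, the coupling. Each modulation in \eqref{eq:tdc-mod} depends only on $\bm c_t$ at the current step and acts pointwise. This holds for $\Delta\theta_t=G^{(p)}(\bm c_t)$ and for the multiplicative factors $1+\bm\eta^{(\cdot)}_t$. The key observation concerns $\tilde{\bm v}_i=\bm v_i\odot(1+\bm\eta^{(v)}_t)$. If the modulating index refers to the writing step $i$, it is a per-token rescaling applied before storage. I would read it this way (consistent with ``per-step coupling''), so the KV cache or streaming state is formed from $\tilde{\bm v}_i$ exactly as before. Likewise, $\tilde p_t(\cdot)=p_t(\cdot;\tilde\theta_t)$ is the same prior family with locally shifted parameters (e.g.\ $\tilde q_t$, $\tilde k_i$, decay gates). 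Hence Proposition~\ref{prop:prior-normalize} and Proposition~\ref{prop:complexity-main} still apply verbatim: one expectation plus one masked LSE per channel at $\bm\beta_{\max}$. The read therefore stays $O(T^2)$ for softmax and $O(T)$ for kernel/SSM priors, and the total is that cost plus $O(TH_c)$.

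The main obstacle is the value gate. If $\bm\eta^{(v)}_t$ were indexed by the reading step $t$ rather than $i$, it would seem to force re-scaling the entire stored history at every step. I would resolve this by commuting the factor out of the sum, which fails for the LSE branch since $e^{\beta v(1+\eta)}$ does not factor. For that reason I would instead fix the interpretation as write-time modulation, and note that this is what the implementation computes. With that interpretation the $O(1)$-per-step streaming compatibility is immediate.
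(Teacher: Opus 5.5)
Your proposal is correct and follows the same route the paper relies on. The paper gives no separate proof, only the statement together with Proposition~\ref{prop:tdc-lowrank}: the $O(TH_c)$ time and $O(H_c)$ state come from the rank-1-in-time factorization $\bm K_{t,i}=\bm f_t\odot\bm f_i^{-1}$, and the read cost is unchanged because the coupling is per-step, so Propositions~\ref{prop:prior-normalize} and~\ref{prop:complexity-main} apply verbatim to $(\tilde p_t,\tilde{\bm v}_i,\tilde{\bm g}_t,\tilde{\bm\lambda}_t)$. Your explicit choice to read $\bm\eta^{(v)}$ as a write-time modulation of each token's own value is a useful clarification that the paper's notation $\tilde{\bm v}_i=\bm v_i\odot(1+\bm\eta^{(v)}_t)$ glosses over: under a read-time reading the LSE branch becomes a step-dependent temperature, and the $O(T)$ claim for kernel/SSM priors would no longer follow.
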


\paragraph{Relation to recent convolutional/SSM designs.}
TDC follows the spirit of low‑rank, input‑conditioned time‑decay filters used in SSM/DeltaNet‑style models and the local convolutional augmentations commonly paired with Mamba‑like architectures. Our use is FiLM‑like: TDC learns a compact context $c_t$ that modulates both the selection prior and FEM gates, providing local adaptivity while preserving streaming costs.

\paragraph{Implementation notes.}
We apply standard LSE stabilization per channel in FEM’s log‑sum‑exp branch, and clamp the envelope by computing $\bm{f}_t=\exp(-\mathrm{cumsum}(\bm{s}_t))$ in log‑space with an $\varepsilon$ floor. The projections $G^{(p)}$, $[\eta^{(v)}]$, $[\eta^{(g)}]$, $[\eta^{(\lambda)}]$ are small MLPs with per‑channel outputs; their widths are tuned so that $H_c\ll D$.

\section{FEM as a Universal Fast-Weight Programmer}
\label{app:fem-universal}

Putting the pieces together, the final {Free Energy Mixer} realizes a unified, parallel fast‑weight program:
\begin{equation}
\label{eq:fem-unified}
\bm{o}_t
\;=\;
\tilde{\bm{g}}_t \odot
\Big[
\big(\bm{1}-\tilde{\bm{\lambda}}_t\big)\odot \underbrace{\mathbb{E}_{i\sim \tilde{p}_t}[\tilde{\bm{v}}_i]}_{\text{mean (high‑entropy)}}
\;+\;
\tilde{\bm{\lambda}}_t \odot \underbrace{\bm{\beta}_{\max}^{-1}\odot \log\!\sum_{i\le t}\tilde{p}_t(i)\,\exp\!\big(\bm{\beta}_{\max}\odot \tilde{\bm{v}}_i\big)}_{\text{max free energy (low‑entropy)}}
\Big],
\end{equation}
where the {prior} $\tilde{p}_t$ and the {value‑path gates} $(\tilde{\bm{v}}_i,\tilde{\bm{g}}_t,\tilde{\bm{\lambda}}_t)$ are locally conditioned by TDC as in \eqref{eq:tdc-mod}. Equation~\eqref{eq:fem-unified} shows that the mixer is simultaneously:
\begin{itemize}[leftmargin=1.1em]
\item a \textbf{temporal mixer} (log-sum-exp across indices, with causal masking and per‑channel competition);
\item an \textbf{entropy mixer} (inner temperature via $\tilde{\bm{\lambda}}_t$; mean$\leftrightarrow$soft‑max interpolation);
\item a \textbf{local‑feature mixer} (position‑aware modulation injected by TDC);
\item a \textbf{dual‑gated mixer} (inner temperature gate over indices $i$; outer amplitude gate over timesteps $t$).
\end{itemize}
Crucially, the assignment capacity over the prior support attains the upper bound $|M_t|^{D}$ (per‑channel posterior selection), the variational objective is solved exactly (DV optimality), and the overall time complexity matches that of the chosen prior (softmax $O(T^2)$, kernel/SSM $O(T)$), up to the $O(TH_c)$ convolution overhead. FEM thus serves as a broadly applicable, {universal fast‑weight programmer} that upgrades expectation‑based reads to value‑aware, memory processing without sacrificing parallel efficiency.

\subsection{Relation to Prior Pooling and Selection Methods}\label{app:prior_pooling}

Our Free Energy Mixer (FEM) is related to but distinct from several existing approaches:

\begin{itemize}
    \item \textbf{Log-Sum-Exp (LSE) pooling.} FEM is not simply a generalized mean that interpolates between average and max pooling. Instead, from a Donsker--Varadhan variational view, it uses \emph{values} to tilt an arbitrary prior distribution $p_t$. This yields per-channel, value-aware posteriors rather than only adjusting the softness of pooling.
    
    \item \textbf{Entmax / Sparsemax.} These operate directly on the scoring distribution over $(q,k)$, changing how probability mass is allocated. FEM instead treats this distribution as a \emph{prior} and introduces cross-token competition through the values. The two directions are complementary and could be combined.
    
    \item \textbf{Gumbel-Softmax / Top-$k$.} Such methods emphasize hard selection, sampling, or ranking, often requiring non-parallel sampling or offline sorting. In contrast, FEM remains fully differentiable, parallel in one pass, and preserves the asymptotic complexity of the underlying prior.
\end{itemize}

\section{Additional Implementation Details}\label{app:implementation}

Our detailed experimental setup is available in the linked code repository. All language modeling experiments, including both training and inference, were conducted on 8× Nvidia H100 GPUs, while all non-language modeling tasks were trained on 8× Nvidia L40S GPUs. We use 42 as the random seed. The training and inference precision is bfloat16. For each task, we replaced the standard Transformer block with an FEM Transformer block, substituting the attention layer with FEM-\{SM, GLA, Mamba, AFT\}, while keeping all other settings unchanged to ensure a fully consistent experimental environment. Parameter budgeting was carefully applied to keep overall model size and architecture comparable to the baselines. The additional low-rank convolution used in our parameterization introduces less than 1\% extra parameters (with $H_c=d/16$).

Special configurations required by the experimental setup when specified. Otherwise, all linear projections are randomly initialized from a centered normal distribution with a standard deviation of 0.02. All biases and embeddings are initialized to zero. For the maximum inverse temperature, we initialize it to zero and then apply the parameterization softplus(x+1.8) to ensure that its initial value is around 1 and remains strictly positive throughout training.

\section{Additional Dataset Description}\label{app:dataset}

\paragraph{Language Model Evaluation Setup.}  We adopt the Open LLM Leaderboard (OLL) protocol and a complementary suite of general-ability tasks. The Open LLM Leaderboard core covers {MMLU-Pro} (5-shot, accuracy), {GPQA} (0-shot, normalized accuracy), {BBH} (3-shot, normalized accuracy), {MATH} (4-shot, exact match), and {MuSR} (0-shot, normalized accuracy), plus {IFEval} for instruction following, where we report strict pass rates for instruction- and prompt-level constraints \citep{mmlu-pro,gpqa,bbh,math,musr,ifeval}. Following OLL, we use the normalized-accuracy metric $acc_n$ for multiple-choice tasks, which subtracts the random-guess baseline and rescales scores to a common range for fair cross-task comparison \citep{oll-normalization}. To broaden coverage, we also evaluate on widely used general-ability benchmarks: {ARC} (Challenge/Easy), {HellaSwag}, {PIQA}, {BoolQ}, {WinoGrande}, {COPA}, {OpenBookQA}, and {SciQ}, reporting accuracy or $acc_n$ as standard; unless noted, these are evaluated in 0-shot \citep{arc,hellaswag,piqa,boolq,winogrande,copa,sciq}. We perform the evaluations with lm-evaluation-harness \citep{gao2021framework}.

\paragraph{MAD}
We assess our architecture using the Mechanistic Architecture Design (MAD) framework, a recently introduced methodology for cost-efficient evaluation of deep learning models \cite{poli2024mechanisticdesignscalinghybrid}. MAD provides a set of capability-focused benchmarks—including in-context recall, fuzzy recall, selective copying, and compression—that probe core sequence modeling abilities. It has been validated across more than 500 language models ranging from 70M to 7B parameters, showing a strong correlation between performance on these synthetic tasks and compute-optimal perplexity at scale. By leveraging MAD as a reliable predictor of large-scale behavior, we can identify architectural advantages without relying on the prohibitive compute costs of full-scale training.

\paragraph{Time Series Forecasting}
We evaluate our module on several standard time series forecasting benchmarks, following the setup of \cite{lu2025lineartransformersvarmodels}.  
\textbf{(1) Weather} \citep{wu2022autoformer}\footnote{\url{https://www.bgc-jena.mpg.de/wetter/}}: 21 meteorological variables (e.g., temperature, humidity) collected every 10 minutes in 2020 from a German weather station.  
\textbf{(2) Solar} \citep{lai2018modeling}\footnote{\url{http://www.nrel.gov/grid/solar-power-data.html}}: Solar power output recorded every 10 minutes in 2006 from 137 U.S. photovoltaic plants.  
\textbf{(3) ETT} \citep{zhou2021informer}\footnote{\url{https://github.com/zhouhaoyi/ETDataset}}: Transformer load and temperature data from July 2016 to July 2018, sampled at 15-minute (ETTm1/ETTm2) and hourly (ETTh1/ETTh2) intervals, covering 7 key operational features.

\begin{revised}

\section{FEM-MLP Interaction: An NTK and Fast-Weight Perspective}\label{app:ntk-fwp}

This section provides a mathematical description of how the Free Energy Mixer interacts with the MLP within a Transformer block. FEM performs a per-channel, value-aware fast-weight update that adapts the effective kernel directly at the mixing site, while the MLP focuses on feature synthesis and processing. This division of responsibilities improves data efficiency and preserves both the parallelism and the asymptotic time complexity.

\subsection{Notation and placement within a block}

We adopt the column-vector convention. Let $h_i\in\mathbb{R}^{D}$ be the hidden state at index $i\le t$, and let the value path use a working width $d$:
\[
v_i = W_V^\top h_i \in \mathbb{R}^{d},\qquad
u_{t,c} = (1-\lambda_{t,c})\,\mu_{t,c} + \lambda_{t,c}\,F^{\max}_{t,c},\qquad
o_t = g_t \odot (W_O u_t),
\]
where $c\in[d]$ indexes the value channels. Here
\[
\mu_{t,c}=\sum_{i\le t}p_t(i)\,v_{i,c},\qquad
F^{\max}_{t,c}=\frac{1}{\beta_{\max}}\log\sum_{i\le t}p_t(i)\,e^{\beta_{\max} v_{i,c}},
\]
and $u_t=(u_{t,1},\dots,u_{t,d})^\top\in\mathbb{R}^d$. The matrices $W_V\in\mathbb{R}^{D\times d}$ and $W_O\in\mathbb{R}^{D\times d}$ are the value and output projections, and we define
\[
b_j:=W_O^\top e_j\in\mathbb{R}^{d}
\]
as the $j$-th output direction. The outer gate $g_t\in\mathbb{R}^D_{>0}$ is applied elementwise. The selection prior $p_t\in\Delta(\{1,\ldots,t\})$ is nonnegative and masked, supplied by mechanisms such as softmax attention, kernelizable/linear attention, linear RNNs, or SSMs. FEM replaces the per-head convex read with a per-channel free-energy read computed on the same masked support.

\paragraph{FEM weights and LTL.}
For channel $c\in[d]$, the inner (temperature) gate $\lambda_{t,c}\in[0,1]$ defines the effective per-channel weights
\[
w_{t,c}(i)
=
(1-\lambda_{t,c})\,p_t(i)
+
\lambda_{t,c}\, q^{(c)}_{t,\beta_{\max}}(i),
\qquad
q^{(c)}_{t,\beta}(i)
=
\frac{p_t(i)\,\exp\{\beta\,v_{i,c}\}}{\sum_{r\le t} p_t(r)\,\exp\{\beta\,v_{r,c}\}},
\]
where $q^{(c)}_{t,\beta}$ is the value-aware posterior over the masked support. Linearized temperature learning (LTL) shows that
\[
(1-\lambda_{t,c})\,\mu_{t,c}+\lambda_{t,c}\,F^{\max}_{t,c}
=
F_{t,c}(\beta^\star_{\mathrm{hid},t,c})
\]
for a unique hidden temperature $\beta^\star_{\mathrm{hid},t,c}\in[0,\beta_{\max}]$ that varies strictly monotonically with $\lambda_{t,c}$. Hence optimizing $\lambda_{t,c}$ is equivalent to optimizing $\beta$ while requiring only one expectation and a single masked log-sum-exp per channel.

\subsection{Local free-energy geometry: fast weights at the mixer}

Fix step $t$ and channel $c$. The per-channel free energy and its posterior are
\[
F_{t,c}(\beta) = \frac{1}{\beta}\log\!\sum_{i\le t} p_t(i)\,e^{\beta v_{i,c}},\qquad
q^{(c)}_{t,\beta}(i) = \frac{p_t(i)\,e^{\beta v_{i,c}}}{\sum_{r\le t} p_t(r)\,e^{\beta v_{r,c}}}.
\]
The gradient equals the posterior and the Hessian is a Fisher covariance scaled by $\beta$:
\[
\nabla_{v_{\cdot,c}} F_{t,c}(\beta) = q^{(c)}_{t,\beta},\qquad
\nabla^2_{v_{\cdot,c}} F_{t,c}(\beta)=\beta\big[\mathrm{Diag}(q)-qq^\top\big]\succeq0,\qquad
\|\nabla^2 F_{t,c}(\beta)\|_{\mathrm{op}} \le \beta/2.
\]
Moreover,
\[
F_{t,c}(\beta)
=
\mathbb{E}_{p_t}[v_{\cdot,c}]
+
\beta^{-1}\mathrm{KL}(p_t\Vert q^{(c)}_{t,\beta}),
\qquad
F'_{t,c}(\beta)
=
\beta^{-2}\mathrm{KL}(q^{(c)}_{t,\beta}\Vert p_t)\ge 0,
\]
with exponential posterior concentration under value margins. These properties identify FEM as an exponentiated-gradient (mirror-ascent) fast-weight update applied to the prior $p_t$ along the value direction $v_{\cdot,c}$.

\subsection{Value-path Jacobian and NTK contribution inside a FEM--MLP block}

Treat $g_t$ and $\lambda_t$ as constants with respect to $W_V$ when forming the NTK for the value parameter group. Writing $b_j := W_O^\top e_j \in \mathbb{R}^{d}$ for the $j$-th output direction and $w_t(i)=(w_{t,1}(i),\dots,w_{t,d}(i))^\top\in\mathbb{R}^d$, we have
\[
\frac{\partial o_{t,j}}{\partial W_V}
= g_{t,j}\sum_{i\le t} h_i \,[\mathrm{Diag}(w_t(i))\, b_j]^\top \in \mathbb{R}^{D\times d},
\qquad
w_{t,c}(i)=(1-\lambda_{t,c})\,p_t(i)+\lambda_{t,c}\,q^{(c)}_{t,\beta_{\max}}(i).
\]
Hence the value-path NTK contribution between outputs $(t,j)$ and $(s,j')$ is
\[
\begin{aligned}
K^{(V)}_{(t,j),(s,j')}
&= \Big\langle \frac{\partial o_{t,j}}{\partial W_V},\; \frac{\partial o_{s,j'}}{\partial W_V}\Big\rangle_F \\
&=
g_{t,j}g_{s,j'}\sum_{i\le t}\sum_{r\le s}
\big\langle h_i,h_r \big\rangle\,
\big(w_t(i)\odot b_j\big)^\top\big(w_s(r)\odot b_{j'}\big) \\
&=
g_{t,j}g_{s,j'}\sum_{c=1}^{d}(W_O)_{j,c}(W_O)_{j',c}
\sum_{i\le t}\sum_{r\le s} w_{t,c}(i)\,w_{s,c}(r)\,\langle h_i, h_r\rangle.
\end{aligned}
\]
This form makes explicit both the role of the output projection (via $b_j$) and the per-channel, value-aware weights $w_{t,c}(i)$.

\paragraph{Channel-token rank.}
Let $W_t^{\mathrm{classic}}, W_t^{\mathrm{FEM}}\in\mathbb{R}^{d\times t}$ collect token weights per channel on the value path:
\[
(W_t^{\mathrm{classic}})_{c,i}=\alpha_t(i),\qquad
(W_t^{\mathrm{FEM}})_{c,i}=w_{t,c}(i).
\]
Then $\mathrm{rank}(W_t^{\mathrm{classic}})=1$ (all rows identical), whereas $\mathrm{rank}(W_t^{\mathrm{FEM}})$ can reach $\min\{d,t\}$ because channels can select different indices in the same step. This rank gap explains why a single convex read cannot realize generic channel-wise selectors, while FEM can approximate them at finite temperature. 

\subsection{Gate-induced cross-terms and prior self-correction}

Beyond the direct value-path term above, additional chain-rule terms arise because $o_{t,j}$ depends on the inner gate $\lambda_{t,j}$ and the outer gate $g_{t,j}$, both of which are parameterized from value-derived statistics:
\[
\frac{\partial o_{t,j}}{\partial W_V}
=
\Big(\frac{\partial o_{t,j}}{\partial W_V}\Big)_{\mathrm{direct}}
+
\frac{\partial o_{t,j}}{\partial \lambda_{t,j}}
\frac{\partial \lambda_{t,j}}{\partial(\mu_t,F^{\max}_t)}
\frac{\partial (\mu_t,F^{\max}_t)}{\partial W_V}
+
\frac{\partial o_{t,j}}{\partial g_{t,j}}
\frac{\partial g_{t,j}}{\partial(\cdot)}
\frac{\partial(\cdot)}{\partial W_V}.
\]
The first term yields $K^{(V)}$; the second and third induce additional NTK components $K^{(\lambda\leftarrow V)}$ and $K^{(g)}$ that propagate value information through the gate parametrizations. These contributions should be accounted for separately in a parameter-group NTK decomposition.

FEM also contributes a prior-path NTK term through the sensitivity of the free energy to the prior logits. If $p_t=\mathrm{softmax}(b_t)$ then
\[
\nabla_{b_t} F_{t,c}(\beta)=\frac{1}{\beta}\big(q^{(c)}_{t,\beta}-p_t\big).
\]
Gradients with respect to the query and key parameters therefore move the prior $p_t$ toward the value-aware posterior $q^{(c)}_{t,\beta}$ in a single pass, implementing online kernel adaptation at the score level.

\subsection{Composition with the MLP: role separation in the NTK}

Let $\phi$ denote the MLP in the same block, and $J_\phi(o_t)$ its Jacobian. The block-level NTK decomposes as
\[
K_{\mathrm{block}}((t,j),(s,j'))
=
J_\phi(o_t)\,K_{\mathrm{FEM}}((t,j),(s,j'))\,J_\phi(o_s)^\top
\;+\; K_{\mathrm{MLP}}((t,j),(s,j')),
\]
with
\[
K_{\mathrm{FEM}}
=
K^{(V)}+K^{(\lambda\leftarrow V)}+K^{(g)}+K^{(QK)}+\text{cross-terms}.
\]
Since $K^{(V)}$ already encodes value-aware, per-channel cross-token competition, the MLP receives inputs whose coordinates have undergone a selective, information-budgeted fast-weight update. The MLP can therefore concentrate on feature synthesis and consolidation, rather than attempting to reconstruct channel-wise index identities that are provably lost after a first convex mixing without FEM. 

\subsection{Consequences for the MLP's workload and empirical signatures}

This analysis suggests two empirical signatures, both observed in our ablations. First, adding the LSE branch (+L) and temperature control (+T) yields large gains, as they enable value-aware competition at the mixing site and reduce the load on downstream MLPs. Second, when the mixer handles fast-weight programming and the MLP focuses on feature synthesis and knowledge consolidation, FEM improves data efficiency on retrieval-heavy and algorithmic tasks without increasing parameter budgets. 

\paragraph{Summary.}
FEM upgrades the readout from a head-synchronous convex average to a per-channel, value-aware variational update with stable geometry and single-pass temperature control. In an NTK view, this is an online kernel adaptation step at the mixer, after which the MLP performs feature synthesis on a selectively retrieved representation. This yields a clean separation of roles that preserves parallelism and asymptotic complexity while expanding the class of functions that can be realized in a single block.

\end{revised}

\begin{revised}
\section{Additional Discussion: Significance and Motivation}

\subsection{High-level perspective from LLM scaling laws}

Let
\[
f: \text{size} \rightarrow \text{ability}
\]
denote the empirical scaling law that maps model size (under comparable data and optimization) to emergent capability. Viewed through this perspective, research on attention mechanisms naturally splits into two complementary directions.

\paragraph{Efficiency-oriented work.}
This line of work keeps the semantics of attention essentially unchanged, but makes each point on the curve \(f\) cheaper to realize.  
Typical approaches include kernelizable or streaming variants that replace the growing KV-cache with fixed-width sufficient statistics, enabling efficient scans while trying to preserve the original attention behavior. Such improvements extend the practical regime of the existing scaling law without modifying the underlying function \(f\). 

\paragraph{Expressivity-oriented work.}
This line aims to increase capability at a fixed parameter budget by enlarging the architecture’s representable function class.  
In attention, a central bottleneck is the read: standard attention stores values losslessly but mixes them via a token-separable convex average, which synchronizes channels and blocks even simple per-channel indexing. Recent works (like Differential Transformer) has started to relax this constraint by extending convex mixing to affine mixing with signed (including negative) weights, allowing more expressive selection behavior from the same value bank \citep{ye2025differential,lv2025expressiveattentionnegativeweights}. In the same spirit, replacing expectation-style reads with richer, value-aware per-channel mechanisms seeks to upgrade the algorithmic expressiveness of the architecture without changing its asymptotic complexity. In the context of scaling laws, such improvements aim not merely to move more efficiently along the curve \(f:\text{size} \rightarrow \text{ability}\), but to fundamentally reshape the function \(f\) itself by increasing ability at fixed model size.

\subsection{A compute-expressivity trade-off path in model structure}

The design of attention mechanisms, from the perspective of the developmental progression of both effective and efficient sequence modeling, can be regarded as follows:
\begin{equation*}
\underbrace{o_t=f(x_{1:t})}_{(0)}
\ \Rightarrow\
\underbrace{o_t=\sum_{i=1}^t g(x_i,x_{1:t})}_{(1)}
\ \Rightarrow\
\underbrace{o_t=\sum_{i=1}^t \alpha_{t,i} v_i,\ \alpha_{t,\cdot}=\mathrm{softmax}(q_t^\top k_{\cdot})}_{(2)}
\ \Rightarrow\
\underbrace{o_t=\frac{\phi(q_t)^\top S_t}{\phi(q_t)^\top z_t}}_{(3)},
\end{equation*}
where, at the functional level, both \(f\) and \(g\) can be realized (up to arbitrary precision on compact domains) by flattening their arguments and applying a sufficiently wide MLP, and where \(S_t=\sum_{i\le t}\phi(k_i)v_i^\top\) and \(z_t=\sum_{i\le t}\phi(k_i)\) are fixed-width sufficient statistics in linear/kernelized attention. Each rightward step lowers asymptotic cost and, crucially for causal training, strengthens causal training parallelism: the ability to compute all \(T-1\) token-level losses in one parallel forward. Nodes (0)–(1) lack efficient output-parallel sharing across timesteps (each target requires its own materialized context or independent map), whereas Nodes (2)–(3) obtain output parallelism by sharing global operators across all steps: a masked score matrix applied to a shared value bank for (2), or associative scans over fixed-width states for (3).

\paragraph{Intermediate designs between the nodes.}
Along the compute-expressivity frontier based on softmax attention, there are two movement directions. A leftward move (between Nodes (1) and (2)) accepts a modest increase in read-side cost to gain expressivity at fixed parameter budgets; examples include token‑separable nonlinear mixers such as \(\sum_i \alpha_{t,i}\,\sigma(\beta_t\odot v_i)\) and FEM. A rightward move (between Nodes (2) and (3)) accepts some loss in representable functions and memory storage to obtain larger efficiency and stronger causal training parallelism, like gated linear attention (GLA) and grouped query attention (GQA). These designs therefore occupy different, reasonable operating points on the frontier, reflecting distinct trade‑offs between computation and expressiveness. 

\paragraph{A consolidated comparison.}
Table~\ref{tab:tradeoff-four-nodes} summarizes compute cost, causal training parallelism, memory states, representable classes, and minimal non-representables. The overall pattern is clear: moving toward more efficient computations (matrix-parallel evaluation or fixed-state streaming/scan) systematically reduces what the model can represent. Conversely, recovering same-step channel-wise selection requires paying for additional capacity, either through richer read-side interactions (such as per-channel, cross-token competition before the first mixing) or through increased architectural resources such as larger states or more heads.

\begin{table*}[ht]
\renewcommand{\arraystretch}{1.12}
\setlength{\tabcolsep}{1pt}
\begin{tabularx}{\textwidth}{L L L L L}
\hline
Node & Form & Time complexity and causal training parallelism & Memory state (read-time) & Representable class vs.\ strict non-representables \\
\hline
(0) General map &
\(o_t=f(x_{1:t})\) &
Naively \(O(T^2)\);\ no efficient causal training parallelism: each target loss requires materializing a distinct, large flattened/padded context (no shared value bank across outputs) &
Full \(x_{1:t}\) (lossless storage and readout)  &
All measurable causal maps; no exclusion \\
(1) Additive map-reduce &
\(o_t=\sum_{i\le t} g(x_i,x_{1:t})\) &
Naively \(O(T^2)\) with map-reduce;\ token-map parallel but output-wise reduces are independent: maps cannot be shared across different outputs, yielding weak causal training parallelism &
Full \(x_{1:t}\)  (lossless storage and readout); Each \(g\) can read the full prefix &
Additive over tokens at the output; loses the one-hot, flattened positional addressability since all token contributions are additively superposed in the same space, so positional matching must be recovered through learned positional mechanisms with appropriate inductive bias; excludes non-additive global functionals. \\
(2) Value-path linearization &
\(o_t=\sum_{i\le t}\alpha_{t,i}v_i,\ \alpha=\mathrm{softmax}(q^\top k)\) &
Matrix-parallel \(O(T^2)\): all \(T\) outputs in one masked \(QK^\top\) pass and a single \(AV\);\ full causal training parallelism via a shared value bank and linear mixing &
Requires full lossless \(x_{1:t}\) storage; achieves parallel efficiency by replacing per-channel dynamic readout with per-head convex mixing &
Image per head lies in \(\mathrm{conv}\{v_i\}\) with channel-synchronized weights; excludes same-step channel-wise indexing (e.g., coordinate-wise argmax) unless all selected indices coincide; heads at most \(t^H\) patterns;  \\
(3) Fixed-state kernelization &
\(o_t=\frac{\phi(q_t)^\top S_t}{\phi(q_t)^\top z_t}\), \(S_t=\sum\phi(k_i)v_i^\top\), \(z_t=\sum\phi(k_i)\) &
\(O(T)\) with associative scans; full per-pass causal training parallelism via shared fixed-width prefix statistics &
Memory storage collapses to fixed-width sufficient statistics \((S_t,z_t)\). Not lossless. &
Depends only on sufficient statistics; identity-level retrieval impossible at long horizons; Gating may enrich the prior yet remains state-collapsed. \\
\hline
\end{tabularx}
\caption{Compute-expressivity trade-offs across four nodes. Rightward moves reduce cost and increase causal training parallelism, but each step provably removes function families available to earlier nodes.}
\label{tab:tradeoff-four-nodes}
\end{table*}

\subsection{Two directions for attention: efficiency and expressivity}

\paragraph{Efficiency direction.}
This line preserves the expectation semantics of the read and reduces cost via factorization, sparsity/low rank, fused kernels, or state-collapsed streaming mechanisms. It extends the practicable reach of the existing scaling law by making the same \(f(\text{size})\) cheaper to realize at longer contexts or larger batch sizes. However, it inherits the limitations of token-separable expectation reads and, for fixed-state designs, the consequences of state collapse. 

\paragraph{Expressivity direction.}
This line retains lossless storage and upgrades the read so that the KV cache functions as a dynamic selection database capable of per-channel indexing. From a classic data-structure viewpoint, a per-head convex average cannot implement even basic two-dimensional array indexing in one step; FEM remedies this gap by converting the score-induced prior into a value-aware posterior that can concentrate independently per channel, all at the prior’s asymptotic cost. In scaling-law terms, the goal is to raise ability at fixed size, effectively lifting or reshaping the function \(f\), rather than only moving more economically along it. 

\paragraph{Directional summary.}
Starting from Node (2), efficiency work tends to move further rightward, pushing streaming and fixed-state implementations while accepting additional expressivity losses. Our study moves leftward from Node (2) toward Node (1): we keep the prior’s time complexity and masking semantics but restore per-channel selection at read time, thereby turning the lossless memory storage into enhanced algorithmic expressiveness and general ability under the same parameter budget. This complementary direction targets the mechanism of the scaling function \(f\) itself, by increasing ability at fixed size rather than merely reducing the cost of reaching the same ability.

\end{revised}

\begin{revised}

\section{Additional Runtime Efficiency Analysis}\label{app:efficiency}

We replace the attention layer in a GPT-2 style Transformer (hidden size 768, 12 heads, 12 layers) with various alternatives and evaluate all models at sequence length 1024. We experiment with three prior types within our Free Energy Mixer (FEM): softmax attention (SM), gated linear attention (GLA), and Mamba. To isolate the overhead of the free-energy branch, we report results for the branch alone (denoted +L,T,G) as well as for the full model including the convolutional path (+C,L,T,G), highlighting the impact of these components on computational efficiency.

In FEM, the implementations of softmax attention, gated linear attention, and Mamba are taken from the \texttt{FlashAttention} (\texttt{flash\_attn}), \texttt{Flash Linear Attention} (\texttt{fla}), and \texttt{Mamba} (\texttt{mamba\_ssm}) libraries, respectively (see our code repository for details). In addition, we compare against recent baselines: RWKV7, linear attention (LinAttn), HGRN2, PaTH Attention, and Gated Slot Attention, whose implementations are all taken from \texttt{Flash Linear Attention} (\texttt{fla}). All runs use a single NVIDIA L40S GPU, batch size 8, and FP32 precision. We report the mean latency over 500 steps following 50 warm-up steps. Fwd denotes full-sequence inference without KV or hidden-state caches. FLOPs are estimated using the PyTorch profiler on the same configuration: FwdFLOPs measures the total number of floating-point operations in a single forward pass, TrainFLOPs in a single training step (forward + backward), and the per-token metrics are obtained by dividing by the number of tokens in the batch.

As shown in the table below, even without optimally fusing the operations in the free-energy branch with the mixing backend kernels, the additional cost introduced by the +L,T,G branch remains reasonable. Moreover, even with the full structure (+C,L,T,G), FEM with softmax attention and Mamba backends still achieves upper-mid performance among recent baselines. Thus, even if we treat further kernel- and system-level optimizations as future work, the current FEM-SM implementation already offers a computational advantage over PaTH Attention, a recent computation-engineered softmax-attention-based architecture.

\begin{table*}[t]
\centering
\caption{Latency, throughput, and peak GPU memory on GPT-2 (768/12/12) setting, sequence length $1024$, batch size $8$, FP32 on a single L40S GPU. Fwd = full-sequence inference.}
\setlength{\tabcolsep}{1pt}
\label{tab:gpt2_latency}
\resizebox{\textwidth}{!}{
\begin{tabular}{lcccccccccc}
\hline
Model & 
Latency & Latency & Throughput & Throughput & MemPeak & MemPeak & Fwd & Train & Fwd & Train \\
& Fwd (s) & Train (s) & Fwd (tok/s) & Train (tok/s) & Fwd (GB) & Train (GB) & FLOPs & FLOPs & FLOPs/tok & FLOPs/tok \\

\hline
RWKV7 & 0.0832 & 0.2367 & 98417.56 & 34610.31 & 11.08 & 11.48 & 1.47T & 4.41T & 179.35M & 538.09M \\
LinAttn & 0.0652 & 0.1368 & 125587.32 & 59881.77 & 6.01 & 6.16 & 1.39T & 4.18T & 169.94M & 509.75M \\
HGRN2 & 0.0680 & 0.1481 & 120405.72 & 55326.95 & 6.66 & 6.94 & 1.39T & 4.18T & 169.89M & 509.65M \\
PaTHAttention & 0.0693 & 0.2098 & 118161.31 & 39051.39 & 5.62 & 5.82 & 1.40T & 4.21T & 171.29M & 513.84M \\
GatedSlotAttention & 0.0765 & 0.1901 & 107069.10 & 43082.50 & 7.72 & 8.08 & 1.51T & 4.52T & 184.04M & 552.11M \\
SM (Naïve) & 0.1306 & 0.3334 & 62740.89 & 24574.18 & 9.61 & 10.34 & 1.70T & 5.11T & 207.84M & 623.41M \\
SM (Flash) & 0.0649 & 0.1471 & 126304.94 & 55697.16 & 4.89 & 4.99 & 1.39T & 4.18T & 169.89M & 509.65M \\
SM (+L,T,G) & 0.0693 & 0.1615 & 118154.14 & 50733.90 & 5.82 & 5.92 & 1.39T & 4.18T & 169.92M & 509.75M \\
SM (+C,L,T,G) & 0.0732 & 0.1853 & 111955.25 & 44217.70 & 7.67 & 7.77 & 1.40T & 4.21T & 171.43M & 514.19M \\
GLA & 0.0824 & 0.2213 & 99389.45 & 37016.71 & 9.95 & 10.17 & 1.70T & 5.11T & 208.03M & 623.90M \\
GLA (+L,T,G) & 0.0875 & 0.2344 & 93674.03 & 34946.88 & 10.89 & 11.08 & 1.70T & 5.11T & 208.07M & 624.02M \\
GLA (+C,L,T,G) & 0.0934 & 0.2566 & 87675.64 & 31924.23 & 12.74 & 12.93 & 1.72T & 5.15T & 209.57M & 628.46M \\
Mamba & 0.0606 & 0.1249 & 135144.49 & 65603.32 & 4.60 & 4.70 & 1.28T & 3.84T & 156.33M & 468.97M \\
Mamba (+L,T,G) & 0.0586 & 0.1438 & 139778.42 & 56978.94 & 6.77 & 6.87 & 1.22T & 3.67T & 149.29M & 447.83M \\
Mamba (+C,L,T,G) & 0.0647 & 0.1831 & 126581.53 & 44743.67 & 7.97 & 8.07 & 1.23T & 3.70T & 150.44M & 451.25M \\
\hline
\end{tabular}
}
\end{table*}

\end{revised}

\begin{revised}

\section{Toy problem: single-layer channel-wise argmax}
\label{app:toy-argmax}

We construct a minimal synthetic task to illustrate the inability of a single convex read to perform channel-wise selection and to contrast it with FEM.

\paragraph{Data.}
Each sample contains a value matrix $V \in \mathbb{R}^{T \times D}$.  
For every channel $j$, a winner index $a_j$ is drawn uniformly from $\{1,\dots,T\}$ and we set
\[
V_{a_j,j} = \Delta + \varepsilon_j,\qquad  
V_{i\neq a_j,j} \sim \mathcal{N}(0,\sigma^2),
\]
with margin $\Delta{=}1$ and std level $\sigma{=}0.05$.  
The target output is the channel-wise maximum
\[
y^\star_j = \max_{1\le i\le T} V_{i,j}.
\]
Since different channels typically select different winners, $y^\star$ almost always lies outside the convex hull of $\{V_{i,\cdot}\}$ and cannot be produced by a single convex mixture. We use $T{=}128$, $D{=}512$, number of heads $H=4$, with $200{,}000$ training and $2{,}000$ validation examples.

\paragraph{Models and training.}
We compare two single-layer value mixers that map input $V \in \mathbb{R}^{T\times D}$ to an output in $\mathbb{R}^D$ taken from the last position directly after the value mixing.
1) Softmax attention : a causal self-attention selection producing multi-head convex combination of the value vectors above. 
2) FEM: a single FEM layer with the same softmax prior and a free-energy read of the value vectors. We disabled the outer gating and low-rank convolution modules of FEM for more fair comparison.  
Both models are trained with mean-squared error, AdamW, batch size~64, learning rate~$10^{-2}$, for $2{,}000$ steps.

\paragraph{Metrics.}
We report MSE and a per-channel index accuracy measuring whether the correct winner index is recovered.  
Given prediction $y \in \mathbb{R}^D$, the predicted winner for channel $j$ is
\[
\hat a_j = \arg\min_{1\le i\le T} (V_{i,j} - y_j)^2.
\]
Index accuracy is the fraction of channels with $\hat a_j = a_j$, averaged over the validation set. Random guessing yields accuracy $1/T$ (about $0.8\%$ for $T=128$).

\paragraph{Results.}
FEM quickly learns the per-channel argmax mapping, while a single softmax attention selection does not.
FEM’s validation MSE decreases steadily and its index accuracy rises from chance level to essentially $100\%$ for $T{=}128, D{=}512$, recovering the correct winner in every channel.
The softmax attention baseline shows almost no improvement: its MSE remains near the initial value and its index accuracy stays around $1\!-\!2\%$, close to the random baseline $1/T$.

\begin{figure}[t]
    \centering
    \includegraphics[width=0.9\linewidth]{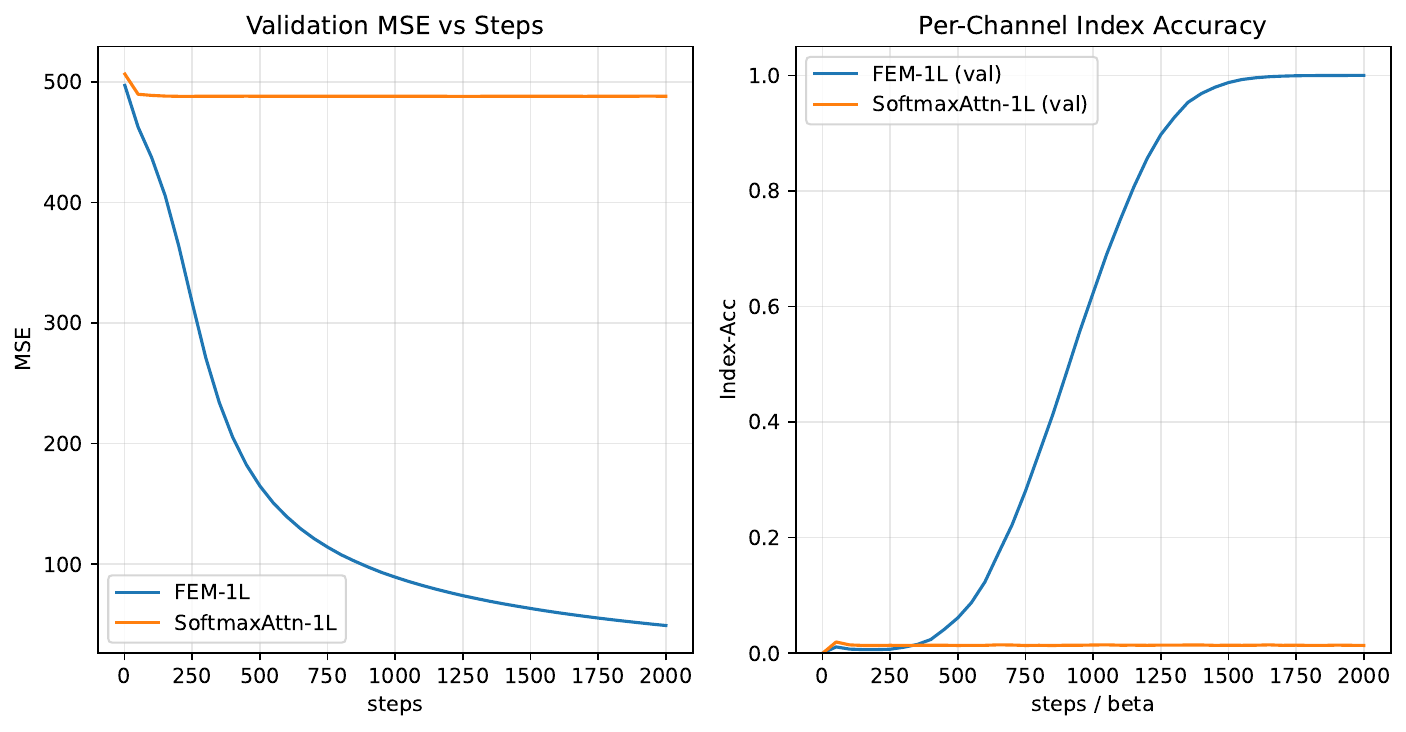}
    \caption{\textbf{Single-layer toy per-channel argmax task.}
    Left: validation MSE over training steps.  
    Right: per-channel index accuracy.  
    FEM rapidly fits the channel-wise argmax, while a softmax attention layer stays near chance level, reflecting the limitation of convex mixing.}
    \label{fig:toy-argmax}
\end{figure}

\end{revised}

\end{document}

%% file: math_commands.tex
\usepackage{amsmath,amsfonts,bm}

\def\eqref#1{equation~\ref{#1}}

\def\1{\bm{1}}

\DeclareMathAlphabet{\mathsfit}{\encodingdefault}{\sfdefault}{m}{sl}
\SetMathAlphabet{\mathsfit}{bold}{\encodingdefault}{\sfdefault}{bx}{n}

\DeclareMathOperator*{\argmax}{arg\,max}
\DeclareMathOperator*{\argmin}{arg\,min}